\documentclass{article}

\usepackage{dilab_arxiv}

\usepackage{amsmath,amsfonts,bm}
\usepackage{amssymb,amsthm}

\def\eqref#1{equation~\ref{#1}}

\def\1{\bm{1}}

\def\vu{{\bm{u}}}

\def\mJ{{\bm{J}}}

\def\mR{{\bm{R}}}

\DeclareMathAlphabet{\mathsfit}{\encodingdefault}{\sfdefault}{m}{sl}
\SetMathAlphabet{\mathsfit}{bold}{\encodingdefault}{\sfdefault}{bx}{n}

\usepackage[capitalize,noabbrev]{cleveref}

\usepackage{hyperref}
\usepackage{url}
\newtheorem{theorem}{Theorem}[section]
\newtheorem{lemma}[theorem]{Lemma}

\newtheorem{corollary}[theorem]{Corollary}
\newtheorem{definition}[theorem]{Definition}
\newtheorem{assumption}[theorem]{Assumption}

\usepackage{amsmath,amsfonts,amssymb}
\usepackage{mathtools}
\usepackage{algorithm}
\usepackage{multirow}
\usepackage{algpseudocode}
\usepackage{enumerate}
\usepackage{array}

\title{Sharp Convergence and Sampling Trade-offs for Riemannian Diffusion under Nonnegative Ricci Curvature}
\runningtitle{Riemannian Diffusion under Nonnegative Ricci Curvature}
\date{arXiv preprint, \today}

\paperlogo{\includegraphics[height=1.5cm]{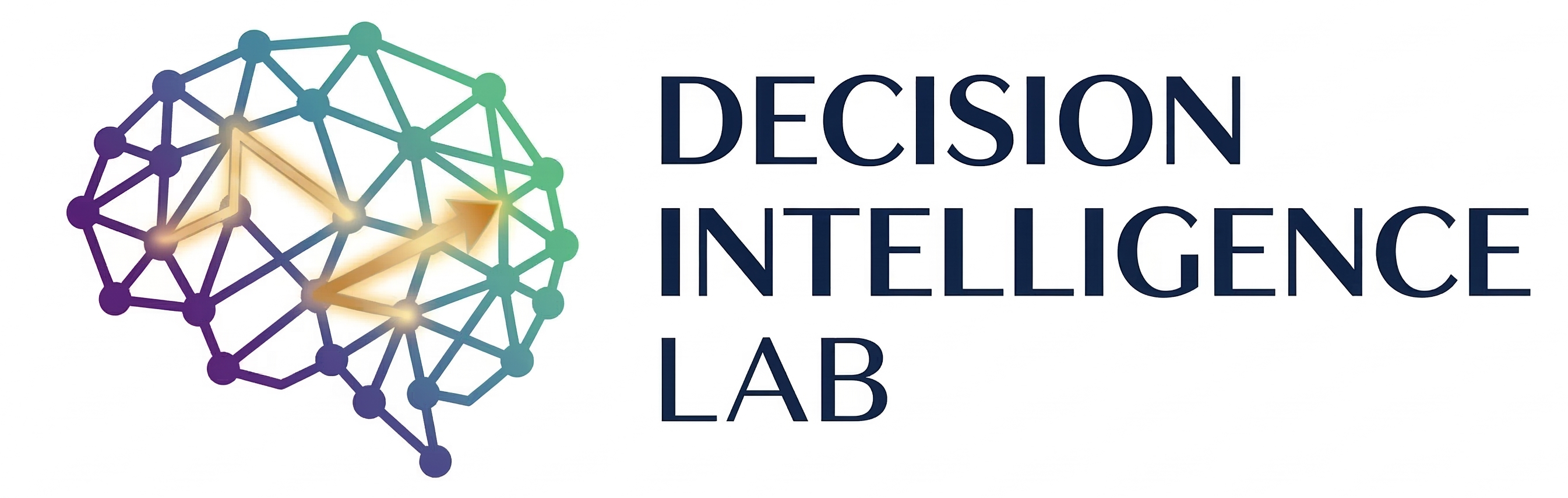}}

\author{
  Yuhao Liu$^{1}$ and Longbo Huang$^{1\,\text{\faEnvelope}}$
  \\[0.3em]\normalfont
  $^1$Institute for Interdisciplinary Information Sciences, Tsinghua University
  \\
  \quad \text{\faEnvelope}\ Correspondence: longbohuang@tsinghua.edu.cn
}

\begin{document}

\maketitle
\thispagestyle{fancy}

\begin{abstract}
Diffusion models have emerged as state-of-the-art generative models, with recent extensions from Euclidean spaces to Riemannian manifolds. However, existing convergence guarantees for Riemannian diffusion models typically require $\tilde{O}(\mathrm{poly}(d,T)/\epsilon^2)$ score evaluations, with potentially unfavorable dependence on the dimension. In this work, we develop a general framework that separates score discretization from Brownian-motion simulation and allows multiple geodesic random-walk steps per score evaluation. Under nonnegative Ricci curvature assumption and an exact Brownian-motion simulation oracle, we show that $\tilde{O}(d/\epsilon^2)$ score evaluations suffice to achieve an $\epsilon^2$ KL divergence from the target distribution, matching the existing convergence rate of Euclidean diffusion models. We further show that $\tilde{O}(d^4T/\epsilon^2)$ geodesic random-walk steps suffice to approximate the required drifted Brownian motion to $\epsilon$ total variation error. Combining these results yields a sampling scheme with $\tilde{O}(d/\epsilon^2)$ score evaluations and $\tilde{O}(d^4T/\epsilon^2)$ geodesic random-walk steps, motivating multiple random-walk steps between consecutive score evaluations. Our results provide a sharper characterization of the convergence and sampling complexity of Riemannian diffusion models.
\end{abstract}

\section{Introduction}

Diffusion models \citep{sohl2015deep} have emerged as some of the most successful generative models across a wide range of applications, including image and video generation. By modeling the generation process as stochastic differential equations (SDEs) \citep{song2021scorebased}, diffusion model can recover data from pure noise. In particular, the gradient of the log-density, commonly referred to as the \emph{score function}, naturally arises in the generation-time dynamics.

Most existing applications of diffusion models focus on Euclidean data. However, in many scientific and engineering applications, such as protein modeling and robotics, the underlying data have non-Euclidean structure that can be naturally represented by Riemannian manifolds. This has motivated several works to extend diffusion models from Euclidean spaces to Riemannian manifolds \citep{debortoli2022riemannian,huang2022riemannian,lou2023scaling}. In this setting, the generation are modeled by SDEs on manifolds, with the diffusion component given by Riemannian Brownian motion.

For diffusion models in both Euclidean spaces and Riemannian manifolds, practical sampling requires discretizing the reverse-time process, and the algorithms typically require many discretization steps to achieve a small sampling error. The convergence theory of sampling algorithms for Euclidean diffusion models has been extensively studied \citep{chen2023sampling,chen2023improved,benton2024dlinear,li2025odt}. In contrast, the convergence theory for Riemannian diffusion models remains comparatively underdeveloped. Existing convergence bounds suffer from unfavorable dependence on the dimension $d$ \citep{xu2026polynomial}, motivating a sharper analysis of the score discretization error. 

Moreover, many scientific datasets are supported on manifolds with additional geometric structure, among which \emph{nonnegative Ricci curvature} turns to be particularly important. Examples include climate and earth events modeled on the sphere $\mathbb{S}^2$ \citep{debortoli2022riemannian,huang2022riemannian}, orientations and rotations in robotics modeled on spheres $\mathbb{S}^d$ and special orthonormal groups $\mathrm{SO}(d)$ \citep{max2024riemannian}, and torsion angles in RNA and protein sequences modeled on flat tori $\mathbb{T}^d=(\mathbb{S}^1)^d$ \citep{lou2023scaling,cheng2025consistency}. Although these manifolds have been widely used in generation experiments, existing theoretical analyses typically impose only relatively general geometric assumptions, leaving a substantial gap between the theoretical guarantees and the settings encountered in practice.

More importantly, unlike in Euclidean space, Riemannian Brownian motion generally cannot be simulated exactly, introducing an additional source of approximation error. Existing algorithms \citep{debortoli2022riemannian,xu2026polynomial} typically perform one Brownian-motion simulation step for each score evaluation. However, as we show in our analysis, the error arising from Brownian-motion simulation may scale differently from the score discretization error. This suggests that restricting the algorithm to a single Brownian-motion simulation step per score evaluation may not provide the most efficient allocation of computational resources. In particular, when score evaluations are substantially more expensive than Brownian-motion simulation steps, it may be preferable to perform multiple Brownian-motion simulation steps between consecutive score evaluations.

In this paper, we develop a general framework for analyzing Riemannian diffusion samplers that separates the errors arising from score discretization and Brownian-motion simulation. This framework allows us to analyze multiple Brownian-motion simulation steps per score evaluation and thereby characterize the trade-off between these two sources of error. Under nonnegative Ricci curvature assumption, we obtain a sharper bound on the number of score evaluations required for accurate sampling and provide an explicit analysis of approximating Riemannian Brownian motion using geodesic random walks. Our results reveal that the number of expensive score evaluations can be substantially reduced by allocating additional, comparatively inexpensive computation to Brownian-motion simulation, providing theoretical guidance for accelerating sampling.

\vspace{-0.1cm}

\paragraph{Our Contributions.}

We summarize our contributions as follows.

\vspace{-0.1cm}

\begin{itemize}

\item We develop a general sampling framework (Section \ref{sec:framework}) that is sufficiently flexible to accommodate a broader class of sampling algorithms. The framework explicitly decomposes the sampling error into two components: i) the error arising from discretizing the reverse SDE, and ii) the error arising from approximating Brownian motion.

\item Under our general framework, we show that, when exact simulation of Riemannian Brownian motion is available, $O\left(\epsilon^{-2}d\log^2(T/\delta)\right)$ score evaluations suffice to achieve an $\epsilon^2$ KL divergence from the target distribution on Riemannian manifolds with nonnegative Ricci curvature (Theorem \ref{thm:sde}). This result substantially improves the existing convergence guarantees for Riemannian diffusion models. Moreover, it matches, up to logarithmic factors, the standard $\tilde O(d/\epsilon^2)$ complexity when degenerating to the Euclidean case. In addition, we also improves the existing convergence rate without Ricci curvature assumption.

\item We further analyze the approximation of Riemannian Brownian motion using geodesic random walks. We show that $O\left(\epsilon^{-2}d^4T\log(T/\delta)\right)$ geodesic random-walk steps suffice to achieve an $\epsilon$ total variation error (Theorem \ref{thm:bm}).

\item Finally, we combine the complexities for reverse-SDE discretization and Brownian-motion simulation to characterize the trade-off between score evaluations and Brownian-motion simulation steps. In particular, our analysis yields a theoretically preferable strategy that performs multiple Brownian-motion simulation steps per score evaluation (Corollary \ref{cor:num-of-steps}).

\end{itemize}

\renewcommand{\thempfootnote}{\fnsymbol{mpfootnote}}
\begin{table}[htbp]
\begin{minipage}{\textwidth}
    \centering
    \setlength{\extrarowheight}{1pt}
    \renewcommand{\arraystretch}{1.2}
    \begin{tabular}{>{\raggedright\arraybackslash}m{2.0cm}|c|c|c|c}
         \hline
         & Space & Target Error & \# Score Evaluations & \# GRW Steps\\
         \hline
         \citet{chen2023improved} & Euclidean & $\mathrm{KL}\le\epsilon^2$ & $\tilde{O}(\epsilon^{-2}d^2)$ & -\\
         \hline
         \citet{benton2024dlinear} & Euclidean & $\mathrm{KL}\le\epsilon^2$ & $\tilde{O}(\epsilon^{-2}d)$ & - \\
         \hline
         \citet{li2025odt} & Euclidean & $\mathrm{TV}\le\epsilon$ & $\tilde{O}(\epsilon^{-1}d)$ & - \\
         \hline
         \citet{xu2026polynomial} & Manifold & $\mathrm{TV}\le\epsilon$ & $\tilde{O}\left(\frac{\mathrm{poly}(\delta^{-1},T,d)}{\epsilon^2}\right)$ \footnote{This complexity bound is at least $\tilde{O}(\epsilon^{-2}d^6T^2\delta^{-3})$, which comes from their Lemma 2. They did not show their explicit polynomial factor in the Brownian motion simulation error.} & $\tilde{O}\left(\frac{\mathrm{poly}(\delta^{-1},T,d)}{\epsilon^2}\right)$ \\
         \hline
         \textbf{Our Results (Ric $\ge0$)} & \multirow{2}{*}{\vspace{-0.45cm}Manifold} & \multirow[c]{2}{*}{\vspace{-0.45cm}$\mathrm{TV}\le\epsilon$} & $\tilde{O}\left(\epsilon^{-2}d\right)$ & \multirow{2}{*}{\vspace{-0.45cm}$\tilde{O}\left(\epsilon^{-2}d^4T\right)$} \\
         \cline{1-1}\cline{4-4}
         \textbf{Our Results (General)} &  &  & $\tilde{O}\left(\epsilon^{-2}d^2T^2\right)$ & \\
         \hline
    \end{tabular}
    \caption{Comparison with previous analyses for diffusion models.}
    \label{tab:comparison}
\end{minipage}
\end{table}


\vspace{-0.2cm}

\section{Related Works}

\paragraph{Convergence Analysis for Euclidean Diffusion Models.}

\citet{chen2023sampling} developed a standard Girsanov-based framework for analyzing the sampling complexity of SDE-based generative models, including DDPMs \citep{ho2020ddpm}. Their analysis was subsequently refined by \citet{chen2023improved,benton2024dlinear}, who established sharper KL convergence bounds under relatively mild assumptions on the score functions and data distributions. In particular, \citet{benton2024dlinear} showed that $\tilde{O}(\epsilon^{-2}d)$ score evaluations suffice to achieve an $\epsilon^2$-KL error. For convergence in total variation (TV) distance, early works obtained guarantees by combining KL bounds with Pinsker's inequality. More recently, \citet{li2025odt} developed a sharper analysis that reduces the complexity of achieving an $\epsilon$-TV error to $\tilde{O}(\epsilon^{-1}d)$. In parallel, several works have focused on improving the sampling algorithms themselves to further reduce the computational complexity \citep{li2024fasternonasymptoticconvergencediffusionbased,li2024acceleratingconvergencescorebaseddiffusion,liang2024broadeningtargetdistributionsaccelerated,jain2026sharp}.

\paragraph{Riemannian Diffusion Models.}

\citet{debortoli2022riemannian} extended diffusion models from Euclidean spaces to Riemannian manifolds by formulating the forward and reverse processes as stochastic differential equations on manifolds. Independently, \citet{huang2022riemannian} developed a related framework from a maximum-likelihood perspective. \citet{lou2023scaling} further improved the computation and sampling of heat kernels on Riemannian symmetric spaces, enabling Riemannian diffusion models to be applied to high-dimensional real-world tasks.

Several works have studied the design and analysis of sampling methods on Riemannian manifolds \citep{cheng2022efficient,guan2025accuracy}. Regarding the convergence theory of Riemannian diffusion models, \citet{debortoli2022riemannian} established Wasserstein error bounds for their sampling method. More recently, \citet{xu2026polynomial} established the first polynomial convergence rate for Riemannian diffusion models. However, their complexity bound has a high-degree polynomial dependence on the dimension $d$, leaving substantial room for improvement. In this work, we provide a sharper convergence analysis and show that, under nonnegative Ricci curvature, the number of score evaluations required for sampling can be reduced to $O\left(\epsilon^{-2}d\log^2(T/\delta)\right)$, which matches the standard $\tilde{O}(d/\epsilon^2)$ score-evaluation complexity of Euclidean diffusion models up to logarithmic factors.
\section{Problem Formulation: Riemannian Diffusion Model}

In this section, we briefly introduce the Riemannian diffusion model proposed by \citet{debortoli2022riemannian}, and then formally propose the general sampling framework we are going to analyze. 

\subsection{Diffusion Model in Euclidean Spaces.}

We first briefly recall diffusion models in Euclidean spaces \citep{song2021scorebased,ho2020ddpm}. Let $p_0$ be a data distribution on $\mathbb{R}^d$. We consider the Ornstein-Uhlenbeck noising process 
$$\dif X_t=-X_t\dif t+\sqrt{2}\dif B_t,\quad X_0\sim p_0,$$
where $B_t$ is a standard $d$-dimensional Brownian motion. Denote the marginal density of $X_t$ by $p_t$, then $p_t$ converges to the standard Gaussian distribution $\mathcal{N}(0,I_d)$. Fix a terminal time $T>0$, the time-reversed process $Y_t=X_{T-t}$ satisfies
\begin{equation}
    \dif Y_t=(Y_t+2\nabla\log p_{T-t}(Y_t))\dif t+\sqrt{2}\dif B_t.
    \label{eq:reverse-euclidean}
\end{equation}
Thus, sampling from $p_0$ can be performed by simulating SDE (\ref{eq:reverse-euclidean}), initialized from $\mathcal{N}(0,I_d)$.

In practice, the score $\nabla\log p_{T-t}(\cdot)$ is approximated by a neural network $\hat{s}_t(\cdot)$. Given a time discretization $0=t_0<t_1<\cdots<t_{N-1}<t_N=T-\delta$,where $\delta>0$ is an early-stopping time (see, e.g., \citealt{song2020improved}), Euler-Maruyama discretization of (\ref{eq:reverse-euclidean}) leads to 
$$\hat{Y}_{t_{k+1}}=\hat{Y}_{t_k}+(t_{k+1}-t_k)\left[\hat{Y}_{t_k}+2\hat{s}_{t_k}(\hat{Y}_{t_k})\right]+\sqrt{2(t_{k+1}-t_k)}w_k,\quad w_k\sim\mathcal{N}(0,I_d).$$
Equivalently, the discretized process has a frozen drift on each interval $[t_k,t_{k+1})$:
\begin{equation}
    \dif\hat{Y}_t=(\hat{Y}_{t_k}+2\hat{s}_{t_k}(\hat{Y}_{t_k}))\dif t+\sqrt{2}\dif B_t,\quad t\in[t_k,t_{k+1}),\quad k=0,\dots,N-1.
    \label{eq:reverse-euclidean-discretized}
\end{equation}

The convergence of such sampling schemes has been extensively studied. In particular, \citet{benton2024dlinear} showed that, under minimal assumptions, $\tilde{O}(d/\epsilon^2)$ score evaluations suffice to ensure that the KL divergence between $p_\delta$ and the distribution of $\hat{Y}_{T-\delta}$ is at most $\epsilon^2$. This nearly linear dependence on the dimension will serve as the benchmark for our analysis of Riemannian diffusion models.

\subsection{Diffusion Models on Riemannian Manifolds}

We now introduce the Riemannian generalization of diffusion models \citep{debortoli2022riemannian}. Throughout this paper, we restrict our attention to compact manifolds. Let $\mathcal{M}$ be a connected, $d$-dimensional compact Riemannian manifold with metric $g$. We denote by $\rho(x,y)$ the Riemannian distance between $x,y\in\mathcal{M}$, by $\nabla$ the Riemannian gradient, and by $\Delta$ the Laplace-Beltrami operator. We use $\langle\cdot,\cdot\rangle=\langle\cdot,\cdot\rangle_g$ to denote the Riemannian inner product.

Let $p_0$ be the data distribution on $\mathcal{M}$. To define a diffusion process on $\mathcal{M}$, we replace the Euclidean Brownian motion by Riemannian Brownian motion. We use the following characterization, which is convenient for our analysis.

\begin{definition}[Brownian motion on $\mathcal{M}$]
    Let $(X_t)_{t\ge0}$ be a continuous stochastic process on $\mathcal{M}$, and let $b_t$ be a time-dependent vector field on $\mathcal{M}$. We say that $X_t$ is a Riemannian Brownian motion on $\mathcal{M}$ with drift $b_t$, if, for any smooth function $f:[0,\infty)\times\mathcal{M}\to\mathbb{R}$, the process $M_t^f$ defined by
    $$M_t^f:=f(t,X_t)-f(0,X_0)-\int_0^t\left(\partial_s+\langle b_s(X_s),\nabla\rangle+\frac{1}{2}\Delta\right)f(s,X_s)\dif s$$
    is a continuous local martingale. Equivalently, we say $\dif X_t=b_t(X_t)\dif t+\dif B_t.$
    \label{definition:bm}
\end{definition}

For compact manifolds, we consider the following forward noising process:
\begin{equation}
    \dif X_t=\dif B_t,\quad X_0\sim p_0.
    \label{eq:forward-riemannian}
\end{equation}
For $t>0$, $X_t$ admits a smooth density $p_t(\cdot)$ with respect to the Riemannian volume measure, satisfying the heat equation
$$\partial_tp_t(x)=\frac{1}{2}\Delta p_t(x).$$
Since $\mathcal{M}$ is compact and connected, the operator $-\Delta$ has nonnegative eigenvalues. So, $p_t$ converges to the uniform distribution $u_{\mathcal{M}}$, given by the normalized Riemannian volume measure.

For a terminal time $T>0$, define the reverse process by $Y_t=X_{T-t}$. Under standard regularity conditions, the reverse process satisfies \citep{debortoli2022riemannian}
\begin{equation}
    \dif Y_t=\nabla\log p_{T-t}(Y_t)\dif t+\dif B_t.
    \label{eq:reverse-riemannian}
\end{equation}
Thus, given access to the Riemannian score $\nabla\log p_{T-t}(\cdot)$ and an initial sample approximately distributed according to $u_{\mathcal{M}}$, sampling from $p_0$ reduces to simulating the reverse SDE.

In practice, the score is approximated by a neural network $\hat{s}_t(\cdot)\approx\nabla\log p_{T-t}(\cdot)$.  A common approach to approximate the reverse SDE is based on geodesic random walks, which can be viewed as the Riemannian analogue of Euler-Maruyama. Let $0=t_0<t_1<\cdots<t_N=T-\delta$ be the time discretization, and denote the step sizes by $h_k:=t_{k+1}-t_k$. Starting from $\hat{Y}_0$, the geodesic random walk considered in \citet{debortoli2022riemannian} is given by
\begin{equation}
    \hat{Y}_{t_{k+1}}=\exp_{\hat{Y}_{t_k}}\left(h_k\hat{s}_{t_k}(\hat{Y}_{t_k})+\sqrt{h_k}w_k\right),\quad k=0,\dots,N-1,
    \label{eq:original-grw-algorithm}
\end{equation}
where $\exp_x\mathcal{M}\to\mathcal{M}$ is the exponential map and $w_k$ is a standard Gaussian vector in $T_{\hat{Y}_{t_k}}\mathcal{M}$ with respect to any orthonormal basis. Equivalently, each step first performs an Euler-Maruyama update in the tangent space and then maps the resulting tangent vector to $\mathcal{M}$ through the exponential map.

Unlike in Euclidean space, however, Riemannian Brownian motion generally does not admit an exact finite-step simulation by a Gaussian increment. Therefore, the error of (\ref{eq:original-grw-algorithm}) has two distinct sources: the error from discretizing the reverse SDE and the error from approximating Riemannian Brownian motion. The latter arises from the local approximation $\sqrt{h_k}w_k$ on the tangent space, which does not exactly reproduce a Riemannian Brownian increment over a finite time interval. Existing convergence analyses typically treat these two sources of error together, resulting in bounds with unfavorable dependence on the dimension $d$. For example, \citet{xu2026polynomial} showed that, under suitable assumptions on the manifold and score functions, we require $O\left(\mathrm{poly}(d,T,\delta^{-1})/\epsilon^2\right)$ steps to ensure that the total variation (TV) distance between the distributions of $X_\delta$ and $\hat{Y}_{T-\delta}$ is at most $\epsilon$.

\subsection{A Sampling Framework: Brownian Motion Simulation Oracle}

\label{sec:framework}

As discussed in the previous section, sampling from the reverse SDE on a Riemannian manifold involves two distinct difficulties: discretizing the reverse SDE, and simulating Riemannian Brownian motion. To separate these two sources of difficulty, we introduce a general sampling framework based on a Brownian-motion simulation oracle (i.e., a black-box).

Consider a time discretization $0=t_0<t_1<\cdots<t_{N-1}<t_N=T-\delta$, and let $h_k:=t_{k+1}-t_k$ denote the length of the $k$-th time interval. We assume access to a general Brownian-motion simulation oracle $\mathcal{O}$. Given a time-independent vector field $b$ on $\mathcal{M}$, an initial point $x\in\mathcal{M}$, and a time horizon $h>0$, the oracle $\mathcal{O}(b,x,h)$ returns a random point whose distribution approximates the endpoint at time $h$ of the Riemannian SDE
\begin{equation}
    \dif X_t=b(X_t)\dif t+\dif B_t,\quad X_0=x,\quad t\in[0,h].
    \label{eq:sde-oracle}
\end{equation}
The oracle is allowed to be implemented using an arbitrary simulation procedure; in particular, it need not be restricted to a single geodesic random-walk step.

The high-level idea of our framework is simple. At each step $k$, we first evaluate the score estimator at the current state to obtain an instantaneous drift $s_k$. We then extend this drift to a vector field $\tilde{s}_k$ on $\mathcal{M}$, and use the Brownian-motion simulation oracle to evolve the process over the interval $[t_k,t_{k+1}]$. Formally, given the initial state $\hat{Y}_{0}$, our sampling framework is summarized in Algorithm \ref{alg:framework}.
\begin{algorithm}[htbp]
\caption{Sampling Framework with Simulation Oracle.}
\label{alg:framework}
\begin{algorithmic}[1]

\Require $\{t_k\}_{k=0}^N$, $\mathcal{O}(\cdot,\cdot,\cdot)$, $\hat{s}_{t_k}(\cdot)$, $\hat{Y}_{0}$.

\For{$k=0,1,\dots,N-1$}
    \State $s_k\gets\hat{s}_{t_k}(\hat{Y}_{t_k})$, $h_k\gets t_{k+1}-t_k$.
    \State Define a vector field $\tilde{s}_k(\cdot)$ using $\hat{Y}_{t_k}$ and $s_k$, satisfying $\tilde{s}(\hat{Y}_{t_k})=s_k$.
    \State $\hat{Y}_{t_{k+1}}\gets\mathcal{O}(\tilde{s}_k,\hat{Y}_{t_k},h_k)$.
\EndFor

\end{algorithmic}
\end{algorithm}

Suppose first that the oracle provides an exact simulation of SDE (\ref{eq:sde-oracle}). Then the resulting sampling process can be represented by the SDE
\begin{equation}
    \dif \hat{Y}_t=\tilde{s}_k(\hat{Y}_t)\dif t+\dif B_t,\quad t\in[t_k,t_{k+1}).
    \label{eq:reverse-riemannian-estimate}
\end{equation}
Thus, when the Brownian-motion simulation is exact, the only approximation relative to the true reverse process (\ref{eq:reverse-riemannian}) comes from replacing the time-dependent score $\nabla\log p_t(\cdot)$ by the frozen vector field $\tilde{s}_k(\cdot)$ on each time interval. Consequently, the distributional error between $\hat{Y}_{T-\delta}$ and the target $X_\delta=Y_{T-\delta}$ can be analyzed entirely through a comparison between the two SDEs (\ref{eq:reverse-riemannian}) and (\ref{eq:reverse-riemannian-estimate}).

This formulation is closely analogous to Euler-Maruyama discretization in Euclidean space. The key difference is that, in $\mathbb{R}^d$, an instantaneous drift $s_k$ is naturally identified with a constant vector field, so one can simply set $\tilde{s}_k(x)=s_k$. On a Riemannian manifold, however, $s_k\in T_{\hat{Y}_{t_k}}\mathcal{M}$ belongs to a tangent space that depends on the current point, and therefore cannot be directly evaluated at other points on $\mathcal{M}$. Constructing a suitable extension $\tilde{s}_k$ is therefore an additional geometric component of the discretization. We provide an appropriate construction and analyze its resulting error in Section \ref{sec:sde}.

For a general, potentially inexact simulation oracle, the total distributional error can be decomposed into two components. The first is the error incurred by the frozen-drift approximation assuming that the Brownian-motion simulation is exact; the second is the additional error introduced by the simulation oracle itself. This decomposition allows us to analyze score discretization and Brownian-motion simulation separately. A similar decomposition has also been used implicitly in the analysis of \citet{xu2026polynomial}, but our framework makes the two sources of error explicit and allows the Brownian-motion simulation procedure to be designed independently of the score discretization.

As a special case, suppose that the oracle is defined by $\mathcal{O}(b,x,h)=\exp_{x}\left(h\cdot b(x)+\sqrt{h}w\right)$ where $w\in T_x\mathcal{M}$ is an independent Gaussian vector. Then Algorithm \ref{alg:framework} reduces exactly to the geodesic random walk introduced in the previous section, regardless of how the vector field $\tilde{s}_k(\cdot)$ is defined away from $\hat{Y}_{t_k}$. More generally, however, our framework permits arbitrarily sophisticated Brownian-motion simulation procedures while requiring only a single score evaluation per interval. This distinction is important because score evaluations, typically implemented by evaluating a neural network, can be substantially more expensive than Brownian-motion simulation steps. In particular, in Section \ref{sec:bm}, we instantiate the oracle $\mathcal{O}$ by performing $n_{\mathrm{bm}}$ smaller geodesic random-walk steps within each interval $[t_k,t_{k+1}]$. Thus, we can increase the accuracy of Brownian-motion simulation without increasing the number of score evaluations. This provides the basic mechanism underlying our computational trade-off between score evaluations and Brownian-motion simulation steps.

In the following sections, we first analyze the idealized setting in which the Brownian-motion simulation oracle is exact. Under nonnegative Ricci curvature, we show in Section \ref{sec:sde} that nearly linear dependence on the dimension $d$ is sufficient for the number of score evaluations. We then analyze the additional error introduced by approximating the oracle using $n_{\mathrm{bm}}$ geodesic random-walk steps in Section \ref{sec:bm}.
\section{Sharp Convergence Rate with Exact Simulation Oracle under Nonnegative Ricci Curvature}

\label{sec:sde}

In this section, we analyze Algorithm \ref{alg:framework} on manifolds with nonnegative Ricci curvature under the assumption that the simulation oracle $\mathcal{O}$ exactly solves the Riemannian SDE (\ref{eq:sde-oracle}). Our goal is therefore to compare the true reverse process (\ref{eq:reverse-riemannian}) with the estimated process (\ref{eq:reverse-riemannian-estimate}). We adopt the construction of \citet{xu2026polynomial}. Let $\eta:\mathbb{R}\to\mathbb{R}$ be a $C^2$ cut-off function, satisfying
$$\eta(r)=\begin{cases}
    1,&|r|\le\frac{1}{2},\\
    0,&|r|\ge 1,
\end{cases}\quad|\eta'(r)|\le C,\quad|\eta''(r)|\le C,$$
where $C>0$ is a universal constant. For a scale parameter $\omega>0$, we define $\eta_{\omega}:r\mapsto \eta(r/\omega)$. Thus, $\eta_\omega$ is supported on $[-\omega,\omega]$. Fix a discretization time $t_k$, let $y:=\hat{Y}_{t_k}=y$, $s_k:=\hat{s}_{t_k}(y)$. With these information, we then define the vector field in Algorithm \ref{alg:framework} by
\begin{equation}
    \tilde{s}_k(x):=\eta_{\omega}(\rho(x,y))\cdot\dif[\exp_{y}\left(\log_yx\right)](s_k),\quad x\in\mathcal{M}.
    \label{eq:extended-score}
\end{equation}
where $\log_y x$ denotes the inverse exponential map on the normal neighborhood of $y$. Here, we canonically identify $T_{\log_yx}(T_y\mathcal{M})$ with $T_y\mathcal{M}$, since $T_y\mathcal{M}$ is a vector space. The construction has a simple geometric interpretation. In normal coordinates centered at $y$, the vector $s_k$ is kept constant, and is then mapped back to the manifold through the differential of the exponential map. The cut-off function localizes this construction to the geodesic ball $B_y(\omega)$, while also ensuring that $\tilde{s}_k$ is globally well-defined and smooth. In particular, $\tilde{s}_k(y)=s_k$, and within the smaller ball $B_y(\omega/2)$, the coefficients of $\tilde{s}_k$ in the normal coordinate representation $x\mapsto\log_yx$ are constant.

We take the scale parameter as $\omega:=c/\sqrt{d}$ in the construction (\ref{eq:extended-score}) for a sufficiently small universal constant $c>0$. Under the geometric assumptions imposed later, this choice can ensure that the relevant normal neighborhood is contained within the injectivity radius, and the Riemannian metric $g$ is sufficiently close to the identity matrix in the local coordinate representation. For manifolds with an explicit exponential map, such as the sphere, the vector field (\ref{eq:extended-score}) can be computed explicitly.

Now, we introduce the assumptions required for our theory.

\begin{assumption}
    The compact manifold $\mathcal{M}$ satisfies the following geometric conditions.
    \begin{itemize}
        \item \textbf{Bounded diameter.} $\mathrm{diam}(\mathcal{M})=\sup\{\rho(x,y):x,y\in\mathcal{M}\}\le D$.
        \item \textbf{Bounded injectivity radius.} The injective radius $i(x)$ satisfies $i(x)\ge r_{i}>0$.
        \item \textbf{Bounded curvature.} The Riemannian curvature tensor $R$ satisfies $\max\{\Vert R\Vert,\Vert\nabla R\Vert,$ $\Vert\nabla^2 R\Vert\}\le C_{\mathrm{Rm}}$, where the norms are operator norms.
        \item \textbf{Nonnegative Ricci curvature.} For every $x\in\mathcal{M}$ and $X\in T_x\mathcal{M}$, $\mathrm{Ric}(X,X)\ge0$.
    \end{itemize}
    \label{assumption:manifold}
\end{assumption}

The first three conditions in Assumption \ref{assumption:manifold} are the same geometric assumptions considered by \citet{xu2026polynomial}. For any fixed compact smooth Riemannian manifold, these quantities are finite and the injectivity radius is strictly positive. In our analysis, we additionally require the corresponding constants $D$, $r_i$, and $C_{\mathrm{Rm}}$ to be independent of the dimension $d$, since we are interested in the dependence of the convergence rate on $d$. This is satisfied, for example, on the unit $d$-sphere $\mathbb{S}^d$, where we can take $D=\pi,r_i=\pi$ and $C_{\mathrm{Rm}}=1$, which are all independent of $d$.

The \textbf{nonnegative Ricci curvature} assumption is the main additional geometric condition in our analysis. It does not hold for general compact Riemannian manifolds; nevertheless, it is satisfied by many commonly used manifolds, including the unit sphere $\mathbb{S}^d$ and the special orthogonal group $\mathrm{SO}(n)$ equipped with its standard bi-invariant metric. We discuss how the analysis can be extended to manifolds with general Ricci curvature in Appendix \ref{appendix:ricci}. As we show later, nonnegative Ricci curvature plays a crucial role in obtaining the nearly $d$-linear KL convergence bound.

\begin{assumption}[Score Error]
    $\sum_{k=0}^{N-1}h_k\mathbb{E}_{x\sim p_{T-t_k}}\left[\Vert\nabla\log p_{T-t_k}(x)-\hat{s}_{t_k}(x)\Vert^2\right]\le\epsilon_{\mathrm{score}}^2$.
    \label{assumption:score-estimate}
\end{assumption}

Assumption \ref{assumption:score-estimate} is standard in the theoretical analysis of diffusion models \citep{chen2023sampling} and measures the cumulative mean-squared error of the score estimator along the forward diffusion process. In particular, our analysis only requires an error bound in expectation and does not impose a uniform pointwise bound on the score estimation error.

With Assumptions \ref{assumption:manifold} and \ref{assumption:score-estimate} in place, we are ready to state our main convergence result. Recall that $(p_{t})$ denotes the marginal density of the forward process. Let $(\hat{q}_t)$ denote the marginal density of estimated reverse process $(\hat{Y}_{t})_{t\ge0}$ defined by SDE (\ref{eq:reverse-riemannian-estimate}).

\begin{theorem}
    Fix $T>0$ and $\delta>0$. Let the step sizes be given by $h_k=\min\{h(T-t_k),T-\delta-t_k\}$, where $0<h<\frac{1}{2}$ is a scale parameter. Suppose that the vector field used in Algorithm \ref{alg:framework} is defined according to (\ref{eq:extended-score}). Then, under Assumptions \ref{assumption:manifold} and \ref{assumption:score-estimate},
    $$\begin{aligned}
        \mathrm{TV}(p_{\delta},\hat{q}_{T-\delta})^2&\lesssim\mathrm{KL}(p_{\delta}\Vert\hat{q}_{T-\delta})\\
        &\lesssim\underbrace{\mathrm{KL}(p_{T}\Vert\hat{q}_{0})}_{\textnormal{Initialization error}}+\;\epsilon_{\text{score}}^2+\underbrace{hd\log(T/\delta)+h^2d^4T^2+Ch^3d^7T^3}_{\textnormal{Discretization error}}.
    \end{aligned}$$
    Here, the constants hidden by $\lesssim$ (i.e., $=O(\cdot)$) are independent of the manifold dimension $d$, but may depend on the geometric constants in Assumption \ref{assumption:manifold}.
    \label{thm:sde}
\end{theorem}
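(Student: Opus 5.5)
The plan follows the Girsanov framework of \citet{chen2023sampling} and \citet{benton2024dlinear}, transported to $\mathcal{M}$. The first inequality is Pinsker's. For the second, I compare the path measures of the true reverse process (\ref{eq:reverse-riemannian}) and the frozen-drift process (\ref{eq:reverse-riemannian-estimate}). Both are Brownian motions with drift in the sense of Definition \ref{definition:bm}, and they share the same generator up to the drift. Girsanov's theorem on $\mathcal{M}$ (valid since $\mathcal{M}$ is compact and all drifts are smooth) together with data processing gives
$$\mathrm{KL}(p_\delta\Vert\hat q_{T-\delta})\le\mathrm{KL}(p_T\Vert\hat q_0)+\frac12\sum_{k=0}^{N-1}\int_{t_k}^{t_{k+1}}\mathbb{E}\bigl\Vert\nabla\log p_{T-t}(Y_t)-\tilde s_k(Y_t)\bigr\Vert^2\dif t,$$
where $\tilde s_k$ is built from $Y_{t_k}$ and $\hat s_{t_k}(Y_{t_k})$ along the true process. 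The integrand is then split into three terms using $\Vert a+b+c\Vert^2\le 3(\cdots)$:
\begin{itemize}
\item $\nabla\log p_{T-t}(Y_t)-\nabla\log p_{T-t_k}(Y_{t_k})$ transported to $Y_t$ (the \emph{score-regularity term});
\item the transported true score at $Y_{t_k}$ minus the transported $\hat s_{t_k}(Y_{t_k})$, which is bounded using Assumption \ref{assumption:score-estimate} and gives $\epsilon_{\rm score}^2$;
\item a \emph{geometric term} measuring how far the extension $\tilde s_k$ differs from a genuinely parallel or isometric transport, including the cut-off region $\rho(Y_t,Y_{t_k})\ge\omega/2$.
\end{itemize}
The comparison in the first term is made in normal coordinates at $Y_{t_k}$, so the same frozen-coefficient field serves as the reference for both the first and third terms.

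For the score-regularity term I will follow the martingale argument of \citet{benton2024dlinear}. Set $E_t:=\mathbb{E}\Vert\nabla\log p_{T-t}(Y_t)\Vert^2$. Applying Itô's formula on $\mathcal{M}$ to $\nabla\log p_{T-t}(Y_t)$ requires the Bochner identity
$$\tfrac12\Delta|\nabla f|^2=|\nabla^2 f|^2+\langle\nabla f,\nabla\Delta f\rangle+\mathrm{Ric}(\nabla f,\nabla f).$$
This is where $\mathrm{Ric}\ge0$ enters. Because of it, the quantities $\mathbb{E}\Vert\nabla\log p\Vert^2$ and $\mathbb{E}\Vert\nabla^2\log p\Vert^2$ satisfy the same monotonicity and derivative identities as in $\mathbb{R}^d$, with the Ricci term only helping. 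The result is
$$\int_{t_k}^{t_{k+1}}\mathbb{E}\Vert\nabla\log p_{T-t}(Y_t)-\nabla\log p_{T-t_k}(Y_{t_k})\Vert^2\dif t\lesssim h_k\bigl(E_{t_{k+1}}-E_{t_k}\bigr)+(\text{geometric corrections}).$$
With $h_k\le h(T-t_k)$, summing telescopes to $h\,\mathbb{E}\Vert\nabla\log p_\delta\Vert^2\lesssim hd/\delta$. The exponentially decaying schedule then gives the $hd\log(T/\delta)$ term. The initial bound $\mathbb{E}\Vert\nabla\log p_t\Vert^2\lesssim d/t$ also uses the heat-kernel gradient estimate, which holds under $\mathrm{Ric}\ge0$ via Li--Yau.

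The geometric term is where I expect the main obstacle. Within $B_y(\omega/2)$, with $\omega=c/\sqrt d$, the metric in normal coordinates satisfies $g=I+O(C_{\mathrm{Rm}}\rho^2)$. The Christoffel symbols are $O(\rho)$, and the $d$-dimensional sums in the Laplacian of the coordinate fields cost extra powers of $d$. Comparing the Itô expansion of the frozen field with the true transported score should give errors of order $\Vert s\Vert^2\,\mathbb{E}\rho(Y_t,Y_{t_k})^2\,d^{a}$. Using $\mathbb{E}\rho^2\lesssim d\,h_k$ together with Brownian moment bounds, I expect this to produce $h^2d^4T^2$ after summing over roughly $T/h$ steps, since $h_k\le hT$.

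The cut-off region has probability of order $\mathbb{P}(\rho\ge\omega/2)$. I will control it by a Markov-type moment bound of the form $\mathbb{E}\rho^6/\omega^6$, which yields the $Ch^3d^7T^3$ term. Higher-order curvature terms are controlled by $\Vert\nabla R\Vert$ and $\Vert\nabla^2R\Vert$. The expected difficulty is tracking the $d$-powers carefully. I would first verify the comparison in normal coordinates on $\mathbb{S}^d$ and then generalise via Jacobi field estimates, which require only the curvature bounds in Assumption \ref{assumption:manifold}.
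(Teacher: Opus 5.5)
Your outline follows the same architecture as the paper's proof. The shared pieces are:
\begin{enumerate}
\item Girsanov plus data processing.
\item Splitting off $\epsilon_{\mathrm{score}}^2$ via the frozen normal-coordinate extension at $Y_{t_k}$.
\item It\^o's formula with the Bochner identity.
\item A stopping time at radius $\omega/2$, with a Markov bound for the cut-off region.
\item The identity $\frac{\mathrm{d}}{\mathrm{d}t}E_t=\mathbb{E}\,\mathrm{Ric}(s_t,s_t)+\mathbb{E}\Vert\nabla^2\log p_{T-t}(Y_t)\Vert_{\mathrm{HS}}^2$, with $s_t=\nabla\log p_{T-t}(Y_t)$. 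Its Ricci term is dropped under $\mathrm{Ric}\ge0$, which is what turns the quadratic-variation part into your $h_k(E_{t_{k+1}}-E_{t_k})$.
\end{enumerate}
Invoking Li--Yau is fine and even shortens the paper's moment bound: integrating $-\Delta\log p_t\le d/t$ against $p_t$ gives $\mathbb{E}_{p_t}\Vert\nabla\log p_t\Vert^2\le d/t$ directly.

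\textbf{The leading-term step is wrong as written.} With $h_k\le h(T-t_k)$, the sum $\sum_k h_k(E_{t_{k+1}}-E_{t_k})$ does not telescope to $h\,\mathbb{E}\Vert\nabla\log p_\delta\Vert^2$. Moreover, a bound like $hd/\delta$ (or the $hTd/\delta$ that crude telescoping actually yields) has polynomial $\delta^{-1}$ dependence, not $\log(T/\delta)$. You need summation by parts against the weight $T-t_k$, which uses $E_{t_{k+1}}\ge E_{t_k}$ (Ricci again):
\[
\sum_k (T-t_k)(E_{t_{k+1}}-E_{t_k})=\delta E_{t_N}-TE_{t_0}+\sum_k h_kE_{t_{k+1}}\lesssim d+d\sum_k\frac{h_k}{T-t_{k+1}}\lesssim d\log(T/\delta).
\]
This holds since $\delta E_{t_N}=\delta\,\mathbb{E}\Vert\nabla\log p_\delta(X_\delta)\Vert^2\lesssim d$, $h_k\le 2h(T-t_{k+1})$, and $N\asymp h^{-1}\log(T/\delta)$. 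The paper does exactly this in continuous form, via $(T-t)\mathbb{E}\Vert\nabla^2\log p_{T-t}(Y_t)\Vert_{\mathrm{HS}}^2\le\frac{\mathrm{d}}{\mathrm{d}t}\bigl[(T-t)E_t\bigr]+E_t$.

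\textbf{Your heuristic for the geometric term does not show that it is $O(h^2)$, and as phrased it would not be.} Suppose the Girsanov integrand carries an error of size $\Vert s_{t_k}\Vert^2\,\mathbb{E}\rho^2d^a\sim d^{2+a}h_k/(T-t)$. Integrating over each step and summing gives $hd^{2+a}T$, an $O(h)$ term with super-linear $d$-dependence that would swamp $hd\log(T/\delta)$. Also, there are $\Theta(h^{-1}\log(T/\delta))$ steps, not $T/h$.

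In the paper the extra power of $h$ comes from the structure of the argument: It\^o on $\Vert\nabla\log p_{T-t}(Y_t)-\tilde s^*_t(Y_t)\Vert^2$ up to the exit time, then Young with weight $1/h_k$, then Gr\"onwall. The drift-type terms then carry an extra factor $h_k$. These are:
\begin{enumerate}
\item $\mathrm{Ric}^\#(\nabla\log p)$, which in this cross term has no sign and costs a factor $d$. This is why the Euclidean martingale identity of Benton et al.\ genuinely fails here.
\item $\nabla\tilde s^*\cdot\nabla\log p$.
\item $\Delta\tilde s^*=O(d^{3/2}\Vert s_{t_k}\Vert)$.
\end{enumerate}
Meanwhile $\Vert\nabla\tilde s^*\Vert_{\mathrm{HS}}^2\lesssim d^2\Vert s_{t_k}\Vert^2\rho^2$ vanishes at $Y_{t_k}$ and is integrated over $[t_k,t]$. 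Each piece is therefore $O(h_k^3)$ per step, up to powers of $d$ and $(T-t_k)^{-1}$, which sums to $O(h^2)$ terms such as $h^2d^4T^2$.

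\textbf{The tail bound needs a higher moment.} With the paper's Cauchy--Schwarz split $\sqrt{\mathbb{E}(\cdot)^4}\sqrt{\mathbb{P}(\tau_k<t)}$, a sixth-moment Markov bound gives only $\sqrt{\mathbb{P}}\lesssim h_k^{3/2}d^3$ and a tail term of order $h^{3/2}d^4T^{3/2}$. Reaching $h^3d^7T^3$ requires either the twelfth moment, as in the paper, or the pointwise score bound of Lemma~\ref{lem:score-bound} in place of Cauchy--Schwarz.
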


Theorem \ref{thm:sde} decomposes the distributional error into three components: i) the \emph{initialization error}, which arises from the discrepancy between the initial distribution of the sampling process and the desired initial distribution of the reverse process; ii) the \emph{score estimation error}, which quantifies the accuracy of the estimated score; and iii) the \emph{discretization error}, which arises from freezing the vector field over each time interval $[t_k,t_{k+1}]$. The first two terms do not depend on the step-size parameter $h$, and indicates the error we can achieve as the discretization becomes arbitrarily fine. In contrast, the discretization error vanishes as $h\to0$, and its dependence on  $h$ determines the convergence rate of the sampling scheme. We show the full proof of Theorem \ref{thm:sde} in Appendix \ref{appendix:sde}. 

If we use the step sizes specified in Theorem \ref{thm:sde}, the total number of discretization steps satisfies
$N=\Theta\left(h^{-1}\log(T/\delta)\right)$. Suppose further that $\mathrm{KL}(p_{T}\Vert\hat{q}_{0})+\epsilon_{\text{score}}^2=O(\epsilon^2)$. Then Theorem \ref{thm:sde} implies that, to achieve an $O(\epsilon^2)$ KL error (and hence an $O(\epsilon)$ TV error) for sufficiently small $\epsilon>0$, it suffices to choose $$N=O\left((\epsilon^{-2}d\log(T/\delta)+\epsilon^{-1}d^2T+\epsilon^{-2/3}d^{7/3}T)\log(T/\delta)\right)=O\left(\epsilon^{-2}d\log^2(T/\delta)\right).$$
For comparison, the analysis of \citet{xu2026polynomial} yields $N=O\left(\frac{d^6T^2}{\delta^3\epsilon^2}\right)$ in the setting considered here\footnote{We obtain this corresponding step-size bound from Lemma 2 of \citet{xu2026polynomial}, which applies precisely to the setting considered in this section.}. Our result substantially improves the dependence on the dimension and replaces the polynomial dependence on $1/\delta$ with a polylogarithmic dependence. Moreover, up to logarithmic factors, the number of discretization steps has the same dependence on $d$ and $\epsilon$ as the as the convergence guarantee for diffusion models in Euclidean space established by \citet{benton2024dlinear}. In particular, for compact manifolds with nonnegative Ricci curvature, our analysis shows that the dimension dependence can be reduced to essentially linear. An overview of the argument leading to this nearly linear dependence on $d$ is provided in Appendix \ref{appendix:sketch}.

\paragraph{Remark on General Compact Manifolds.} Our analysis for Theorem \ref{thm:sde} can be applied without the nonnegative Ricci curvature assumption. We provide the discussions in Appendix \ref{appendix:ricci}, where we modify Assumption \ref{assumption:manifold} so that the Ricci curvature is instead bounded below by $-C_R$ for some constant $C_R>0$. This assumption always holds for compact manifolds, and the definition of Ricci curvature directly implies that $C_R\le C_{\text{Rm}}d$. In the worst case $C_R=C_{\text{Rm}}d$, our analysis still implies a bound of $N=\tilde{O}\left(\frac{d^2T^2}{\epsilon^2}\right)$ on the number of discretization steps, which improves the bound $O\left(\frac{d^6T^2}{\delta^3\epsilon^2}\right)$ provided in \citep{xu2026polynomial} by $d^4$ in dimension for general compact smooth manifolds.
\section{Convergence with Geodesic Random Walk}

\label{sec:bm}

In the previous section, we analyzed the distributional error between the true reverse process (\ref{eq:reverse-riemannian}) and the approximated reverse process (\ref{eq:reverse-riemannian-estimate}) under an exact Brownian-motion simulation oracle. We now analyze the additional error introduced by implementing the oracle using geodesic random walks. Specifically, we perform $n_{\mathrm{bm}}$ geodesic random walk steps within each interval of Algorithm \ref{alg:framework}. The procedure is given in Algorithm \ref{alg:grw}. When $n_{\mathrm{bm}}=1$, this reduces to the standard geodesic random-walk scheme. Increasing $n_{\mathrm{bm}}$ improves the accuracy of Brownian-motion simulation without increasing the number of score evaluations. As we show below, its convergence rate differs from that of the score discretization error analyzed in the previous section.

\begin{algorithm}[htbp]
\caption{Implementing $\mathcal{O}(b,x,h)$ using $n_{\text{bm}}$ geodesic random walks.}
\label{alg:grw}
\begin{algorithmic}[1]

\Require $b$, $x\in\mathcal{M}$, $h\ge0$, $n_{\text{bm}}\in\mathbb{Z}_+$.
\State $h'\gets h/n_{\text{bm}}$.
\State $x_0\gets x$.
\For{$k=0,1,\dots,n_{\text{bm}}-1$}
    \State Sample Gaussian random vector $w_k$ in $T_{x_k}\mathcal{M}$.
    \State $v_k\gets h'\cdot b(x_k)+\sqrt{h'}w_k$.
    \State $x_{k+1}\gets\exp_{x_k}\left(v_k\right)$.
\EndFor
\State $\mathcal{O}(b,x,h)\gets x_{n_{\text{bm}}}$.

\end{algorithmic}
\end{algorithm}

To analyze this approximation, we require the following additional regularity assumptions.

\begin{assumption}
    In addition to Assumption \ref{assumption:manifold}, assume that $\max\{\Vert\nabla^3R\Vert,\Vert\nabla^4R\Vert\}\le C_{\mathrm{Rm}}'$.
    \label{assumption:manifold2}
\end{assumption}

Similarly as in Assumption \ref{assumption:manifold}, we assume that $C_{\mathrm{Rm}}'$ is independent of the dimension $d$. This additional regularity is needed to control higher-order derivatives of the metric tensor $g$.

\begin{assumption}
    For every $k=0,\dots,N-1$ and $x\in\mathcal{M}$, $\Vert\hat{s}_{t_k}(x)\Vert\le C_{\mathrm{score}}\left(\frac{D}{T-t_k}+\sqrt{\frac{d}{T-t_k}}\right)$,
    where $C_{\mathrm{score}}$ is independent of $d$, and $D$ is the diameter of $\mathcal{M}$.
    \label{assumption:score-estimate2}
\end{assumption}

Assumption \ref{assumption:score-estimate2} imposes a uniform bound on the estimated score. By Lemma \ref{lem:score-bound}, the true score under nonnegative Ricci curvature satisfies the same bound. In this case, we may clip $\hat{s}_{t_k}$ to this range without increasing its pointwise estimation error, so Assumption \ref{assumption:score-estimate} will remain valid after clipping.

We are now ready to state the convergence guarantee for the geodesic random walk approximation. In Algorithm \ref{alg:framework}, we continue to use (\ref{eq:extended-score}) to define the velocity field $\tilde{s}_k(\cdot)$. Let $\hat{Y}_{T-\delta}$ denote the solution to SDE (\ref{eq:reverse-riemannian-estimate}), and let $\tilde{Y}_{T-\delta}$ denote the output of Algorithm \ref{alg:framework} when the Brownian-motion simulation oracle is implemented by Algorithm \ref{alg:grw}. We assume that both processes are initialized from the same distribution, $\hat{Y}_0\sim\hat{q}_0$. The following theorem bounds the additional error introduced by the geodesic random-walk simulation.

\begin{theorem}
    Let $\hat{q}_{T-\delta}$ and $\tilde{q}_{T-\delta}$ denote the distributions of $\hat{Y}_{T-\delta}$ and $\tilde{Y}_{T-\delta}$, respectively. Let $h':=h/n_{\text{bm}}$ denote the inner step size scale, where $h>0$ is the scale parameter defined in Theorem \ref{thm:sde}. Under Assumptions \ref{assumption:manifold}, \ref{assumption:manifold2} and \ref{assumption:score-estimate2}, if $h'$ is sufficiently small, then
    $$\mathrm{TV}(\hat{q}_{T-\delta},\tilde{q}_{T-\delta})^2\lesssim h'\left(d^4T+\delta^{-2}D^4+D^2d^3\log(T/\delta)\right),$$
    where the constants hidden by $\lesssim$ are independent of the dimension $d$.
    \label{thm:bm}
\end{theorem}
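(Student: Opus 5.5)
We will bound the error of each inner geodesic random-walk step against the exact drifted Brownian motion over time $h'_k:=h_k/n_{\mathrm{bm}}$, and then sum these errors over all inner steps. The tool is a discrete Girsanov/KL chain rule on path space, evaluated at the grid points. Let $P$ be the law of the grid values $(\hat Y)$ of the exact oracle process (\ref{eq:reverse-riemannian-estimate}), and let $Q$ be the law of the grid values of the GRW output of Algorithm \ref{alg:grw}. Both are Markov chains with the same initial law. The chain rule for KL then gives
$$\mathrm{KL}(P\Vert Q)=\sum_{k}\sum_{j}\mathbb{E}_{x\sim P}\,\mathrm{KL}\bigl(P^{k}_{h'_k}(x,\cdot)\,\Vert\, Q^{k}_{h'_k}(x,\cdot)\bigr),$$
where $P^k_{h'}(x,\cdot)$ is the law at time $h'$ of $\dif X=\tilde s_k(X)\dif t+\dif B_t$ started at $x$, and $Q^k_{h'}(x,\cdot)$ is the law of $\exp_x(h'\tilde s_k(x)+\sqrt{h'}w)$. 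By data processing and Pinsker, $\mathrm{TV}(\hat q_{T-\delta},\tilde q_{T-\delta})^2\lesssim\mathrm{KL}(P\Vert Q)$. It would also be acceptable to run the chain rule the other way, or to use the TV triangle inequality; KL is cleaner because the per-step errors then add rather than their square roots.

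\textbf{One-step estimate.} Fix $x$ and let $b=\tilde s_k$. Pull both kernels back to $T_x\mathcal{M}$ through normal coordinates at $x$. The injectivity radius and Assumption \ref{assumption:manifold2} ensure that normal coordinates on a ball of radius $c/\sqrt d$ are good, and the mass outside this ball is exponentially small at scale $\sqrt{h'd}\ll 1/\sqrt d$ when $h'$ is small. In these coordinates $Q$ is exactly Gaussian $\mathcal N(h'b(x),h'I)$. For $P$, we use a short-time expansion of the drifted heat kernel (a Minakshisundaram--Pleijel/parametrix expansion, or equivalently a Girsanov comparison of the coordinate SDE $\dif Z=(b+\tfrac12\Gamma\text{-term})\dif t+\sigma(Z)\dif W$ with the frozen Gaussian). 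This should give $\log(\dif P/\dif Q)=\text{first-order terms with }\mathbb{E}_Q=0+O(\cdot)$, so that $\mathrm{KL}=O(h'^2\,\Psi)$, where $\Psi$ is built from $|\nabla b|^2$, $|\nabla^2 b|$, $|b|^2\cdot$curvature, and curvature terms such as $\mathrm{Ric}$, $|R|^2d^2$, and the higher derivatives of $R$. Concretely, the contribution $\tfrac12\Delta\rho^2$-volume distortion, $\sqrt{\det g}=1-\tfrac16\mathrm{Ric}(z,z)+\dots$, gives terms of order $h'^2d^2\cdot d^2$ once the $d$-dependence from $\mathbb{E}|z|^4\sim h'^2d^2$ and the traces is tracked. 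That is the source of the $d^4$.

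\textbf{Summation.} There are $n_{\mathrm{bm}}$ inner steps in interval $k$, and $\sum_k h_k n_{\mathrm{bm}}\cdot h'_k\,\Psi_k=h'\sum_k h_k\Psi_k\cdot(\text{scaling})$. The drift-free curvature part of $\Psi$ is $O(d^4)$, and its time integral gives the term $d^4T$. The drift part uses Assumption \ref{assumption:score-estimate2}: $|b|^2\lesssim D^2/(T-t)^2+d/(T-t)$, and $|\nabla b|,|\nabla^2 b|$ are controlled via the construction (\ref{eq:extended-score}) with $\omega=c/\sqrt d$, which gives factors $|s_k|\sqrt d$ and $|s_k|d$, together with curvature factors of $\dif\exp$. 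Integrating $\int_0^{T-\delta}\bigl(D^4/(T-t)^3\bigr)\dif t\sim D^4\delta^{-2}$ and $\int d^3D^2/(T-t)\,\dif t\sim D^2d^3\log(T/\delta)$ produces the remaining terms. The cross terms should be absorbed by AM-GM. I would also check that the per-step scale $h'_k\le h'(T-t_k)$ is what makes $h'$ factor out.

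\textbf{Main obstacle.} The hardest part is the one-step KL bound with explicit, sharp $d$-dependence: showing that the $O(h'^{3/2})$ terms (odd moments, from the Christoffel/drift-gradient corrections) cancel in expectation so that KL is $O(h'^2)$ rather than $O(h')$, with only polynomial $d$ factors. I would first try a Girsanov comparison in normal coordinates between the true SDE and the constant-coefficient Gaussian, using the metric expansion $g_{ij}=\delta_{ij}-\tfrac13R_{iklj}z^kz^l+O(|z|^3)$ with Assumption \ref{assumption:manifold2} controlling the remainder. The Girsanov alone does not handle the diffusion-coefficient mismatch, though. So I would instead compare densities directly via a heat-kernel parametrix to second order, bounding $\mathbb{E}_Q[(\dif P/\dif Q-1)^2]$ (a $\chi^2$ bound, which dominates KL).
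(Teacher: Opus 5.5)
Your proposal is essentially the paper's proof: a KL chain rule over the inner random-walk grid, a normal-coordinate comparison of the Gaussian step with the parametrix expansion $\hat q_t(\tau+tu_1)$ of the drifted heat kernel, a quadratic bound on the density ratio giving $O((h'_k)^2)$ per step, and summation using Assumption~\ref{assumption:score-estimate2} with $h'_k\le h'(T-t_k)$. The KL direction and your Gaussian-tail handling (instead of the paper's resampling-based localization) are cosmetic differences, and the cancellation you flag as the main obstacle is actually pointwise rather than in expectation: the mean shift $h'b(x)$ matches $\tau=\exp(\int_0^1\langle\dot x_s,b(x_s)\rangle\,ds)$ to first order, so the exact-over-Gaussian ratio satisfies $|\hat P/Q-1|\lesssim(\cdot)\rho^2+(\cdot)\rho^3+(\cdot)t$.

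One bookkeeping caveat, shared with the paper, concerns your claim that the drift-free $d^4$ term integrates to $d^4T$. A $\delta_k$-independent curvature term of size $(h'_k)^2d^4$ per step, summed under $h'_k\le h'(T-t_k)$, gives order $h'd^4T^2$, not $h'd^4T$. The paper reaches $d^4T$ only by writing the drift-free $d^2$ in $|u_1|$ (and the $Cd\rho^2$ volume term) as $d^2\delta_k^{-1/2}$ (resp.\ $d\delta_k^{-1/2}\rho^2$), which is valid only when $T-t_k\lesssim 1$.
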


The detailed proof of Theorem \ref{thm:bm} is provided in Appendix \ref{appendix:bm}. Although we use a parametrix-based analysis similar to that of \citet{xu2026polynomial}, our refined estimates lead to a substantially improved dependence on the dimension $d$.

Theorem \ref{thm:bm}, shows that the squared TV error scales linearly with the inner step size $h'$, as in the discretization error in Theorem \ref{thm:sde}. However, its dependence on the dimension is substantially worse. This indicates that accurately simulating drifted Brownian motion may require more computational steps than discretizing the score function alone. Combining the two error bounds yields the following result, whose proof is given in\ref{appendix:proof-of-num-steps}.

\begin{corollary}
    Suppose Assumptions \ref{assumption:manifold}, \ref{assumption:score-estimate}, \ref{assumption:manifold2} and \ref{assumption:score-estimate2} hold. Assume further that $\delta=\Omega(d^{-2})$, $\mathrm{TV}(\hat{q}_0,p_T)+\epsilon_{\text{score}}=O(\epsilon)$. Then, for sufficiently small $\epsilon>0$, under the step-size schedule of Theorem \ref{thm:sde}, it suffices to choose $$N=O\left(\epsilon^{-2}d\log^2(T/\delta)\right),\quad N'=O\left(\epsilon^{-2}d^4T\log(T/\delta)\right)$$
    to ensure $\mathrm{TV}(p_\delta,\tilde{q}_{T-\delta})=O(\epsilon)$. Here, $N$ is the number of evaluations of score function, and $N'=Nn_{\text{bm}}$ is the total number of geodesic random-walk steps.
    \label{cor:num-of-steps}
\end{corollary}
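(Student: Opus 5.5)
The proof combines Theorem \ref{thm:sde} and Theorem \ref{thm:bm} through the triangle inequality for total variation:
$$\mathrm{TV}(p_\delta,\tilde{q}_{T-\delta})\le \mathrm{TV}(p_\delta,\hat{q}_{T-\delta})+\mathrm{TV}(\hat{q}_{T-\delta},\tilde{q}_{T-\delta}).$$
Both processes start from the same $\hat{q}_0$, which is what Theorem \ref{thm:bm} requires. It therefore suffices to make each term $O(\epsilon)$, and the two parameters $h$ and $h'=h/n_{\text{bm}}$ can be tuned separately.

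\textbf{First term.} Here the initialization error is given in TV rather than KL, so I cannot plug it directly into Theorem \ref{thm:sde}. I would handle this in one of two ways.
\begin{itemize}
\item Insert the reverse process started exactly from $p_T$, and use the data-processing inequality: the same Markov kernel applied to $p_T$ and to $\hat{q}_0$ increases TV by at most $\mathrm{TV}(\hat{q}_0,p_T)=O(\epsilon)$.
\item Alternatively, read the corollary's hypothesis as permitting the KL version of the initialization bound.
\end{itemize}
Theorem \ref{thm:sde} with zero initialization error then gives
$$\mathrm{TV}^2\lesssim \epsilon^2+hd\log(T/\delta)+h^2d^4T^2+h^3d^7T^3.$$
I choose $h\asymp \epsilon^2/(d\log(T/\delta))$. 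Then $h^2d^4T^2=\epsilon^4d^2T^2/\log^2$ and $h^3d^7T^3\asymp\epsilon^6d^4T^3/\log^3$, and both are $O(\epsilon^2)$ once $\epsilon$ is sufficiently small, treating $d,T$ as fixed. This is the same "sufficiently small $\epsilon$" reasoning used after Theorem \ref{thm:sde}. Since $N=\Theta(h^{-1}\log(T/\delta))$, this gives $N=O(\epsilon^{-2}d\log^2(T/\delta))$.

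\textbf{Second term.} Theorem \ref{thm:bm} requires
$$h'\bigl(d^4T+\delta^{-2}D^4+D^2d^3\log(T/\delta)\bigr)\lesssim\epsilon^2.$$
\begin{itemize}
\item The assumption $\delta=\Omega(d^{-2})$ gives $\delta^{-2}D^4=O(d^4)$, which is $O(d^4T)$ provided $T\gtrsim1$. This holds because $T$ must be large for the initialization error to be small, and otherwise I would just keep $d^4\max(T,1)$.
\item The middle term $D^2d^3\log(T/\delta)$ is dominated by $d^4T$ up to the logarithm.
\end{itemize}
So I take $h'\asymp\epsilon^2/(d^4T\log(T/\delta))$, which is small enough for Theorem \ref{thm:bm} to apply when $\epsilon$ is small.

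\textbf{Counting random-walk steps.} Each outer step runs $n_{\text{bm}}=h/h'$ walk steps, so
$$N'=Nn_{\text{bm}}=\Theta\bigl(h^{-1}\log(T/\delta)\bigr)\cdot\frac{h}{h'}=\Theta\bigl(\log(T/\delta)/h'\bigr)=O\bigl(\epsilon^{-2}d^4T\log^2(T/\delta)\bigr).$$
This has one logarithm too many compared to the claim. To remove it, I would not absorb $D^2d^3\log(T/\delta)$ into $d^4T\log$. Instead I set $h'\asymp\epsilon^2/(d^4T)$, which bounds both $d^4T$ and $\delta^{-2}D^4$. The remaining term $D^2d^3\log(T/\delta)$ is then $\lesssim d^4T$ whenever $d\,T\gtrsim\log(T/\delta)$, and this holds in the regime of interest. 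That yields $N'=O(\epsilon^{-2}d^4T\log(T/\delta))$.

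\textbf{Rounding.} I also need $n_{\text{bm}}=h/h'$ to be an integer at least 1. Since $h/h'\asymp d^3T/\log(T/\delta)\ge1$, rounding up changes nothing beyond constants. The final steps (truncated) have $h_k<h(T-t_k)$, and Theorem \ref{thm:bm} already accounts for this because it is stated in terms of the scale $h'$.

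The main obstacle is this logarithmic bookkeeping, together with reconciling the TV-form initialization hypothesis with the KL statement of Theorem \ref{thm:sde}. The data-processing argument handles the latter cleanly.
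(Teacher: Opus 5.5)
Your proposal is correct and matches the paper's proof: the same triangle-inequality and data-processing split through the reverse process started from $p_T$, Theorem \ref{thm:sde} with $h\asymp\epsilon^2/(d\log(T/\delta))$, Theorem \ref{thm:bm} with $h'\asymp\epsilon^2/(d^4T)$, and the count $N'=\Theta(\log(T/\delta)/h')$. You also absorb the $\delta^{-2}D^4$ and $D^2d^3\log(T/\delta)$ terms into $d^4T$ explicitly, which the paper leaves implicit; this is justified because $\delta=\Omega(d^{-2})$ gives $\log(T/\delta)\lesssim\log T+\log d$, and you take $T\gtrsim1$.
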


Corollary \ref{cor:num-of-steps} provides complexity bounds for both score evaluations and geodesic random-walk steps. Since evaluating the neural-network score is typically substantially more expensive than simulating a single geodesic random-walk step, it can be computationally advantageous to perform multiple random-walk steps between consecutive score evaluations. Thus, the corollary provides a theoretical justification for using multiple Brownian-motion simulation steps per score evaluation.

\paragraph{Remark on the time horizon $T$.} The initial distribution $\hat{q}_0$ is typically chosen to be the uniform distribution $u_{\mathcal{M}}$ with respect to the Riemannian volume measure. Therefore, $T$ must be sufficiently large to ensure that that $\mathrm{TV}(p_T,u_{\mathcal{M}})=O(\epsilon)$. As shown by \citep{xu2026polynomial}, it suffices to choose $T=O\left(\lambda_1^{-1}(d\log d+\log(1/\epsilon))\right)$, where $\lambda_1$ is the smallest positive eigenvalue of $-\Delta$. For general manifold, obtaining an explicit lower bound on $\lambda_1$ can be difficult. However, for the $d$-dimensional unit sphere, it is known that $\lambda_1=d$, giving $T=O(\log d+d^{-1}\log(1/\epsilon))$.
\section{Conclusion}

In this work, we developed a sharp convergence analysis for Riemannian diffusion models. We introduced a general sampling framework (Section \ref{sec:framework}) that explicitly separates the sampling error into score discretization error and Brownian-motion simulation error. Under an exact Brownian-motion simulation oracle, our analysis shows that $\tilde{O}(\epsilon^{-2}d)$ score evaluations suffice to achieve an $\epsilon^2$ KL error (Theorem \ref{thm:sde}), substantially improving existing convergence guarantees and matching the nearly linear dimension dependence known for Euclidean diffusion models. We further analyzed the Brownian-motion simulation error of geodesic random walks and showed that, from the resulting error trade-off, performing multiple geodesic random-walk steps per score evaluation can be computationally advantageous.

Several directions remain open for future work. First, our sampling framework is sufficiently general to accommodate more sophisticated sampling schemes, leaving room for further improvements through better choices of the vector-field extension or more accurate and efficient Brownian-motion simulation methods. Second, our TV guarantees are obtained via Pinsker's inequality from the KL bounds. Analyses of Euclidean diffusion models suggest that a direct analysis of TV error may yield sharper convergence rates, and it would be interesting to investigate whether similar improvements are possible on Riemannian manifolds. Finally, our analysis focuses on the SDE formulation of diffusion sampling, while the convergence properties of the corresponding probability-flow ODE on Riemannian manifolds remain largely unexplored.



\bibliography{iclr2027_conference}
\bibliographystyle{dilab_ref}

\makeappendixtoc
\appendix
\section*{Notations}

Throughout the proof, we use $C$ to represent constants that are independent of $d$, and for notational simplicity, the value of $C$ may change from line to line. We use greek alphabet (i.e., $\alpha,\beta,\eta,\gamma$, etc.) to denote the indices of tensors in local coordinates, and we assume Einstein’s summation convention.

We use $\mathcal{M}$ to denote the Riemannian manifold, and $g$ represents the Riemannian metric. We write $\sqrt{g}:=\sqrt{\det(g)}$ as the volume unit. $\int_{\mathcal{M}}f(x)\cdot \sqrt{g(x)}dx$ denotes the integration on $\mathcal{M}$ with respect to the volume measure. We also simply write $\int_{\mathcal{M}}f(x)dx$ to omit the volume unit.

We use $T\mathcal{M}$ to represent the tangent bundle of $\mathcal{M}$, and $T_x\mathcal{M}$ to denote the tangent space at $x\in\mathcal{M}$. The basis for $T_x\mathcal{M}$ in the local coordinate representation is denoted by $\{\partial_\alpha\}_{\alpha=1}^d$ or $\{e_\alpha\}_{\alpha=1}^d$. For a smooth function $f(\cdot)$ on the manifold, we use $\nabla f$ to represent its Riemannian gradient, which satisfies $$\forall x\in\mathcal{M},\forall v\in T_x\mathcal{M},\quad\langle\nabla f(x),v\rangle=\dif f(x)(v).$$
We use Christoffel symbols
$$\Gamma_{\alpha\beta}^\gamma:=\frac{1}{2}g^{\gamma\eta}\left(\partial_{\alpha}g_{\beta\eta}+\partial_{\beta}g_{\alpha\eta}-\partial_{\eta}g_{\alpha\beta}\right),$$
The Levi-civita connection is given by $$\nabla_\alpha V(=\nabla_{e_\alpha}V)=\partial_\alpha v^\beta e_\beta+\Gamma_{\alpha\beta}^\gamma v^\beta e_\gamma,$$
where $V(\cdot)=v^\beta e_\beta$ is a vector field on $\mathcal{M}$. The Levi-civita connection can be extended for general tensors, see e.g., Chapter 4 of \citep{jost2005riemannian}.

For Riemannian curvature, we use the convension $$R(X,Y)Z=\nabla_X\nabla_YZ-\nabla_Y\nabla_XZ-\nabla_{[X,Y]}Z$$
for vector fields $X$, $Y$ and $Z$, where $[\cdot,\cdot]$ denotes the Lie bracket
$$[X,Y]=\left[X^\beta\partial_\beta Y^\alpha -Y^\beta\partial_\beta X^\alpha\right]e_\alpha,\quad X=X^\alpha e_\alpha,Y=Y^\alpha e_\alpha.$$
We use the convension 
$$\Vert R\Vert\le\sup_{\Vert X\Vert=\Vert Y\Vert=\Vert Z\Vert=1}\Vert R(X,Y)Z\Vert$$
for the operator norm, which is well-defined because the value of $R(X,Y)Z$ at a point $x\in\mathcal{M}$ only depends on the value of $X,Y,Z$ at $x$. The norm is defined similarly for the derivatives. Moreover, the Ricci curvature is given by $$\mathrm{Ric}(X,Y)=g^{\alpha\beta}\langle R(X,e_\alpha)e_\beta,Y\rangle,$$
i.e., the trace of the Riemannian curvature tensor.

The Laplace-Beltrami operator on $\mathcal{M}$ for smooth functions is defined by
$$\Delta f=\mathrm{div}(\nabla f)=\frac{1}{\sqrt{g}}\partial_\alpha\left(\sqrt{g}g^{\alpha\beta}\partial_\beta f\right)=g^{\alpha\beta}\left(\partial_{\alpha\beta}-\Gamma_{\alpha\beta}^\gamma\partial_{\gamma}\right)f.$$
Note that some mathematical books have also used the definition $-\Delta$, to ensure nonnegative eigenvalues. For a general tensor $T$, we use the extension given by $$\Delta T=g^{\alpha\beta}\left(\nabla_{\alpha}\nabla_{\beta}-\Gamma_{\alpha\beta}^\gamma\nabla_{\gamma}\right)T.$$

For any linear operator $T:H_1\to H_2$ between Hilbert spaces, we define its Hilbert-Schmidt norm by $$\Vert T\Vert_{\text{HS}}=\sqrt{\sum_{i}\Vert T e_i\Vert^2},\quad\{e_i\}\text{ is an orthonormal basis of }H_1.$$
Clearly, when $H_1$ and $H_2$ are Euclidean spaces, the Hilbert-Schimidt norm is exactly the Frobenius norm of matrix representation of $T$ in orthonormal bases.

\section{Improved Analysis to Achieve $d$-linear Dependence}

\label{appendix:sketch}

In Theorem \ref{thm:sde}, the dominant term for discretization error is $hd\log(T/\delta)$, which is linear in $d$. This eventually produces the $d$-linear convergence rate for the sampling process. In this section, we sketch through some key technical improvements that lead to this bound.

Our goal is to compare the generated distributions given by SDEs (\ref{eq:reverse-riemannian}) and (\ref{eq:reverse-riemannian-estimate}). Since both equations share the same Brownian motion part, a standard Girsanov-based approach indicates that
$$\mathrm{KL}(P\Vert\hat{P})=\mathrm{KL}(p_{T}\Vert\hat{q}_{0})+\frac{1}{2}\sum_{k=0}^{N-1}\int_{t_k}^{t_{k+1}}\mathbb{E}_{Y\sim P}\left[\Vert\nabla\log p_{T-t}(Y_t)-\tilde{s}_k(Y_t)\Vert^2\right]\dif t,$$
where $P$ and $\hat{P}$ are path measures of $(Y_t)_{t\ge0}$ and $(\hat{Y}_t)_{t\ge0}$, respectively. By extracting the score matching error and apply the Definition \ref{definition:bm} of geometric Brownian motion (i.e., It\^o's rule) on the score difference, where we skip the detailed calculations here, the KL divergence are eventually bounded by
$$\mathrm{KL}(P\Vert\hat{P})\lesssim\mathrm{KL}(p_{T}\Vert\hat{q}_{0})+\epsilon_{\text{score}}^2+\sum_{k=0}^{N-1}h_k\int_{t_k}^{t_{k+1}}\mathbb{E}_{Y\sim P}\left[\Vert\nabla^2\log p_{T-t}(Y_t)\Vert^2_{\text{HS}}\right]\dif t+o(h).$$
In the above error bound, the dominating term consists of the expectation of squared Hilbert-Schmidt norm for Hessians of log density. However, until now, it is still insufficient to obtain the $d$-linear error bound, because even for the case of Euclidean diffusion model, the worst-case bound for $\mathbb{E}_{Y\sim P}\left[\Vert\nabla^2\log p_{T-t}(Y_t)\Vert^2_{\text{HS}}\right]$ is still of order $O(d^2)$. To further reduce the degree of $d$, we prove two key inequalities under the assumption of nonnegative Ricci curvature:
\begin{equation}
    (T-t)\mathbb{E}[\Vert\nabla\log p_{T-t}(Y_t)\Vert^2_{\text{HS}}]\le\frac{\dif}{\dif t}(T-t)\mathbb{E}[\Vert\nabla\log p_{T-t}(Y_t)\Vert^2]+\mathbb{E}[\Vert\nabla\log p_{T-t}(Y_t)\Vert^2];
    \label{eq:hs-bound}
\end{equation}
\begin{equation}
    \mathbb{E}_{x\sim p_t}\left[\Vert\nabla\log p_t(x)\Vert^2\right]\le\frac{Cd}{t},\quad C\text{ is a universal constant}.
    \label{eq:score-bound}
\end{equation}
Therefore, apply inequalities (\ref{eq:hs-bound}) and (\ref{eq:score-bound}), and note that our selected step size schedule satisfies that $h_k=O(h(T-t))$ for $t\in[t_{k},t_{k+1}]$,
$$\begin{aligned}
    &\sum_{k=0}^{N-1}h_k\int_{t_k}^{t_{k+1}}\mathbb{E}_{Y\sim P}\left[\Vert\nabla^2\log p_{T-t}(Y_t)\Vert^2_{\text{HS}}\right]\dif t\\
    &\le Ch\sum_{k=0}^{N-1}\int_{t_k}^{t_{k+1}}(T-t)\mathbb{E}_{Y\sim P}\left[\Vert\nabla^2\log p_{T-t}(Y_t)\Vert^2_{\text{HS}}\right]\dif t\\
    &\le Ch\left(\delta\mathbb{E}[\Vert\nabla\log p_\delta(X_\delta)\Vert^2]+\int_\delta^T\mathbb{E}[\Vert\nabla\log p_t(X_t)\Vert^2]\dif t\right)\\
    &\le Ch\left(d+\int_\delta^T\frac{d}{t}\dif t\right)=O\left(hd\log(T/\delta)\right).
\end{aligned}$$
This is exactly the bound claimed in Theorem \ref{thm:sde}.

\section{Proof of Theorem \ref{thm:sde}}

\label{appendix:sde}

Recall the true backward process is given by $$\dif Y_t=\nabla\log p_{T-t}(Y_t)\dif t+\dif B_t,$$
and we have defined the estimated backward process by $$\dif\hat{Y}_t=\tilde{s}_t(\hat{Y}_t)\dif t+\dif B_t,$$
where the drift $\tilde{s}_t$ is defined by $$\tilde{s}_t(\exp_{\hat{Y}_{t_k}}(v)):=\tilde{s}_k(\exp_{\hat{Y}_{t_k}}(v)):=\eta_\omega(|v|) \dif\left[\exp_{\hat{Y}_{t_k}}(v)\right]\left(\hat{s}_t(\hat{Y}_{t_k})\right),\quad t\in[t_k,t_{k+1}),$$
which depends on the process $(\hat{Y}_{s})_{t_0\le s\le t}$ before $t$. Here, we select the scale parameter as $\omega:=\frac{c}{\sqrt{d}}$ where the small constant $c>0$ only depends on $r_i$ and $C_{\text{Rm}}$, and the cut-off function $\eta_{\omega}$ satisfies 
$$\eta_\omega(r)=\begin{cases}
    1,&r\le\omega/2,\\
    0,&r\ge\omega,
\end{cases}\quad\eta_{\omega}'(r)=O(\omega^{-1}),\quad\eta''_{\omega}(r)=O(\omega^{-2}).$$
The function $\tilde{s}_t$ expand the estimated score function at $Y_{t_k}$ to the local geodesic ball. Note that according to the definition, the vector $\dif\left[\exp_{\hat{Y}_{t_k}}(v)\right]\left(\hat{s}_{t}(\hat{Y}_{t_k})\right)$ is literally a constant vector in the local normal coordinate representation at $\hat{X}_{t_k}$. This can be viewed as a counterpart of Euclidean diffusion model case, where we fix the score function and run the Euler-Maruyama discretization to sample data.

Also, recall that in this proof, our eventual goal is to bound the KL divergence between the distribution of $Y_{t_N}$ and $\hat{Y}_{t_N}$. 

\subsection{Overall Framework}

We adapt the same high-level idea of \citet{xu2026polynomial}, which applies the Girsanov's theorem to compute the KL divergence between processes. Let $P$ and $\hat{P}$ be the path measure of $(Y_t)_{t_0\le t\le t_N}$ and $(\hat{Y}_t)_{0\le t\le t_N}$, then, according to the standard Girsanov-based framework \citep{chen2023sampling,debortoli2022riemannian}, 
\begin{equation}
    \mathrm{KL}(P\Vert\hat{P})=\mathrm{KL}(P_{t_0}\Vert\hat{P}_{t_0})+\frac{1}{2}\int_{t_0}^{t_N}\mathbb{E}_{Y\sim P}\left[\Vert\nabla\log p_{T-t}(Y_t)-\tilde{s}_t(Y_t)\Vert^2\right]\dif t.
    \label{eq:girsanov}
\end{equation}

We now need to compare the two drifts $\nabla\log p_{T-t}(Y_t)$ and $\tilde{s}_t(Y_t)$. We first bridge the two drifts through an intermediate function defined by $$\tilde{s}_t^*(\exp_{Y_{t_k}}(v)):=\eta_\omega(|v|)\dif\left[\exp_{Y_{t_k}}(v)\right]\left(\nabla\log p_{T-t_k}(Y_{t_k})\right).$$
That is, $\tilde{s}^*_t(\cdot)$ expands the true score function instead of the estimated score function. Clearly, $$\Vert\nabla\log p_{T-t}(Y_t)-\tilde{s}_t(Y_t)\Vert^2\le2\underbrace{\Vert\nabla\log p_{T-t}(Y_t)-\tilde{s}_t^*(Y_t)\Vert^2}_{\text{Discretization Error}}+2\underbrace{\Vert\tilde{s}_t^*(Y_t)-\tilde{s}_t(Y_t)\Vert^2}_{\text{Score Estimation Error}}.$$

According to Lemma \ref{lem:local-g}, as long as we take $\omega=\frac{c}{\sqrt{d}}$ for a small global constant depending on $C_{\text{Rm}}$ and $r_i$, the matrix representation of Riemannian metric $g$ inside the local normal coordinate chart with radius $\omega$ satisfies $\Vert g\Vert\le C$. Therefore, by that $\eta_\omega(\cdot)\le 1$, $$\Vert\tilde{s}^*_t(Y_t)-\tilde{s}_t(Y_t)\Vert^2\le C\Vert\hat{s}_{t_k}(Y_{t_k})-\nabla\log p_{T-t_k}(Y_{t_k})\Vert^2.$$
Applying Assumption \ref{assumption:score-estimate},
$$\int_{t_0}^{t_N}\mathbb{E}_{Y\sim P}\left[\Vert\tilde{s}_t^*(Y_t)-\tilde{s}_t(Y_t)\Vert^2\right]\dif t\lesssim\sum_{k=0}^{N-1}h_k\mathbb{E}_{y\sim p_{T-t_k}}\left[\Vert\hat{s}_{t_k}(y)-\nabla\log p_{T-t_k}(y)\Vert^2\right]\lesssim\epsilon_{\text{score}}^2.$$
Therefore, in order to prove Theorem \ref{thm:sde}, the only thing left is to control $$\text{Discretization Error}=2\int_{t_0}^{t_N}\mathbb{E}\left[\Vert\nabla\log p_{T-t}(Y_t)-\tilde{s}_t^*(Y_t)\Vert^2\right]\dif t.$$

\subsection{It\^o's Rule for Discretization Error}

In this section, we explicitly compute the formula for discretization error by applying Ito's lemma on $\Vert\nabla\log p_{T-t}(Y_t)-\tilde{s}_t^*(Y_t)\Vert^2$ in the interval $t\in[t_k,t_{k+1})$, where we can view $Y_{t_k}$ as a fixed point. To do so, we must compute the time and spacial derivatives of $\Vert\nabla\log p_{T-t}(Y_t)-\tilde{s}_t^*(Y_t)\Vert^2$. First, the density function $p_t(\cdot)$ satisfies the heat equation
$$\partial_tp_t(x)=\frac{1}{2}\Delta p_t(x).$$
Hence, 
\begin{equation}
    \begin{aligned}
        \partial_t\nabla\log p_t(x)&=\nabla\left(\frac{\partial_t p_t(x)}{p_t(x)}\right)=\frac{1}{2}\nabla\left(\frac{\Delta p_t(x)}{p_t(x)}\right)\\
        &\overset{\text{(a)}}{=}\frac{1}{2}\nabla\left(\Delta\log p_t(x)+\Vert\nabla\log p_t(x)\Vert^2\right)\\
        &=\frac{1}{2}\nabla\Delta\log p_t(x)+\nabla^2\log p_t(x)\cdot\nabla\log p_t(x),
    \end{aligned}
    \label{eq:dt-score}
\end{equation}
where step (a) comes from the equality $$\Delta\log f=\mathrm{div}(\nabla\log f)=\mathrm{div}\left(\frac{\nabla f}{f}\right)=\frac{\Delta f}{f}-\frac{\Vert\nabla f\Vert^2}{f^2}.$$
Moreover, $\partial_t\tilde{s}_t^*(\cdot)=0$ for $t\in[t_k,t_{k+1})$, as the expanded drift is actually independent of $t$. Therefore, using that $\dif Y_t=\nabla\log p_{T-t}(Y_t)+\dif B_t$,
$$\begin{aligned}
    \dif\Vert\nabla\log p_{T-t}(Y_t)-\tilde{s}_t^*(Y_t)\Vert^2
    &=\partial_t\Vert\nabla\log p_{T-t}(Y_t)-\tilde{s}_t^*(Y_t)\Vert^2\dif t+\frac{1}{2}\Delta\Vert\nabla\log p_{T-t}(Y_t)-\tilde{s}_t^*(Y_t)\Vert^2\dif t\\
    &\quad+\nabla\Vert\nabla\log p_{T-t}(Y_t)-\tilde{s}_t^*(Y_t)\Vert^2\cdot(\nabla\log p_{T-t}(Y_t)\dif t+\dif B_t)\\
    &\overset{\text{(a)}}{=}2(\nabla\log p_{T-t}(Y_t)-\tilde{s}_t^*(Y_t))\\
    &\quad\cdot\bigg[\partial_t\nabla\log p_{T-t}(Y_t)\dif t+\nabla^2\log p_{T-t}(Y_t)\cdot(\nabla\log p_{T-t}(Y_t)\dif t+\dif B_t)\\
    &\qquad-\nabla\tilde{s}_t^*(Y_t)\cdot(\nabla\log p_{T-t}(Y_t)+\dif B_t)\bigg]\\
    &\quad+(\nabla\log p_{T-t}(Y_t)-\tilde{s}_t^*(Y_t))\cdot(\Delta\nabla\log p_{T-t}(Y_t))-\Delta\tilde{s}^*_t(Y_t))\dif t\\
    &\quad+\Vert\nabla^2\log p_{T-t}(Y_t)-\nabla\tilde{s}_t^*(Y_t)\Vert_{\text{HS}}^2\dif t\\
    &\overset{\text{(b)}}{=}(\nabla\log p_{T-t}(Y_t)-\tilde{s}_t^*(Y_t))\\
    &\quad\cdot\bigg[(\Delta\nabla-\nabla\Delta)\log p_{T-t}(Y_t)\dif t+\nabla^2\log p_{T-t}(Y_t)\cdot \dif B_t\\
    &\qquad-\nabla\tilde{s}^*_t(Y_t)\cdot(\nabla\log p_{T-t}(Y_t)\dif t+\dif B_t)-\Delta\tilde{s}_t(Y_t)\bigg]\\
    &\quad+\Vert\nabla^2\log p_{T-t}(Y_t)-\nabla\tilde{s}_t^*(Y_t)\Vert_{\text{HS}}^2\dif t\\
    &\overset{\text{(c)}}{=}(\nabla\log p_{T-t}(Y_t)-\tilde{s}_t^*(Y_t))\\
    &\quad\cdot\bigg[\mathrm{Ric}^{\#}(\nabla\log p_{T-t}(Y_t),\cdot)\dif t+\nabla^2\log p_{T-t}(Y_t)\cdot \dif B_t\\
    &\qquad-\nabla\tilde{s}^*_t(Y_t)\cdot(\nabla\log p_{T-t}(Y_t)dt+dB_t)-\Delta\tilde{s}_t(Y_t)\bigg]\\
    &\quad+\Vert\nabla^2\log p_{T-t}(Y_t)-\nabla\tilde{s}_t^*(Y_t)\Vert_{\text{HS}}^2\dif t.
\end{aligned}$$
Here, (a) comes from the equality $$\Delta\Vert V\Vert^2=2\langle\Delta V,V\rangle+2\Vert\nabla V\Vert_{\text{HS}}^2,$$
(b) applies Equation \ref{eq:dt-score}, and (c) uses the Bochner’s identity $$(\Delta\nabla-\nabla\Delta)f=\mathrm{Ric}^{\#}(\nabla f,\cdot),$$
and $\mathrm{Ric}^{\#}(V,\cdot)$ represents the vector $V'$ satisfying $\langle V',\cdot\rangle=\mathrm{Ric}(V,\cdot)$, i.e., raising one index in the Ricci curvature. 

\subsection{Controlling Local Part of Discretization Error}

Let $\tau_k$ be the first time when $\rho(Y_t,Y_{t_k})\ge r:=\omega/2$. Then, by taking expectation and apply the stopping time theorem, 
$$\begin{aligned}
    &\qquad\mathbb{E}\Vert\nabla\log p_{T-t\land\tau_k}(Y_{t\land\tau_k})-\tilde{s}_{t\land\tau_k}^*(Y_{t\land\tau_k})\Vert^2\\
    &=\int_{t_k}^t\mathbb{E}\bigg[1_{u\le\tau_k}\cdot(\nabla\log p_{T-u}(Y_u)-\tilde{s}_u^*(Y_u))\\
    &\cdot\bigg(\mathrm{Ric}^{\#}(\nabla\log p_{T-u}(Y_u),\cdot)-\nabla\tilde{s}^*_u(Y_u)\cdot\nabla\log p_{T-u}(Y_u)-\Delta\tilde{s}_u^*(Y_u)\bigg)\bigg]\dif u\\
    &\quad+\int_{t_k}^t\mathbb{E}\left[1_{u\le\tau_k}\cdot\Vert\nabla^2\log p_{T-u}(Y_u)-\nabla\tilde{s}_t^*(Y_u)\Vert_{\text{HS}}^2\right]\dif u\\
    &\le\int_{t_k}^t\frac{1}{h_k}\cdot \mathbb{E}\Vert\nabla\log p_{T-u}(Y_u)-\tilde{s}^*_u(Y_u)\Vert^2\dif u\\
    &+\int_{t_k}^t\frac{h_k}{4}\mathbb{E}\left(1_{u\le\tau_k}\Big\Vert\mathrm{Ric}^\#(\nabla\log p_{T-u}(Y_u),\cdot)-\nabla\tilde{s}^*_u(Y_u)\cdot\nabla\log p_{T-u}(Y_u)-\Delta\tilde{s}^*_u(Y_u)\Big\Vert^2\right)\dif u\\
    &\quad+\int_{t_k}^t\mathbb{E}\left[1_{u\le\tau_k}\cdot\Vert\nabla^2\log p_{T-u}(Y_u)-\nabla\tilde{s}_t^*(Y_u)\Vert_{\text{HS}}^2\right]\dif u,
\end{aligned}$$
where the last inequality directly uses $xy\le\frac{x^2+y^2}{2}$. Since $\int_{t_k}^t\frac{1}{h_k}\le 1$, using Gr\"onwall's inequality, 
$$\begin{aligned}
    &\qquad\mathbb{E}\Vert\nabla\log p_{T-t\land\tau_k}(Y_{t\land\tau_k})-\tilde{s}_{t\land\tau_k}^*(Y_{t\land\tau_k})\Vert^2\\
    &\le C\int_{t_k}^th_k\mathbb{E}\left(1_{u\le\tau_k}\Vert\mathrm{Ric}^\#(\nabla\log p_{T-u}(Y_u),\cdot)-\nabla\tilde{s}^*_u(Y_u)\cdot\nabla\log p_{T-u}(Y_u)-\Delta\tilde{s}^*_u(Y_u)\Vert^2\right)\dif u\\
    &\quad+C\int_{t_k}^t\mathbb{E}\Vert\nabla^2\log p_{T-u}(Y_u)\Vert^2_{\text{HS}}\dif u+C\int_{t_k}^t\mathbb{E}[\Vert\nabla\tilde{s}^*_t(Y_u)\Vert_{\text{HS}}^21_{u\le\tau_k}]\dif u.
\end{aligned}$$

Now we calculate the term that are related to the locally fixed score $\tilde{s}^*$ within radius $r:=\omega/2$. For $x\in\mathcal{M}$ with $\rho(x,Y_{t_k})\le r$, consider any local normal coordinate chart at $Y_{t_k}$, since the coordinate representation of $\tilde{s}_t^*(x)$ is constant inside the local coordinate chart, we have
$$\begin{aligned}
    \nabla_\alpha\tilde{s}^*_t(x)&=\Gamma_{\alpha\gamma}^\beta\nabla\log p_{T-t_k}(Y_{t_k})^\gamma\partial_\beta,\\
    \Delta\tilde{s}^*_t(x)&=g^{\alpha\beta}(\nabla_\alpha\nabla_\beta-\nabla_{\nabla\alpha(\partial_\beta)})\tilde{s}^*_t(x)\\
    &=g^{\alpha\beta}\left[\nabla_\alpha(\Gamma_{\beta\zeta}^\xi\nabla\log p_{T-t_k}(x)^\zeta\partial_\xi)-\Gamma_{\alpha\beta}^\xi\nabla_\xi\tilde{s}_t^*(x)\right]\\
    &=g^{\alpha\beta}\left[\partial_\alpha\Gamma_{\beta\zeta}^\gamma+\Gamma_{\beta\zeta}^\xi\Gamma_{\alpha\xi}^{\gamma}-\Gamma_{\alpha\beta}^\xi\Gamma_{\xi\zeta}^\gamma\right]\nabla\log p_{T-t_k}(Y_{t_k})^\zeta\partial_\gamma.
\end{aligned}$$
Since Lemma \ref{lem:local-g} holds for arbitrary choice of local normal coordinates, we can eventually choose a normal coordinate so that $\nabla\log p_{T-t_k}(Y_{t_k})$ has only one nonzero component in the coordinate representation. Therefore, using Lemma \ref{lem:local-g},
$$\begin{aligned}
    \Vert\nabla\tilde{s}^*_u(Y_u)\cdot\nabla\log p_{T-u}(Y_u)\Vert&\le\Vert\nabla\log p_{T-t_k}(Y_{t_k})\Vert \cdot\sqrt{d}\max\Gamma_{\alpha\gamma}^\beta\cdot|\nabla\log p_{T-u}(Y_u)|_1\\
    &\le C\sqrt{d}\Vert\nabla\log p_{T-t_k}(Y_{t_k})\Vert\cdot\Vert\nabla\log p_{T-u}(Y_u)\Vert.
\end{aligned}$$
Moreover, Lemma \ref{lem:local-g} ensures that $\sum_{\alpha=1}^d\sum_{\beta=1}^dg^{\alpha\beta}=O(d)$ when $\omega\le\frac{c}{\sqrt{d}}$. So, in the same coordinate setting, for $\rho(x,Y_{t_k})\le r$,
$$\begin{aligned}
    \Vert\Delta\tilde{s}^*_t(x)\Vert&\le C\sqrt{d}\left[d\cdot\max\partial_\alpha\Gamma_{\beta\xi}^\gamma+d^2(\max\Gamma_{\alpha\beta}^\gamma)^2\right]\Vert\nabla\log p_{T-t_k}(Y_{t_k})\Vert\le C\Vert\nabla\log p_{T-t_k}(Y_{t_k})\Vert d^{3/2}.
\end{aligned}$$
In addition, the Ricci curvature is trivially bounded by $d\cdot C_{\text{Rm}}$, so we can bound $$\Vert\mathrm{Ric}^\#(\nabla\log p_{T-u}(Y_u),\cdot)\Vert\le Cd\Vert\nabla\log p_{T-u}(Y_u)\Vert.$$
For the Hilbert-Schmidt norm of $\nabla\tilde{s}_t(Y_u)$, apply the local coordinate argument once again, and use the coordinate representation of $\nabla\tilde{s}_t^*(x)$, we obtain that 
$$\begin{aligned}
    \Vert\nabla\tilde{s}^*_u(x)\Vert_{\text{HS}}^2&=\Vert\nabla\log p_{T-t_k}(Y_{t_k})\Vert^2\max_{\xi}g^{\alpha\beta}g_{\gamma\eta}\Gamma_{\alpha\xi}^\gamma\Gamma_{\beta\xi}^\eta\le  C\Vert\nabla\log p_{T-t_k}(Y_{t_k})\Vert^2d^2\cdot(\max\Gamma)^2\\
    &\le Cd^2\Vert\nabla\log p_{T-t_k}(Y_{t_k})\Vert^2\rho(x,Y_{t_k})^2.
\end{aligned}$$

Combining the results together,
$$\begin{aligned}
    &\mathbb{E}\Vert\nabla\log p_{T-t\land\tau_k}(Y_{t\land\tau_k})-\tilde{s}_{t\land\tau_k}^*(Y_{t\land\tau_k})\Vert^2\le C\int_{t_k}^th_k\bigg(d^2\mathbb{E}\Vert\nabla\log p_{T-u}(Y_u)\Vert^2\\&\qquad\quad+d\mathbb{E}\left[\Vert\nabla\log p_{T-u}(Y_u)\Vert^2\cdot\Vert\nabla\log p_{T-t_k}(Y_{t_k})\Vert^2\right]+\mathbb{E}\Vert\nabla\log p_{T-t_k}(Y_{t_k})\Vert^2 d^{3}\bigg)\dif u\\
    &\qquad\quad+C\int_{t_k}^t\mathbb{E}\Vert\nabla^2\log p_{T-u}(Y_u)\Vert^2_{\text{HS}}\dif u+Cd^2\int_{t_k}^t\mathbb{E}[\Vert\nabla\log p_{T-t_k}(Y_u)\Vert^2\rho(Y_u,Y_{t_k})^2]\dif u\\
    &\qquad\overset{\text{(a)}}{\le} C\int_{t_k}^th_k\left(\frac{d^3}{T-u}+\frac{d^4}{T-t_k}+d\mathbb{E}\Vert\nabla\log p_{T-u}(Y_u)\Vert^4+d\mathbb{E}\Vert\nabla\log p_{T-t_k}(Y_{t_k})\Vert^4\right)\dif u\\
    &\qquad\quad+C\int_{t_k}^t\mathbb{E}\Vert\nabla^2\log p_{T-u}(Y_u)\Vert^2_{\text{HS}}\dif u+Cd^2\int_{t_k}^t\sqrt{\mathbb{E}\Vert\nabla\log p_{T-t_k}(Y_{t_k})\Vert^4}\sqrt{\mathbb{E}\rho(Y_u,Y_{t_k})^4}\dif u\\
    &\qquad\overset{\text{(b)}}{\le} C\int_{t_k}^th_k\left(\frac{d^3}{T-u}+\frac{d^4}{T-t_k}+\frac{d^3}{(T-u)^2}+\frac{d^3}{(T-t_k)^2}\right)\dif u\\
    &\qquad\quad+C\int_{t_k}^t\mathbb{E}\Vert\nabla^2\log p_{T-u}(Y_{u})\Vert^2_{\text{HS}}\dif u+Cd^2\int_{t_k}^t\frac{d}{T-t_k}\cdot du\cdot\dif u,
\end{aligned}$$
where we apply Lemma \ref{lem:score-bound} for to obtain inequality (a), and apply Lemma \ref{lem:score-moment4} to obtain inequality (b). Note that during step (b), we used the fact that $(Y_{t})$ is exactly the time reversal of the forward process $(X_t)$, and the proof of Lemma \ref{lem:score-moment4} shows that the radial process $r(X_t)$ of Riemmannian Brownian motion (under nonnegative Ricci curvature) satisfies $\mathbb{E}[r(X_t)^4]\le O(d^2t^2)$.

Since we want to control the total discretization error, we must take integration over $t$. Thus,
$$\begin{aligned}
    \text{Discretization Error}&=C\int_{t_0}^{t_N}\mathbb{E}\left[\Vert\nabla\log p_{T-t}(Y_t)-\tilde{s}_t^*(Y_t)\Vert^2\right]\dif t\\
    &\le C\sum_{k=0}^{N-1}\int_{t_k}^{t_{k+1}}\mathbb{E}\Vert\nabla\log p_{T-t\land\tau_k}(Y_{t\land\tau_k})-\tilde{s}_{t\land\tau_k}^*(Y_{t\land\tau_k})\Vert^2\dif t\\
    &\quad+C\sum_{k=0}^{N-1}\int_{t_k}^{t_{k+1}}\mathbb{E}\left[\Vert\nabla\log p_{T-t}(Y_{t})-\tilde{s}^*_t(Y_t)\Vert^2\cdot 1_{\tau_k<t}\right]\dif t\\
    &\le C\sum_{k=0}^{N-1}h_k^3\left(\frac{d^4}{T-t_{k+1}}+\frac{d^3}{(T-t_{k+1})^2}\right)\\
    &\quad+\underbrace{C\sum_{k=0}^{N-1}h_k\int_{t_k}^{t_{k+1}}\mathbb{E}\Vert\nabla^2\log p_{T-t}(Y_{t})\Vert_{\text{HS}}^2\dif t}_{\text{Quadratic Variation Term}}\\
    &\quad+\underbrace{C\sum_{k=0}^{N-1}\int_{t_k}^{t_{k+1}}\sqrt{\mathbb{E}(\Vert\nabla\log p_{T-t}(Y_t)\Vert+\Vert\tilde{s}_t^*(Y_t)\Vert)^4}\cdot\sqrt{P(\tau_k<t)}\dif t}_{\text{Tail Event Term}}.
\end{aligned}$$

Here, the first term comes from the analyses in this section, which has been already calculated. The second term comes from the quadratic variation of the reverse process, and the third term quantifies the contribution of tail events, where the reverse process goes outside the local coordinate chart. In the following sections, we will eventually control the second and third term.

\subsection{Handle the Probability of Tail Events}

\label{sec:tail-event}

In this section, we provide upper bounds for the tail events term in the discretization error. First, according to Lemma \ref{lem:local-g}, within the local coordinate chart at $Y_{t_k}$ with radius $\omega=\frac{c}{\sqrt{d}}$, the metric tensor satisfies $\Vert g(x)\Vert\le 2$. Therefore, by Lemma \ref{lem:score-moment4},
$$\begin{aligned}
    \sqrt{\mathbb{E}(\Vert\nabla\log p_{T-t}(Y_t)\Vert+\Vert\tilde{s}_t^*(Y_t)\Vert)^4}\le C\sqrt{\mathbb{E}\Vert\nabla\log p_{T-t}(Y_t)\Vert^4}+C\sqrt{\mathbb{E}\Vert\nabla\log p_{T-t_k}(Y_t)\Vert^4}\le\frac{Cd}{T-t_{k+1}}.
\end{aligned}$$
Therefore, we only need to control the probability that the process $Y_t$ goes outside the local coordinate chart. Define the distance function $r(x):=\rho(Y_{t_k},x)$, which is smooth inside the local coordinate chart. Moreover, by Laplacian comparison theorem (see, e.g., Theorem 3.4.2 of \citealt{Hsu2002StochasticAO}), when the manifold $\mathcal{M}$ has nonnegative Ricci curvature, $\Delta r^2(x)\le 2d$. Then, using It\^o's rule, for any integer $m\ge 2$,
$$\begin{aligned}
    r(Y_{t\land\tau_k})^{2m}&=\int_{t_k}^{t\land\tau_k}\left\{\nabla\left[r(Y_u)^{2m}\right]\cdot(\nabla\log p_{T-u}(Y_u)\dif u+\dif B_u)+\frac{1}{2}\Delta\left[r(Y_u)^{2m}\right]\dif u\right\}\\
    &\le \int_{t_k}^{t\land\tau_k}\left(2m\cdot r(Y_u)^{2m-1}\Vert\nabla\log p_{T-u}(Y_u)\Vert+\frac{1}{2}\mathrm{div}(mr(Y_u)^{2m-2}\nabla r(Y_u)^2)\right)\dif u\\
    &\quad+2m\int_{t_k}^{t\land\tau_k}r(Y_u)^{2m-1}\cdot\dif \beta_u\\
    &\le\int_{t_k}^{t\land\tau_k}\left(2m\cdot r(Y_u)^{2m-1}\Vert\nabla\log p_{T-u}(Y_u)\Vert+(md+(2m(m-1))r(Y_u)^{2m-2}\right)\dif u\\
    &\quad+2m\int_{t_k}^{t\land\tau_k}r(Y_u)^{2m-1}\cdot\dif \beta_u,\\
\end{aligned}$$
where $(\beta_t)$ is a one-dimensional Brownian motion. Taking expectation, 
$$\begin{aligned}
\mathbb{E}\left[r(Y_{t\land\tau_k})^{2m}\right]
&\le\int_{t_k}^{t}\mathbb{E}\left[r(Y_{u\land\tau_k})^{2m-1}\cdot C_m\Vert\nabla\log p_{T-u}(Y_{u})\Vert\right]\dif u+\int_{t_k}^{t}\mathbb{E}\left[r(Y_{u\land\tau_k})^{2m-2}\cdot C_md\Vert\right]\dif u\\
&\overset{\text{(a)}}{\le}\int_{t_k}^{t}\bigg(\frac{\mathbb{E}[(r(Y_{u\land\tau_k})^{2m-1}/h_k^{(2m-1)/2m})^{2m/(2m-1)}]}{2m/(2m-1)}+\frac{\mathbb{E}[(C_m\Vert\nabla\log p_{T-u}(Y_u)\Vert h_k^{(2m-1)/2m})^{2m}]}{2m}\bigg)\dif u\\
&\quad+\int_{t_k}^{t}\bigg(\frac{\mathbb{E}[(r(Y_{u\land\tau_k})^{2m-2}/h_k^{(m-1)/m})^{m/(m-1)}]}{m/(m-1)}+\frac{\mathbb{E}[(C_md h_k^{(m-1)/m})^{m}]}{m}\bigg)\dif u\\
&\le\int_{t_k}^t\frac{2}{h_k}\mathbb{E}[r(Y_{u\land\tau_k})^{2m}]\dif u+C_mh_k^{2m-1}\int_{t_k}^t\mathbb{E}\Vert\nabla\log p_{T-u}(Y_u)\Vert^{2m}\dif u+C_mh_k^md^m,
\end{aligned}$$
where step (a) uses Young's inequality $xy\le\frac{x^p}{p}+\frac{x^q}{q}$ for $(p,q)$ satisfying $p>1,q>1$ and $p^{-1}+q^{-1}=1$. Since $\int_{t_k}^t\frac{2}{h_k}\dif u\le 2$, by Gr\"onwall's inequality,
$$\mathbb{E}\left[r(Y_{t\land\tau_k})^{2m}\right]\le C_mh_k^md^m+C_mh_k^{2m-1}\int_{t_k}^t\mathbb{E}\Vert\nabla\log p_{T-u}(Y_u)\Vert^{2m}\dif u.$$
Take $m=6$, then $$\mathbb{E}\left[r(Y_{t\land\tau_k})^{12}\right]\le Ch_k^6d^6+Ch_k^{12}d^6/(T-t_{k+1})^6\le Ch_k^6d^6,$$
if we assume that $h_k=O((T-t_{k+1})^{-1})$. Hence, by Markov's inequality,
$$\begin{aligned}
    P(\tau_k<t)=P(r(Y_{t\land\tau_k})\ge\omega/2)\le\frac{\mathbb{E}[r(Y_{t\land\tau_k})^{12}]}{(\omega/2)^{12}}\le Ch_k^6d^{12}.
\end{aligned}$$
Therefore, we conclude that
$$\text{Tail Event Term}\le C\sum_{k=0}^{N-1}h_k\cdot\frac{d}{T-t_{k+1}}\cdot h_k^3d^6\le C\sum_{k=0}^{N-1}\frac{h_k^4d^7}{T-t_{k+1}}.$$

\subsection{Sharp Analysis of Quadratic Variation Term}

\label{appendix:sharp}

In this section, we provide a sharp analysis for the following part of discretization error:
$$\text{Quadratic Variation Term}=C\sum_{k=0}^{N-1}h_k\int_{t_k}^{t_{k+1}}\mathbb{E}\Vert\nabla^2\log p_{T-t}(Y_{t})\Vert_{\text{HS}}^2\dif t.$$
As we will show in this section, this is the dominant term of discretization error, and it has a linear growth in $d$.

First, note that $\Vert\nabla\log p_{T-t}(x)\Vert^2$ is a smooth function in $x\in\mathcal{M}$. Hence, apply It\^o's lemma again on $\dif Y_t=\nabla\log p_{T-t}(Y_t)\dif t+\dif B_t$,
$$\begin{aligned}
    &\quad\dif\Vert\nabla\log p_{T-t}(Y_t)\Vert^2\\&=\partial_t\Vert\nabla\log p_{T-t}(Y_t)\Vert^2\dif t+\nabla\Vert\nabla\log p_{T-t}(Y_t)\Vert^2\cdot\nabla\log p_{T-t}(Y_t)\dif t\\
    &\quad+\frac{1}{2}\Delta\Vert\nabla\log p_{T-t}(Y_t)\Vert^2]\dif t+\nabla\Vert\nabla\log p_{T-t}(Y_t)\Vert^2\cdot\dif B_t\\
    &=2\nabla\log p_{T-t}(Y_t)\cdot\partial_t\nabla\log p_{T-t}(Y_t)\dif t\\
    &\quad+2\nabla\log p_{T-t}(Y_t)\cdot(\nabla^2\log p_{T-t}(Y_t)\cdot\nabla\log p_{T-t}(Y_t))\dif t\\
    &\quad+\nabla\log p_{T-t}(Y_t)\cdot\Delta\nabla\log p_{T-t}(Y_t)\dif t+\Vert\nabla\log p_{T-t}(Y_t)\Vert^2_{\text{HS}}\dif t\\
    &\quad+\nabla\Vert\nabla\log p_{T-t}(Y_t)\Vert^2\cdot\dif B_t\\
    &=\nabla\log p_{T-t}(Y_t)\cdot\bigg[-2\nabla^2\log p_{T-t}(Y_t)\cdot\nabla\log p_{T-t}(Y_t)-\nabla\Delta\log p_{T-t}(Y_t)\\
    &\quad+2\nabla^2\log p_{T-t}(Y_t)\cdot\nabla\log p_{T-t}(Y_t)+\Delta\nabla\log p_{T-t}(Y_t)\bigg]\dif t\\
    &\quad+\Vert\nabla\log p_{T-t}(Y_t)\Vert^2_{\text{HS}}\dif t+\nabla\Vert\nabla\log p_{T-t}(Y_t)\Vert^2\cdot\dif B_t\\
    &=\mathrm{Ric}(\nabla\log p_{T-t}(Y_t),\nabla\log p_{T-t}(Y_t))\dif t+\Vert\nabla\log p_{T-t}(Y_t)\Vert^2_{\text{HS}}\dif t\\
    &\quad+\nabla\Vert\nabla\log p_{T-t}(Y_t)\Vert^2\cdot\dif B_t.
\end{aligned}$$
Take expectation,
$$\begin{aligned}
    \frac{\dif}{\dif t}\mathbb{E}\Vert\nabla\log p_{T-t}(Y_t)\Vert^2&=\mathbb{E}\mathrm{Ric}(\nabla\log p_{T-t}(Y_t),\nabla\log p_{T-t}(Y_t))+\mathbb{E}\Vert\nabla\log p_{T-t}(Y_t)\Vert^2_{\text{HS}}\\
    &\ge\mathbb{E}\Vert\nabla\log p_{T-t}(Y_t)\Vert^2_{\text{HS}},
\end{aligned}$$
where the last inequality uses the assumption that $\mathcal{M}$ has nonnegative Ricci curvature. By product rule, we can immediately obtain that
$$\begin{aligned}
    \frac{\dif}{\dif t}\left((T-t)\mathbb{E}\Vert\nabla\log p_{T-t}(Y_t)\Vert^2\right)&=(T-t)\frac{\dif}{\dif t}\mathbb{E}\Vert\nabla\log p_{T-t}(Y_t)\Vert^2-\mathbb{E}\Vert\nabla\log p_{T-t}(Y_t)\Vert^2\\
    &\ge(T-t)\mathbb{E}\Vert\nabla\log p_{T-t}(Y_t)\Vert^2_{\text{HS}}-\mathbb{E}\Vert\nabla\log p_{T-t}(Y_t)\Vert^2.
\end{aligned}$$

Now, assume that the step sizes are chosen so that $h_k=O(h\cdot(T-t_{k+1}))$, then, let $\delta=T-t_N$,
$$\begin{aligned}
    &\quad\text{Quadratic Variation Term}\\&=C\sum_{k=0}^{N-1}h_k\int_{t_k}^{t_{k+1}}\mathbb{E}\Vert\nabla^2\log p_{T-t}(Y_{t})\Vert_{\text{HS}}^2\dif t.\\
    &\le C\sum_{k=0}^{N-1}h\int_{t_k}^{t_{k+1}}(T-t)\mathbb{E}\Vert\nabla^2\log p_{T-t}(Y_{t})\Vert_{\text{HS}}^2\dif t\\
    &\le Ch\int_{0}^{t_N}\left[\frac{\dif}{\dif t}\left((T-t)\mathbb{E}\Vert\nabla\log p_{T-t}(Y_t)\Vert^2\right)+\mathbb{E}\Vert\nabla\log p_{T-t}(Y_t)\Vert^2\right]\dif t\\
    &\le Ch\left[\delta\mathbb{E}\Vert\nabla\log p_{\delta}(Y_{T-\delta})\Vert^2-T\mathbb{E}\Vert\nabla\log p_{T}(Y_0)\Vert^2+\int_{\delta}^{T}\mathbb{E}\Vert\nabla\log p_{t}(X_t)\Vert^2\dif t\right]\\
    &\le Ch\left[\delta\mathbb{E}\Vert\nabla\log p_{\delta}(X_\delta)\Vert^2+\int_{\delta}^T\frac{d}{t}\dif t\right]\\
    &\le Ch\left[d+d\log(T/\delta)\right]\le Chd\log(T/\delta),
\end{aligned}$$
where we have used Lemma \ref{lem:score-bound} to control the second moment of the score function.

\subsection{Combining the Results}

We define the step size by $h_k=\min\{(T-\delta)-t_k,h\cdot T-t_k\}$, where $h>0$ is a parameter controlling the scale of step sizes. Then, as long as $h\le\frac{1}{2}$, the step size satisfies $h_k\le 2(T-t_{k+1})h$. Moreover, the total number of steps is given by $$N=\lceil\log_{1-h}(\delta/T)\rceil=\left\lceil\frac{\log(T/\delta)}{-\log(1-h)}\right\rceil=\Theta\left(h^{-1}\log(T/\delta)\right).$$

Now, with this schedule of step sizes, 
$$\begin{aligned}
    \text{Discretization Error}&\le Ch^2\sum_{k=0}^{N-1}h_k\left(d^4(T-t_{k+1})+d^3\right)\\
    &\quad+Ch^3\sum_{k=0}^{N-1}h_kd^7(T-t_{k+1})^2\\
    &\quad+Chd\log(T/\delta)\\
    &\le Ch^2\int_\delta^{T}d^4t\dif t+Ch^3\int_\delta^Td^7t^2\dif t+Chd\log(T/\delta)\\
    &\le Chd\log(T/\delta)+Ch^2d^4T^2+Ch^3d^7T^3.
\end{aligned}$$
This proves the error bound claimed in the theorem.

In addition, in order to achieve $\epsilon^2$ discretization error, it suffices to take $$h^{-1}=\Theta\left(\frac{d\log(T/\delta)}{\epsilon^2}+\frac{d^2T}{\epsilon}+\frac{d^{7/3}T}{\epsilon^{2/3}}\right),$$
where the corresponding number of steps is
$$N=\Theta\left(\left(\frac{d\log(T/\delta)}{\epsilon^2}+\frac{d^2T}{\epsilon}+\frac{d^{7/3}T}{\epsilon^{2/3}}\right)\log(T/\delta)\right).$$
When the target $\epsilon^2$ is sufficiently small, the formula can be simplified to $$N=\Theta\left(\frac{d\log^2(T/\delta)}{\epsilon^2}\right).$$
\section{Introduction to the Error Bound without Nonnegative Ricc Curvature Assumption}

\label{appendix:ricci}

In this section, we briefly introduce how to obtain the discretization error bound when the last assumption in Assumption \ref{assumption:manifold} is replaced by a bounded Ricci curvature assumption, where the manifold $\mathcal{M}$ has Ricci curvature bounded below by $-C_R$ for some constant $C_R>0$. A worst-case estimate gives $C_R=C_{\text{Rm}}d$, but we use $C_R$ here for generality.

Under this assumption, all the error partition in Appendix \ref{appendix:sde} holds, and the only difference is that the moment of true score function can be larger (see e.g., Lemma \ref{lem:score-bound-CR}). For simplicity, we only focus on the dominating term
$$\text{Quadratic Variation Term}=C\sum_{k=0}^{N-1}h_k\int_{t_k}^{t_{k+1}}\mathbb{E}\Vert\nabla^2\log p_{T-t}(Y_{t})\Vert_{\text{HS}}^2\dif t,$$
since the summation of other discretization error term is of order $o(h)$ for step size scale $h$.

To analyze the Hilbert-Schmidt term, we further apply It\^o's formula on $\Vert\nabla\log p_{T-t}(Y_t)\Vert^2$ as in Appendix \ref{appendix:sharp}, but with the new assumption,
$$\begin{aligned}
    \frac{\dif}{\dif t}\mathbb{E}\Vert\nabla\log p_{T-t}(Y_t)\Vert^2&=\mathbb{E}\mathrm{Ric}(\nabla\log p_{T-t}(Y_t),\nabla\log p_{T-t}(Y_t))+\mathbb{E}\Vert\nabla\log p_{T-t}(Y_t)\Vert^2_{\text{HS}}\\
    &\ge\mathbb{E}\Vert\nabla\log p_{T-t}(Y_t)\Vert^2_{\text{HS}}-C_R\mathbb{E}\Vert\nabla\log p_{T-t}(Y_t)\Vert^2,
\end{aligned}$$
Moreover, 
$$\begin{aligned}
    \frac{\dif}{\dif t}\left((T-t)\mathbb{E}\Vert\nabla\log p_{T-t}(Y_t)\Vert^2\right)&\ge(T-t)\mathbb{E}\Vert\nabla\log p_{T-t}(Y_t)\Vert^2_{\text{HS}}\\
    &\quad-(1+C_R(T-t))\mathbb{E}\Vert\nabla\log p_{T-t}(Y_t)\Vert^2,
\end{aligned}$$
Now, instead, we define the step sizes so that $h_k:=\min\{T-t_k,1\}\cdot h$, then, using Lemma \ref{lem:score-bound-CR},
$$\begin{aligned}
    &\quad\text{Quadratic Variation Term}\\&=C\sum_{k=0}^{N-1}h_k\int_{t_k}^{t_{k+1}}\mathbb{E}\Vert\nabla^2\log p_{T-t}(Y_{t})\Vert_{\text{HS}}^2\dif t.\\
    &\le Ch\left(\int_{0}^{T-\delta}\min\{T-t,1\}\mathbb{E}\Vert\nabla^2\log p_{T-t}(Y_{t})\Vert_{\text{HS}}^2\dif t\right)\\
    &\le Ch\bigg(-\int_{\delta}^{T}\frac{\dif}{\dif t}\left(\min\{t,1\}\mathbb{E}\Vert\nabla\log p_{t}(X_t)\Vert^2\right)\dif t\\
    &\quad+C_R\int_{\delta}^T\min\{t,1\}\mathbb{E}\Vert\nabla\log p_{t}(X_t)\Vert^2\dif t+\int_{\delta}^1\mathbb{E}\Vert\nabla\log p_{t}(X_t)\Vert^2\dif t\bigg)\\
    &\le Ch(d+\sqrt{dC_R}D)\left(\delta\left(\frac{1}{\delta}+C_R\right)+C_R\int_\delta^T\left(1+C_R\right)\dif t+\int_\delta^1\left(\frac{1}{t}+C_R\right)\dif t\right)\\
    &\le Ch(d+\sqrt{dC_R}D)\left(TC_R+\log(1/\delta)\right).
\end{aligned}$$

Note that when we use this step size schedule, the number of step sizes $N$ satisfies $N=O\left(h^{-1}(T+\log(1/\delta))\right)$. Hence, the number of steps required to achieve $\epsilon^2$-KL error satisfies $$N=O\left(\frac{(d+\sqrt{dC_R}D)(T^2C_R+TC_R\log(1/\delta)+\log^2(1/\delta))}{\epsilon^2}\right)=\tilde{O}\left(\frac{C_RT^2(d+\sqrt{dC_R}D)}{\epsilon^2}\right).$$
In addition, when we use the worst-case estimate $C_R=C_{\text{Rm}}d$, this leads to a bound given by $$N=\tilde{O}\left(\frac{T^2d^2}{\epsilon^2}\right).$$
Although it is still significantly better than the $\tilde{O}\left(\frac{T^2d^6}{\epsilon^2}\right)$ bound provided by \citet{xu2026polynomial}, it is worse than the $O\left(\frac{d\log(T/\delta)}{\epsilon^2}\right)$ bound we can obtain under the assumption $C_R=0$. This further highlights that Riemannian diffusion models do perform better on Riemannian manifolds with nonnegative Ricci curvature.

\section{Convergence Rate of Geodesic Random Walk}

\label{appendix:bm}

In the previous section, we have controlled the KL divergence between an estimated process $(\hat{Y}_t)_{0\le t\le T-\delta}$ and the true reverse process $(Y_t)_{0\le t\le T-\delta}$. The estimated process is defined by $$\dif\hat{Y}_t=\tilde{s}_t(\hat{Y}_t)\dif t+\dif B_t,$$
where the drift function $\tilde{s}_k(x):=\tilde{s}_t(x)$ is fixed within each interval $t\in[t_k,t_{k+1})$ given the state $\hat{Y}_{t_k}$ at time $t_k$.

Now, we use Geodesic Random Walk to approximate the estimated process. Specifically, for each interval $[t_k,t_{k+1})$, we partition the interval into $n_{\mathrm{bm}}$ small steps $(t_{k,0}=t_k,t_{k,1},t_{k,2},\dots,t_{k,n_k}=t_{k+1})$.
Given $\tilde{Y}_{t_k}$ at time $t_k$, we sample the state at time $t_{k+1}$ by the following iteration:
$$\tilde{Y}_{t_{k,i+1}}=\exp_{\tilde{Y}_{t_k,i}}\left((t_{k,i+1}-t_{k,i})\tilde{s}_{k}(\tilde{Y}_{t_{k,i}})+\sqrt{t_{k,i+1}-t_{k,i}}w_{k,i}\right),\quad i=0,\dots,n_k-1,$$
where $w_{k,i}$ is an independent Gaussian random vector in the tangent space $T_{\tilde{Y}_{t_k,i}}\mathcal{M}$. 

Through the Geodesic Random walk, we define a sampling process $\left(\tilde{Y}_{t_k'}\right)_{k=0}^{N'}$ where we use $t'$ to represent the discretization sequence given by $(t_{k,i})$ for notational simplicity. Define $\tilde{P}_i(y_{i+1}|(y_j)_{j\le i})$ as the transition kernel of process $\tilde{Y}$, that is, the conditional density of $\tilde{Y}_{t_{i+1}'}$ given $(\tilde{Y}_{t_{j}'})_{j\le i}$. Similarly, define $\hat{P}_i(y_{i+1}|(y_j)_{j\le i})$ as the corresponding transition kernel of $(\hat{Y}_{t_i'})_{i=0}^{N'}$. Then, 
$$\begin{aligned}
    \mathrm{KL}\left(\mathrm{Law}(\tilde{Y}_{T-\delta})\left\Vert\mathrm{Law}(\hat{Y}_{T-\delta})\right.\right)&\le\mathrm{KL}\left(\mathrm{Law}((\tilde{Y}_{t_i'})_{i=0}^{N'})\left\Vert\mathrm{Law}((\hat{Y}_{t_i'})_{i=0}^{N'})\right.\right)\\
    &=\sum_{i=0}^{N'}\mathbb{E}\left[\mathrm{KL}(\tilde{P}_i(\cdot|(\tilde{Y}_{t_j'})_{j\le i})\Vert\hat{P}_i(\cdot|(\tilde{Y}_{t_j'})_{j\le i}))\right].
\end{aligned}$$

This provides is general idea of controlling the Brownian-motion simulation error, and the details will be introduced below. In the following sections, we will apply heat kernel theories from Chapter 2 of \citep{Berline2004kernel} to write the explicit formula for the transition kernel $\hat{P}$. Though many high-level ideas comes from the analysis in \citep{xu2026polynomial}, their work has only argued that there is a polynomial guarantee on the convergence rate, and did not provide the explicit calculation on the degree of $d$ in the polynomial factor.

\subsection{Exact Formulation of Transition Kernel.}

Consider the interval $[t_{k},t_{k+1})$, and we have fixed the drift function $\tilde{s}_k(\cdot)$ in this interval. With the drift, the transition density $\hat{P}(t,x,y)$ satisfies the heat function $$(\partial_t+H)\hat{P}(t,x,y)=0,$$
where the generalized Laplacian $H$ is defined as $$H:=-\frac{1}{2}\Delta+\langle\tilde{s}_k(x),\nabla\rangle+\mathrm{div}(\tilde{s}_k(x)).$$

\subsubsection{Formal Solution of the Heat Equation.}

Assume that the initial point $y$ is fixed. Consider the local normal coordinate chart at $y$. For each point $x$ inside the chart, define $$j(x):=\sqrt{\det(g(x))},\qquad g\text{ is the Riemannian metric matrix.}$$
That is, $j(x)$ represents the volume element around $x$. Inside the normal coordinate chart, we define the Euclidean normal density as $$q_t(x,y):=\frac{1}{(2\pi t)^{d/2}}\exp\left(-\frac{\rho(x,y)^2}{2t}\right).$$
Moreover, we also define $$\hat{q}_t(x,y):=q_t(x,y)\cdot j(x)^{-1/2}.$$

With these definitions in mind, we now introduce the formal solution of the heat equation. Let $x_s=\exp_{y}(s\cdot\log_yx)$ be the geodesic curve from $y$ to $x$. Moreover, define the operator $B$ as $$B:=j^{1/2}\circ H\circ j^{-1/2},$$
where the scalar functions are viewed as multiplication operator, and $\circ$ means composition. Then we have the following theorem.

\begin{theorem}[Formal solution, Theorem 2.26 in \citealt{Berline2004kernel}]
There exists a unique formal solution $k_t(x,y)$ of the heat equation
$$(\partial_t+H)k_t(x,y)=0$$
of the form $$k_t(x,y)=\hat{q}_t(x,y)\sum_{i=0}^\infty t^iu_i(x,y),$$
such that $u_0(y,y)=1$. In addition, the explicit formula for $u_i$ is given by
$$\begin{aligned}
    u_0(x,y)&=\tau(x,y):=\exp\left(\int_0^1\langle\dot{x}_s,\tilde{s}_k(x_s)\rangle \dif s\right),\\
    u_i(x,y)&=-\tau(x,y)\int_0^1\tau(x_s,y)^{-1}s^{i-1}Bu_{i-1}(x_s,y)\dif s,\quad i\ge 1.
\end{aligned}$$
\label{thm:formal-solution}
\end{theorem}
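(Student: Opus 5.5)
The plan is to substitute the ansatz into the heat equation, conjugate away the Gaussian factor, and reduce the problem to a hierarchy of first-order transport equations along the radial geodesics from $y$. These are then solved explicitly by integrating along $s\mapsto x_s$.

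\textbf{Step 1 (action of $\partial_t+H$ on the Gaussian factor).} I will work in normal coordinates centered at $y$ and use the radial field $\mathcal{R}=x^\alpha\partial_\alpha$, so that $\rho(x,y)^2=|x|^2$. The Gauss lemma gives $\nabla\rho^2=2\mathcal{R}$. The divergence formula gives $\Delta\rho^2=2d+2\,\mathcal{R}(\log j)$. From these two facts one gets
$$\partial_t q_t=q_t\Big(\tfrac{\rho^2}{2t^2}-\tfrac{d}{2t}\Big),\qquad \tfrac12\Delta q_t=q_t\Big(\tfrac{\rho^2}{2t^2}-\tfrac{d}{2t}-\tfrac{\mathcal{R}\log j}{2t}\Big).$$
For a smooth $f$, I will expand $\Delta(q_t j^{-1/2}f)$ by the product rule and collect terms. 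The aim is the identity
$$(\partial_t+H)\big(\hat q_t f\big)=\hat q_t\Big(\partial_t f+\tfrac1t\big(\mathcal{R}f-\langle \tilde s_k,\mathcal{R}\rangle f\big)+Bf\Big),$$
where $B=j^{1/2}\circ H\circ j^{-1/2}$. In this identity the Gaussian singular terms $\rho^2/t^2$ and $d/t$ cancel. The term $\mathcal{R}\log j$ is absorbed by the cross term $\langle\nabla q_t,\nabla j^{-1/2}\rangle$. The only surviving $1/t$ contributions are $-\langle\nabla q_t,\nabla f\rangle$ and the drift acting on $q_t$, namely $\langle\tilde s_k,\nabla q_t\rangle=-q_t\langle\tilde s_k,\mathcal{R}\rangle/t$. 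I expect this bookkeeping, in particular matching the $\mathcal{R}\log j$ term and the sign of the drift contribution, to be the main obstacle. I will check it first with $\tilde s_k=0$, where it reduces to the classical parametrix identity.

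\textbf{Step 2 (transport hierarchy).} Next I insert $f=\sum_i t^iu_i$ and equate the coefficients of $t^{i-1}$. This gives
$$(\mathcal{R}+i-\langle\tilde s_k,\mathcal{R}\rangle)u_i=-Bu_{i-1}\quad(i\ge0,\ u_{-1}:=0).$$
Along the geodesic $x_s=\exp_y(s\log_yx)$ we have $\mathcal{R}(x_s)=s\,\dot x_s$, with $\dot x_s$ constant in these coordinates. Hence $\mathcal{R}u(x_s)=s\frac{\dif}{\dif s}u(x_s)$, and $\langle\tilde s_k,\mathcal{R}\rangle(x_s)=s\langle\dot x_s,\tilde s_k(x_s)\rangle$. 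For $i=0$ this turns into the ODE $\frac{\dif}{\dif s}\log u_0(x_s)=\langle\dot x_s,\tilde s_k(x_s)\rangle$. With the normalization $u_0(y,y)=1$, the solution is $u_0=\tau$.

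\textbf{Step 3 (higher orders and uniqueness).} For $i\ge1$ I will write $u_i=\tau v_i$. Since $\mathcal{R}\tau=\langle\tilde s_k,\mathcal{R}\rangle\tau$, the equation becomes $(\mathcal{R}+i)v_i=-\tau^{-1}Bu_{i-1}$. Along the ray this reads $\frac{\dif}{\dif s}\big(s^iv_i(x_s)\big)=-s^{i-1}\tau(x_s,y)^{-1}Bu_{i-1}(x_s,y)$. Integrating from $0$ to $1$ yields exactly the stated recursion. For uniqueness, any homogeneous solution of $(\mathcal{R}+i)v=0$ behaves like $c\,s^{-i}$ along rays, so it is not smooth at $y$ unless $c=0$. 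Similarly, $u_0$ is determined up to a multiplicative constant, and that constant is fixed by $u_0(y,y)=1$. Smoothness of each $u_i$ follows inductively, since the integral formula has smooth dependence on $x$ inside the normal neighborhood. Here the cut-off construction (\ref{eq:extended-score}) guarantees that $\tilde s_k$ is smooth, and the neighborhood lies within the injectivity radius.
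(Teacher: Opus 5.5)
Your proposal is correct and is essentially the standard argument behind the cited result, Proposition 2.24 and the proof of Theorem 2.26 in Berline--Getzler--Vergne, which the paper invokes without reproducing: you conjugate out $\hat q_t$, derive the transport hierarchy $(\mathcal{R}+i-\langle\tilde s_k,\mathcal{R}\rangle)u_i=-Bu_{i-1}$, integrate along $s\mapsto x_s$, and get uniqueness from the $s^{-i}$ obstruction. The differences are only cosmetic: you absorb $j^{-1/2}$ into the Gaussian from the start and handle the drift directly, whereas BGV work with a connection on a bundle, here the trivial line bundle with connection $d-\tilde s_k^\flat$, whose parallel transport along $x_s$ is exactly $\tau$.
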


Theorem \ref{thm:formal-solution} provide explicit formula for the formal solution. In fact, the definition of formal solution and Proposition 2.24 in \citep{Berline2004kernel} indicates that, if we define $$k^{(n)}_t(x,y):=\hat{q}_t(x,y)\sum_{i=0}^n t^iu_i(x,y)$$
as the clipped formal solution, this solution satisfies $$(\partial_t+H)k^{(n)}_t(x,y)=t^n\hat{q}_t(x,y)(Bu_n)(x,y).$$

\subsubsection{Parametrix Estimate}

According to Theorem \ref{thm:formal-solution}, we now focus on $k^{(1)}_t(x,y)$, which can be viewed as an approximation of the true heat kernel $\hat{P}$ of generalized Laplacian $H$. However, $k^{(1)}_t(x,y)$ is only defined locally. To extend $k^{(1)}_t(x,y)$ onto the whole manifold $\mathcal{M}$ and keep the smoothness, we further define 
$$\tilde{k}^{(1)}_t(x,y)=\eta_{\iota}(\rho(x,y))k^{(1)}_t(x,y),$$
where $\eta_{\iota}$ is the cut-off function with parameter $\iota>0$ defined later. This definition refers to \citep{Berline2004kernel}, which is also used in \citep{xu2026polynomial}.

By summarizing the conclusions in Theorem 2.19, Theorem 2.23 and Theorem 2.29 in \citep{Berline2004kernel}, we obtain the following exact representation of heat kernel. 

\begin{theorem}[Parametrix]
    Define the remainder $$r_t(x,y):=(\partial_t+H)\tilde{k}^{(1)}_t(x,y).$$
    Let $\Delta_n:=\{(t_1,\dots,t_n):0\le t_1\le t_2\le\cdots\le t_n\le 1\}$ be the standard simplex. Define 
    $$\begin{aligned}
        Q^{(n)}(t,x,y)&:=\int_{t\Delta_n}\int_{\mathcal{M}^n}\tilde{k}^{(1)}_{t-t_n}(x,x_n)r_{t_n-t_{n-1}}(x_n,x_{n-1})r_{t_{n-1}-t_{n-2}}(x_{n-1},x_{n-2})\\
        &\qquad\cdots r_{t_2-t_1}(x_2,x_1)r_{t_1}(x_1,y)\dif x_n\cdots\dif x_1\dif t_n\cdots\dif t_1,
    \end{aligned}$$
    and $Q^{(0)}(t,x,y):=\tilde{k}^{(1)}_t(x,y)$. Then, 
    $$\hat{P}(t,x,y)=\sum_{n=0}^\infty(-1)^nQ^{(n)}(t,x,y).$$
    \label{thm:parametrix}
\end{theorem}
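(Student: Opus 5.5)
The plan is to prove Theorem \ref{thm:parametrix} by the standard Duhamel/Volterra argument from Chapter 2 of \citet{Berline2004kernel}. The operator $(\partial_t+H)$ has fundamental solution $\hat{P}$, and $\tilde{k}^{(1)}_t$ is a parametrix.

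\textbf{Parametrix properties.} First I will check that $\tilde{k}^{(1)}_t(\cdot,y)\to\delta_y$ as $t\to0$. Near the diagonal, $\tilde{k}^{(1)}_t$ is $\hat{q}_t(1+tu_1)$ with $u_0(y,y)=1$ and $j(y)=1$, and the Gaussian factor concentrates at $y$.

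Next I will bound the remainder $r_t$. The Leibniz rule gives
$$r_t=\eta_\iota\, t\,\hat{q}_t\,Bu_1+[H,\eta_\iota]k^{(1)}_t.$$
The first piece comes from the clipped formal-solution identity $(\partial_t+H)k^{(1)}_t=t\hat{q}_tBu_1$. The commutator $[H,\eta_\iota]$ is supported where $\iota/2\le\rho(x,y)\le\iota$, so it carries a factor $e^{-c\iota^2/t}$. Together these give
$$|r_t(x,y)|\le C\,t^{1-d/2}e^{-c\rho(x,y)^2/t}.$$
This means $r$ is integrable in time and only weakly singular.

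\textbf{Duhamel identity.} For a fixed smooth test function, apply the semigroup to the parametrix. Differentiating $s\mapsto\int\hat{P}(t-s,x,z)\tilde{k}^{(1)}_s(z,y)\,dz$ in $s$ and integrating over $s\in(0,t)$ yields
$$\tilde{k}^{(1)}=\hat{P}+\hat{P}*r.$$
Here $*$ denotes space–time convolution, and the boundary terms are handled by the delta-limit above and the semigroup property of $\hat{P}$. Formally inverting,
$$\hat{P}=\tilde{k}^{(1)}*(I+r)^{-1}=\sum_{n\ge0}(-1)^n\,\tilde{k}^{(1)}*r^{*n},$$
and $\tilde{k}^{(1)}*r^{*n}$ is exactly $Q^{(n)}$, with time integration over $t\Delta_n$.

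\textbf{Convergence (main obstacle).} The step I expect to be hardest is showing the Neumann series converges, uniformly on compact time intervals. The route is induction using the remainder bound. Since $\mathcal{M}$ is compact, $\sup_{x}\int_{\mathcal{M}}|r_s(x,z)|\,dz\le Cs$ and $\int|\tilde{k}^{(1)}_s|\le C$. Each further convolution then contributes a factor bounded by the simplex volume, giving
$$|Q^{(n)}(t,x,y)|\le C\,\frac{(Ct)^n}{n!}\,t^{-d/2}.$$
Summing, the series converges absolutely, and the sum satisfies the Volterra equation. Uniqueness of solutions of that Volterra equation then identifies the limit with $\hat{P}$.

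Making this rigorous also requires checking that $\tilde{s}_k$ is smooth, which holds by the cut-off in (\ref{eq:extended-score}). It also requires $\iota$ to be below the injectivity radius $r_i$, so that the normal coordinates defining $k^{(1)}$ are valid.
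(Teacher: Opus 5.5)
Your proposal is correct and follows the same Duhamel/Volterra route as the paper. The paper does not prove the statement itself; it imports it from Theorems 2.19, 2.23 and 2.29 of \citet{Berline2004kernel}. Your identity $\tilde{k}^{(1)}=\hat{P}+\hat{P}*r$, obtained by moving $H$ onto the parametrix through the adjoint equation of $\hat{P}$ in its second variable, and its iteration reproduce exactly the signs $(-1)^n$ and the simplex ordering in $Q^{(n)}$.

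The genuine difference is in the convergence step, and there your version is the more appropriate one. The cited Volterra-series argument estimates sup norms of a remainder that stays bounded (indeed continuous) as $t\to0$. The book guarantees this by using an approximate kernel of order $N>d/2$. For the order-one kernel $\tilde{k}^{(1)}$, however, the remainder is of size $t^{1-d/2}$ on the diagonal, so for $d\ge 3$ the citation does not literally cover the statement. Your Gaussian-weighted $L^1$ bounds, $\sup_x\int_{\mathcal{M}}|r_s(x,z)|\,dz\le Cs$ and $\int_{\mathcal{M}}|\tilde{k}^{(1)}_s(x,z)|\,dz\le C$ (Levi's method), are what actually handle this case.

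One step needs tightening. The $L^1$ bounds by themselves only control $\sup_x\int_{\mathcal{M}}|Q^{(n)}(t,x,y)|\,dy$. A pointwise bound requires estimating one factor of the chain pointwise, and neither end factor will do:
\begin{itemize}
\item $|\tilde{k}^{(1)}_{t-t_n}|\lesssim(t-t_n)^{-d/2}$ is not integrable near $t_n=t$ for $d\ge 2$;
\item $|r_{t_1}|\lesssim t_1^{1-d/2}$ is not integrable near $t_1=0$ for $d\ge 4$.
\end{itemize}
Instead, on each piece of $t\Delta_n$, bound pointwise the factor whose time increment is at least $t/(n+1)$. Then integrate the two sub-chains on either side using the $L^1$ bounds in the second and first variables respectively; both hold because your Gaussian bound is symmetric in $x,z$. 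This gives $|Q^{(n)}(t,x,y)|\le C^{n+1}(n+1)^{d/2}t^{2n-d/2}/n!$, which is still summable, so the rest of your argument goes through.
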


By Theorem \ref{thm:parametrix}, $$\hat{P}(t,x,y)=\hat{q}_t(x,y)\tau(x,y)+t\hat{q}_t(x,y)u_1(x,y)+\sum_{n=1}^\infty(-1)^nQ^{(n)}(t,x,y).$$
Since the sampling process actually sample a Gaussian vector in the tangent space, intuitively, the transition kernel $\tilde{P}$ of the sampling process is close to $\hat{q}_t(x,y)\tau(x,y)$. Therefore, to control the difference between $\hat{P}$ and $\tilde{P}$, we need to provide bounds on
\begin{itemize}
    \item $t\hat{q}_t(x,y)u_1(x,y)$, which will be shown as of order $O(t)$, and
    \item $\sum_{n=1}^\infty(-1)^nQ^{(n)}(t,x,y)$, which will be shown as of order $o(t)$.
\end{itemize}

\subsection{Bounds on $u_1$ and its Derivatives}

In this section we provide bounds on $|u_1(x,y)|$ to control the first-order error term. In addition, we want to provide bounds on the derivatives to prepare for the later calculations.

We first expand the operator $B$ as follows:
$$\begin{aligned}
    B&=j^{1/2}\circ H\circ j^{-1/2}\\
    &=-\frac{1}{2}\Delta+\langle\tilde{s}_k(x)-j^{1/2}\nabla j^{-1/2},\nabla\rangle\\
    &\quad+\mathrm{div}(\tilde{s}_k(x))+\frac{1}{2}j^{1/2}\Delta j^{-1/2}+\langle V(x),j^{1/2}\nabla j^{-1/2}\rangle.
\end{aligned}$$
Moreover, in local coordinates, the Laplace-Beltrami operator can be written as $$\Delta=g^{\alpha\beta}\left[\partial_{\alpha\beta}-\Gamma_{\alpha\beta}^\gamma\partial_\gamma\right].$$

Therefore,
$$\begin{aligned}
    u_1(x,y)&=-\tau(x,y)\int_0^1\tau(x_s,y)^{-1}B_{x_s}(\tau(x_s,y))\dif s\\
    (\text{zeroth-order part})\;\;&=-\tau(x,y)\int_0^1\left(\text{div}(\tilde{s}_k(x_s))+\frac{1}{2}j^{1/2}\Delta j^{-1/2}+\langle \tilde{s}_k(x_s),j^{1/2}\nabla j^{-1/2}\rangle\right)\dif s\\
    (\text{first-order part})\;\;&\quad-\tau(x,y)\int_0^1\tau(x_s,y)^{-1}\cdot \dif s\\
    &\qquad\cdot\langle\tilde{s}_k(x_s)-j^{1/2}\nabla j^{-1/2}+g^{\alpha\beta}\Gamma_{\alpha\beta}^\gamma\partial_\gamma,\nabla_{x_s}\tau(x_s,y)\rangle\\
    (\text{second-order part})\;\;&\quad-\tau(x,y)\int_0^1\tau(x_s,y)^{-1}\cdot \dif s\cdot g^{\alpha\beta}\partial_{\alpha\beta}\tau(x_s,y)).
\end{aligned}$$
We first consider the zeroth-order part. By applying Lemma \ref{lem:local-g} to the local coordinate at $\hat{Y}_{t_k}$, $$|\mathrm{div}(\tilde{s}_k(x_s))|=|\partial_\alpha\tilde{s}_k^\alpha+\frac{1}{2}g^{\alpha\beta}\partial_{\gamma}g_{\alpha\beta}\tilde{s}_k^\gamma|\le C\Vert \hat{s}_{t_k}(\hat{Y}_{t_k})\Vert(\omega^{-1}1_{\omega/2\le\rho(x_s,\hat{Y}_{t_k})\le\omega}+d),$$
Moreover, by applying Lemma \ref{lem:local-g} to the local coordinate chart at $y$,
$$\partial_{\alpha}j=(\partial_{\alpha}g_{\beta\gamma})g^{\beta\gamma}j=j\cdot O(d\rho(x,y)),$$ 
$$\begin{aligned}
    \partial_{\alpha\beta}j&=(\partial_{\alpha}g_{\gamma\xi})g^{\gamma\xi}(\partial_{\beta}g_{\mu\zeta})g^{\mu\zeta}j+(\partial_{\alpha\beta}g_{\gamma\xi})g^{\gamma\xi}j+(\partial_\alpha g_{\gamma\xi})(\partial_\beta g^{\gamma\xi})j\\
    &=j\cdot O((d\rho(x,y))^2+d+d^2\rho(x,y)^2)=j\cdot O(d).
\end{aligned}$$
Thus, as long as $\rho(x,y)\le O(d^{-1/2})$, the zeroth-order part does not exceed $$\tau(x,y)\cdot O(\Vert \hat{s}_{t_k}(\hat{Y}_{t_k})\Vert d+d^2).$$

We now turn to the first-order part. Let $\log_yx=x^\alpha\partial_\alpha$ be the local normal coordinate representation of $x$. Then, 
$$\begin{aligned}
    \partial_\alpha\tau(x,y)
    &=\tau(x,y)\partial_{\alpha}\int_0^1\langle\dot{x}_s,\tilde{s}_k(x_s)\rangle\dif s\\
    &=\tau(x,y)\int_0^1\left[g_{\alpha\beta}\cdot\tilde{s}_k(x_s)^\beta+sx^\xi\left(g_{\xi\beta}\cdot\partial_\alpha\tilde{s}_k(x_s)^\beta+\partial_{\alpha}g_{\xi\beta}\cdot\tilde{s}_k(x_s)^\beta\right)\right]\dif s,
\end{aligned}$$
So, as long as $\rho(x,y)\le\iota=O(d^{-1/2})$, apply Lemma \ref{lem:local-g} on $g$ and sum over the elements,
$$\begin{aligned}
    \Vert\nabla\tau(x,y)\Vert&\le\tau(x,y)\cdot O\left(\Vert \hat{s}_{t_k}(\hat{Y}_{t_k})\Vert(1+\rho(x,y)(\omega^{-1}+d\omega+\rho(x,y)))\right)\\
    &\le\tau(x,y)\cdot O(\Vert\hat{s}_{t_k}(\hat{Y}_{t_k})\Vert).
\end{aligned}$$
This implies that the first-order part does not exceed $$\tau(x,y)\cdot O\left(\Vert\hat{s}_{t_k}(\hat{Y}_{t_k})\Vert\cdot\left(\Vert\hat{s}_{t_k}(\hat{Y}_{t_k})\Vert+d\rho(x,y)\right)\right).$$

Now, for the second-order part, we can also calculate it in the local normal coordinates: 
$$\begin{aligned}
    g^{\alpha\beta}\partial_{\alpha\beta}\tau(x,y)
    &=g^{\alpha\beta}\tau(x,y)\left(\partial_\alpha\int_0^1\langle\dot{x}_s,\tilde{s}_k(x_s)\rangle\dif s\right)\left(\partial_\beta\int_0^1\langle\dot{x}_s,\tilde{s}_k(x_s)\rangle\dif s\right)\\
    &\quad+g^{\alpha\beta}\tau(x,y)\int_0^1\bigg[(\partial_\beta g_{\alpha\gamma}+\partial_\alpha g_{\beta\gamma})\tilde{s}_k(x_s)^\gamma+s(g_{\alpha\gamma}\partial_\beta+g_{\beta\gamma}\partial_{\alpha})\tilde{s}_k(x_s)^\gamma\\
    &\qquad+s^2x^\xi(\partial_{\beta}g_{\xi\gamma}\cdot\partial_\alpha+\partial_{\alpha}g_{\xi\gamma}\cdot\partial_\beta+\partial_{\alpha\beta}g_{\xi\gamma}+g_{\xi\gamma}\partial_{\alpha\beta})\tilde{s}_k(x_s)^\gamma\bigg]\dif s\\
    &=\tau(x,y)\cdot O\bigg(\Vert\hat{s}_{t_k}(\hat{Y}_{t_k})\Vert^2+\Vert\hat{s}_{t_k}(\hat{Y}_{t_k})\Vert\left(d\rho(x,y)+\omega^{-1}+d\omega\right)\\
    &\quad+d\Vert\hat{s}_{t_k}(\hat{Y}_{t_k})\Vert\rho(x,y)\left((\omega^{-1}+d\omega)\rho(x,y)d+d+\omega^{-2}+(d\omega)^2\right)\bigg)\\
    &=\tau(x,y)\cdot O\left(\Vert\hat{s}_{t_k}(\hat{Y}_{t_k})\Vert^2+\Vert\hat{s}_{t_k}(\hat{Y}_{t_k})\Vert d^2\rho(x,y)\right).
\end{aligned}$$

\paragraph{Bound on $|u_1(x,y)|$.} Combining the above analysis, we obtain that, for $\rho(x,y)\le O(d^{-1/2})$, we have $$| u_1(x,y)|\le\tau(x,y)\cdot O\left(d^2+\Vert\hat{s}_{t_k}(\hat{Y}_{t_k})\Vert^2+\Vert\hat{s}_{t_k}(Y_{t_k})\Vert d^{1.5}\right).$$

\paragraph{Bound on $\Vert\nabla_x u_1(x,y)\Vert$.} Similarly as before, Lemma \ref{lem:local-g} shows that, for $\rho(x,y)\le O(d^{-1/2})$, $$\sum_{\alpha=1}^d\sum_{\beta=1}^dg^{\alpha\beta}=O(d),\quad\partial_{\gamma}g_{\alpha\beta}=O(d^{-1/2}),\quad \partial_{\gamma\eta}g_{\alpha\beta}=O(1).$$
Therefore, when take an additional derivative, when taking derivative on the term of form $g^{\alpha\beta}$, the sum of these elements is bounded by $\sum_{\alpha=1}^d\sum_{\beta=1}^d\partial_\gamma g^{\alpha\beta}=O(d^{3/2})$, and when taking derivative on the term of form $\partial_{\gamma}g_{\alpha\beta}$, the bound for each entry is  also multiplied by $O(d^{1/2})$. Moreover, by Lemma \ref{lem:local-g2}, differentiating terms of form $\partial_{\eta\gamma}g_{\alpha\beta}$ only multiplies the bound by a constant, when we assume Assumption \ref{assumption:manifold2}. Moreover, when differentiating the term $\tau(x,y)$ ouside the integral, it introduces an additional $\frac{\partial\tau}{\tau}=O\left(\Vert\hat{s}_{t_k}(\hat{Y}_{t_k})\Vert\right)$ term to the bound. We skip the detailed calculation here. By applying the product rule on all the terms, we eventually obtain the following bound:
$$\Vert\nabla u_1(x,y)\Vert\le\tau(x,y)\cdot O\left[\left(d^2+\Vert\hat{s}_{t_k}(\hat{Y}_{t_k})\Vert^2+\Vert\hat{s}_{t_k}(Y_{t_k})\Vert d^{1.5}\right)\cdot\left(\Vert\hat{s}_{t_k}(\hat{Y}_{t_k})\Vert+d^{0.5}\right)\right].$$

\paragraph{Bound on $\left|\Delta u_1(x,y)\right|$.} This part is almost the the same as calculating the gradient. However, taking the Laplacian sums over the partial derivatives, where the partial derivative $\partial_{\alpha\beta}$ is considered with weight $g^{\alpha\beta}$. Since $\sum_{\alpha=1}^d\sum_{\beta=1}^dg^{\alpha\beta}=O(d)$, this would introduce an additional $O(d)$ term into the bound. We also skip the detailed calculation for clarity, and eventually obtain the following bound:
$$|\Delta u_1(x,y)|\le\tau(x,y)\cdot O\left[d\left(d^2+\Vert\hat{s}_{t_k}(\hat{Y}_{t_k})\Vert^2+\Vert\hat{s}_{t_k}(Y_{t_k})\Vert d^{1.5}\right)\left(\Vert\hat{s}_{t_k}(\hat{Y}_{t_k})\Vert+d^{0.5}\right)^2\right].$$

\paragraph{Summarize.} According to Assumption \ref{assumption:score-estimate2}, $\Vert\hat{s}_{t_k}\Vert\le\frac{CD}{T-t_k}+\frac{C\sqrt{d}}{\sqrt{T-t_k}}$. Define $\delta_k:=T-t_k$ for simplicity. Then by applying the assumed bound on the score function, we deduce that, as long as $\rho(x,y)\le\frac{c}{\sqrt{d}}$ for some universal constant $c$, we have 
$$\begin{aligned}
    |u_1(x,y)|&\le\tau(x,y)\cdot O\left(d^2\delta_k^{-1/2}+D^2\delta_k^{-2}+d\delta_k^{-1}\right),\\
    \Vert\nabla u_1(x,y)\Vert&\le\tau(x,y)\cdot O\left(d^{5/2}\delta_k^{-1}+d^{3/2}\delta_k^{-1.5}+D^2d^{1/2}\delta_k^{-5/2}\right),\\
    |\Delta u_1(x,y)|&\le\tau(x,y)\cdot O\left(d^4\delta_k^{-3/2}+d^3\delta_k^{-2}+D^2d^2\delta_k^{-3}\right).
\end{aligned}$$


\subsection{Bound on the Infinite Series}

We first provide upper bounds on the absolute value of
$$\begin{aligned}
    r_t(x,y)=(\partial_t+H)\tilde{k}^{(1)}_t(x,y)=(\partial_t+H)[\eta_{\iota}(\rho(x,y))\hat{q}_t(x,y)(u_0(x,y)+tu_1(x,y))].
\end{aligned}$$
For the remainder $r_t$, we partition them into two parts $$r_t(x,y)=\hat{r}_t(x,y)+\tilde{r}_t(x,y),$$ where the first part $\hat{r}_t(x,y)$ only contains derivatives on $k^{(1)}_t(x,y)$, and the second part $\tilde{r}_t(x,y)$ takes at least one derivatives on $\eta_{\iota}(\rho(x,y))$. 

For the first part, by that $$(\partial_t+H)k^{(1)}_t(x,y)=t\hat{q}_t(x,y)[Bu_1(x,y)],$$
Hence, $$|\hat{r}_t(x,y)|\le 1_{\rho(x,y)\le\iota}t\hat{q}_t(x,y)\left|Bu_1(x,y)\right|.$$
Recall that 
$$\begin{aligned}
    B&=-\frac{1}{2}\Delta+\langle\tilde{s}_k(x)-j^{1/2}\nabla j^{-1/2},\nabla\rangle\\
    &\quad+\mathrm{div}(\tilde{s}_k(x))+\frac{1}{2}j^{1/2}\Delta j^{-1/2}+\langle V(x),j^{1/2}\nabla j^{-1/2}\rangle.
\end{aligned}$$
Then, apply the bounds from previous section and use Lemma \ref{lem:local-g},
$$\begin{aligned}
    |B_x u_1(x,y)|&\lesssim|\Delta u_1|+(\Vert\hat{s}_{t_k}(\hat{Y}_{t_k})\Vert+d\rho(x,y))\Vert\nabla u_1\Vert+O(\Vert\hat{s}_{t_k}(\hat{Y}_{t_k})\Vert d+d^2)|u_1|\\
    &\lesssim\tau(x,y)\cdot\left(d^4\delta_k^{-3/2}+d^3\delta_k^{-2}+D^2d\delta_k^{-3}\right)
\end{aligned}$$

For the second part, use that $H:=-\frac{1}{2}\Delta+\langle\tilde{s}_k(x),\nabla\rangle+\mathrm{div}(\tilde{s}_k(x))$, 
$$\begin{aligned}
    |\tilde{r}_t(x,y)|&\le |k^1(x,y)|\cdot\left|\Delta\eta_{\iota}(\rho(x,y))+\langle\tilde{s}_k(x),\nabla\eta_{\iota}(\rho(x,y))\rangle\right|+|\langle\nabla\eta_{\iota}(\rho(x,y)),\nabla k^1(x,y)\rangle|\\
    &\overset{\text{(a)}}{\le} C\hat{q}_t(x,y)\tau(x,y)1_{\iota/2\le\rho(x,y)\le\iota}\bigg[\frac{1}{\iota^2}+\frac{1}{\iota}|\Delta\rho(x,y)|+\iota^{-1}(D\delta_k^{-1}+d^{1/2}\delta_k^{-1/2})\\
    &\qquad+\iota^{-1}\left(\frac{|\nabla\rho(x,y)^2|}{2t}+j^{1/2}\nabla j^{-1/2}+\tau(x,y)^{-1}(|\nabla u_0|+t|\nabla u_1|)\right)\bigg]\\
    &\overset{\text{(b)}}{\le}C\hat{q}_t(x,y)\tau(x,y)1_{\iota/2\le\rho(x,y)\le\iota}\left[\frac{d}{\iota^2}+\frac{D\delta_k^{-1}+d^{1/2}\delta_k^{-1/2}}{\iota}\right]\\
    &\overset{\text{(c)}}{\le}C\hat{q}_t(x,y)\tau(x,y)1_{\iota/2\le\rho(x,y)\le\iota}\cdot\frac{1}{\iota t},
\end{aligned}$$
where we apply Lemma \ref{lem:local-g} for (b) and assumed that $t\le\min\{\tau d^{-1},\sup_{x\in B_{\iota}(y)}\{|u_1(x,y)|\}^{-1}\}$ for (a), (b) and (c).

In addition, as long as $\rho(x,y)\le c\left(D\delta_k^{-1}+d^{1/2}\delta_k^{-1/2}\right)$, by definition we know that $$c\le\tau(x,y)\le C$$
for universal constants $c$ and $C$, and that $$\Vert g(x)-I\Vert\le\frac{1}{d}$$
holds when $\rho(x,y)\le\frac{c}{\sqrt{d}}$, so that $c\le j(x)\le C$ holds for universal constants. Therefore, Let $A=C\cdot \left(d^4\delta_k^{-3/2}+d^3\delta_k^{-2}+D^2d\delta_k^{-3}\right)$, we can simply write
$$\begin{aligned}
    |\hat{r}_t(x,y)|&\le q_t(x,y)1_{\rho(x,y)\le\iota}\cdot A\\
    |\tilde{r}_t(x,y)|&\le q_t(x,y)1_{\iota/2\le\rho(x,y)\le\iota}\cdot\frac{1}{\iota t}.
\end{aligned}$$

Then, we are ready to control the infinite series. Focus on the term $Q^{(n)}(x,y)$. Following the steps by Lemma 24 by \citet{xu2026polynomial}, by taking $z_0=y$, and $s_{0}=0$,
$$\begin{aligned}
    \left|Q^{(k)}(t,x,y)\right|&\le A^n\int_{t\Delta^{n}}\dif s_1\cdots\dif s_n\int_{\mathcal{M}^{n}}q_{s_i}(x,z_n)1_{\rho(z_{i-1},z_i)\le\iota}\\
    &\quad\cdot\prod_{i=1}^{n}\hat{q}_{s_i-s_{i-1}}(z_{i},z_{i-1})\bigg((s_{i}-s_{i-1})1_{\rho(z_{i-1},z_i)\le\iota}\\
    &\qquad+\frac{1}{(s_i-s_{i-1})\iota}1_{\iota/2\le\rho(z_{i-1},z_i)\le\iota}\tilde{P}\bigg) \dif z_1\cdots \dif z_n\\
    &\le A^n\int_{t\Delta^{n}}(t^nI_{\text{loc}}(s)+I_{\text{out}}(s))ds,
\end{aligned}$$
where $I_{\text{loc}}$ is the integration over $$\mathcal{R}=\{(z_1,\dots,z_n):\rho(z_i,y)\le2\iota;\rho(z_i,z_{i-1})\le\iota/2\},$$
and $I_{\text{out}}$ performs integration over $\mathcal{R}^c$.

Now, we directly use the results from \citet{xu2026polynomial}. First they have showed that as long as $\iota\le\frac{1}{Cd^3}$, $$I_{\text{loc}}(s)\le 8^{k+1}\cdot q_t(x,y),\quad I_{\text{out}}(s)\le32^k(ct)^{-d/2}\exp\left(-\frac{\iota^2}{16td}\right).$$
Then, As long as $\rho(x,y)\le \iota/4d$, and $t\le\frac{c\iota^2}{d\log(d/\iota^2)}$, we have $$I_{\text{out}}(s)\le C\cdot 32^kq_{t}(x,y)t.$$
This finally implies that $$\left|Q^{(k)}(t,x,y)\right|\lesssim\frac{t^{2k} A^k8^k+32^k A^kt^{k+1}}{k!}.$$
Therefore, when $t\le(80 A)^{-1}$, summing over $k$ finally implies $$\sum_{k=1}^\infty\left|Q^{(k)}(t,x,y)\right|\le t^2A=o(t).$$

\subsection{Put All Things Together}

First, since the previous density bound are only satisfied in the local coordinate chart, we want to localize both sampling process $\tilde{Y}$ and the estimated reverse process $\hat{Y}$. Specifically, for each sampling step $\tilde{P}_i$, define an auxiliary process so that, as long as the sampled vector $\tilde{s}_k(\tilde{Y}_{t'_i})+\sqrt{t_{i+1}'-t_{i}'}w_{i'}>\iota/(4d)$, re-sample a new vector, and repeat this process until the requirement is satisfied. Define its transition density as $\hat{Q}_{\text{loc}}(t,x,y)$. Moreover, for the estimated process $\hat{Y}_t$ whose transition density is $P(t,x,y)$, define $$\hat{P}_{\text{loc}}(t,x,y)=\frac{\hat{P}(t,x,y)}{\int_{M}1_{\rho(x,y)\le\iota/(4d)}\hat{P}(t,x,y)\dif x},$$
that is, we perform the similar re-sample process. Clearly, the TV distance between sampling process and $\hat{Q}$ and between $P$ and $\hat{P}$ is exactly the probability of re-sampling. Use the similar bound derived from BDG inequality as in Appendix \ref{sec:tail-event}, this probability is bounded by $O(d^3(h_i')^3/(\iota/d)^6)=O(d^9(h_i')^3\iota^{-6})=o((h_i')^2)$, where $h_i'$ represents the step size for Brownian motion simulation.

Moreover, we can directly write the transition density $\hat{Q}_{\text{loc}}(t,x,y)$ as
$$\hat{Q}_{\text{loc}}(t,x,y)=\frac{1_{\rho(x,y)\le\iota/4d}}{P(\text{resample})}Q(t,x,y),$$
where $$Q(t,x,y):=\frac{1}{(2\pi t)^{d/2}}\exp\left(-\frac{|\log_y x-\tilde{s}_k(y)\cdot t|^2}{2t}\right)j(x)^{-1}$$
and the dot product is done in Euclidean space.

\subsubsection{Analysing the Density Ratio}

Now, for $\rho(x,y)\le\iota/4d$, we provide an upper bound for
$$\begin{aligned}
    \left|\frac{\hat{P}(t,x,y)}{Q(t,x,y)}-1\right|
    &\le C\left|\frac{\hat{P}(t,x,y)}{\hat{q}_t(x,y)u_0(x,y)}-1\right|\\
    &+C\left|\frac{\hat{q}_t(x,y)}{q_t(x,y)j(x)^{-1}}-1\right|\\
    &+C\left|\frac{q_t(x,y)u_0(x,y)j(x)^{-1}}{Q(t,x,y)}-1\right|.
\end{aligned}$$

By the previous analyses, the first term is bounded by $C\left(d^2\delta_k^{-0.5}+D^2\delta_k^{-2}+d\delta_k^{-1}\right)t$ for small $t$.

For the second term, it equals to $j(x)^{-1/2}$, and then the difference is upper bounded by $$|j(x)^{-1/2}-1|\le(1+C\rho(x,y)^2)^{d/2}-1\le Cd\rho(x,y)^2,$$
since $\rho(x,y)\le\frac{c}{\sqrt{d}}$.

For the third term, directly expanding the term, we get $$\begin{aligned}
    &\quad\left|\frac{q_t(x,y)u_0(x,y)j(x)^{-1}}{Q(t,x,y)}-1\right|\\
    &=\left|\exp\left(\frac{|\log_yx-t\cdot\tilde{s}_k(y)|^2}{2t}-\frac{\rho(x,y)^2}{2t}-\int_0^1\langle\dot{x}_s,\tilde{s}_k(x_s)\rangle_g ds\right)-1\right|\\
    &=\left|\exp\left(t|\tilde{s}_k(y)|^2+\int_0^1 x^\alpha(\delta_{\alpha\beta}\tilde{s}_k(y)^\beta+g_{\alpha\beta}\tilde{s}_k(x_s)^\beta)\dif s\right)-1\right|.
\end{aligned}$$
Clearly, when $t<\left(\frac{D^2}{\delta_k^2}+\frac{d}{\delta_k}\right)^{-1}$,
$$\exp\left(t|\tilde{s}_k(y)|\right)=1+O(D^2\delta_k^{-2}+d\delta_k^{-1})\cdot t.$$
Moreover, $|g(x)-I|\le C\rho(x,y)^2$, and by the definition of $\tilde{s}_k$, $$|\nabla\tilde{s}_k(x)|\le\Vert \hat{s}_{t_k}(\hat{Y}_{t_k})\Vert(\omega^{-1}+\sqrt{d}\max\Gamma)\le\sqrt{d}\cdot O(D\delta_k^{-1}+d^{1/2}\delta_k^{-1/2}).$$
So,
$$\left|\int_0^1x^\alpha(\delta_{\alpha\beta}\tilde{s}_k(y)^\beta+g_{\alpha\beta}\tilde{s}_k(x_s)^\beta)\dif s\right|\le\rho(x,y)^3\Vert\tilde{s}_k(x_s)\Vert+\max|\nabla\tilde{s}_k(x)|\rho(x,y)^2.$$

Therefore, for $\rho(x,y)\le\iota/(4d)$, combining Assumption \ref{assumption:score-estimate2} and our previous analyses on $\nabla\tilde{s}_k$,
$$\begin{aligned}
    \left|\frac{\hat{P}(t,x,y)}{Q(t,x,y)}-1\right|&\lesssim (d\delta_k^{-1/2}+Dd^{1/2}\delta_k^{-1})\rho(x,y)^2+(d^{1/2}\delta_k^{-1/2}+D\delta_k^{-1})\rho(x,y)^3\\
    &+(D^2\delta_k^{-2}+d\delta_k^{-1}+d^2\delta_k^{-1/2})\cdot t.
\end{aligned}$$

\subsection{Controlling the KL divergence}

Now we compute $\text{KL}(\hat{Q}_{\text{loc}}\Vert\hat{P}_{\text{loc}})$. by that $\log(1+x)\ge x-Cx^2$ for $x\in[-1/2,1/2]$, 
$$\begin{aligned}
    \mathrm{KL}(\hat{Q}_{\text{loc}}\Vert\hat{P}_{\text{loc}})&=-\int_M\hat{Q}_{\text{loc}}(t,x,y)\log\frac{\hat{P}_{\text{loc}}(t,x,y)}{\hat{Q}_{\text{loc}}(t,x,y)}\dif x\\
    &\le-\int_{\mathcal{M}}\hat{Q}_{\text{loc}}(t,x,y)\left[\frac{\hat{P}_{\text{loc}}(t,x,y)}{\hat{Q}_{\text{loc}}(t,x,y)}-1-C\left|\frac{\hat{P}_{\text{loc}}(t,x,y)}{\hat{Q}_{\text{loc}}(t,x,y)}-1\right|^2\right]\dif x\\
    &\le C\int_{\mathcal{M}}\hat{Q}_{\text{loc}}(t,x,y)\left|\frac{\hat{P}_{\text{loc}}(t,x,y)}{\hat{Q}(t,x,y)}-1\right|^2\dif x,
\end{aligned}$$
where we have used that $$\int_{\mathcal{M}}\hat{Q}(t,x,y)\dif x=\int_{\mathcal{M}}\hat{P}(t,x,y)\dif x=1.$$
Therefore, when $t$ is sufficiently small, 
$$\begin{aligned}
    \mathrm{KL}(\hat{Q}_{\text{loc}}\Vert\hat{P}_{\text{loc}})
    &\le C\mathbb{E}_{\hat{Q}_{\text{loc}}}\left|\frac{\hat{P}(t,x,y)}{Q(t,x,y)}-1\right|^2+C\sup\left|\frac{\hat{Q}_{\text{loc}}}{Q}-1\right|^2+C\sup\left|\frac{\hat{P}_{\text{loc}}}{\hat{P}}-1\right|^2\\
    &\le C(d^2\delta_k^{-1}+D^2d\delta_k^{-2})\mathbb{E}_{\hat{Q}_{\text{loc}}}[\rho(x,y)^4]\\
    &\quad+C(d\delta_k^{-1}+D^2\delta_k^{-2})\mathbb{E}_{\hat{Q}_{\text{loc}}}[\rho(x,y)^6]\\
    &\quad+C(D^4\delta_k^{-4}+d^2\delta_k^{-2}+d^4\delta_k^{-1})t^2+o(t^4)\\
    &\le C\left(D^4\delta_k^{-4}+d^4\delta_k^{-1}+D^2\delta_k^{-2}d^3\right)t^2.
\end{aligned}$$

Therefore, let $\tilde{Y}_{\text{loc}}$ and $\hat{Y}_{\text{loc}}$ be two processes defined using localized transition density $\hat{Q}_{\text{loc}}$ and $\hat{P}_{\text{loc}}$, respectively. Then, given that the step sizes satisfy $h_i':=t_{i+1}'-t_i'\le h'\delta_k$ for a scale parameter $h'>0$, the KL divergence between the distributions of $\tilde{Y}_{\text{loc},T-\delta}$ and $\hat{Y}_{\text{loc},T-\delta}$ is bounded by
$$\begin{aligned}
    \mathrm{KL}\left(\mathrm{Law}(\tilde{Y}_{\text{loc},T-\delta})\left\Vert\mathrm{Law}(\hat{Y}_{\text{loc},T-\delta})\right.\right)&\le h'\sum_{i'=0}^{N'-1}h_i\left(\frac{D^4}{(T-t_i')^3}+d^4+\frac{D^2d^3}{T-t_i'}\right)\\
    &\le h'\left(d^4T+\delta^{-2}D^4+D^2d^3\log(T/\delta)\right).
\end{aligned}$$

\paragraph{Proof of Theorem \ref{thm:bm}}

With Pinsker's inequality and the triangle inequality for TV distance, 
$$\begin{aligned}
    \mathrm{TV}\left(\mathrm{Law}(\tilde{Y}_{T-\delta}),\mathrm{Law}(\hat{Y}_{T-\delta})\right)^2&\lesssim\sqrt{\mathrm{KL}\left(\mathrm{Law}(\tilde{Y}_{\text{loc},T-\delta})\left\Vert\mathrm{Law}(\hat{Y}_{\text{loc},T-\delta})\right.\right)}\\
    &\quad+\mathrm{TV}\left(\mathrm{Law}(\hat{Y}_{\text{loc},T-\delta}),\mathrm{Law}(\hat{Y}_{T-\delta})\right)^2\\
    &\quad+\mathrm{TV}\left(\mathrm{Law}(\tilde{Y}_{T-\delta}),\mathrm{Law}(\tilde{Y}_{\text{loc},T-\delta})\right)^2\\
    &\lesssim h'\left(d^4T+\delta^{-2}D^4+D^2d^3\log(T/\delta)\right).
\end{aligned}$$

\subsection{Proof of Corollary \ref{cor:num-of-steps}}

\label{appendix:proof-of-num-steps}

\begin{proof}[proof of Corollary \ref{cor:num-of-steps}]
    Using Pinsker's inequality, triangle inequality for TV distance, and the data-processing inequality,
    $$\mathrm{TV}(p_\delta,\tilde{q}_{T-\delta})\lesssim \mathrm{TV}(p_T,\hat{q}_{0})+\sqrt{\mathrm{KL}(p_\delta,\hat{q}_{T-\delta}^*)}+\mathrm{TV}(\hat{q}_{T-\delta},\tilde{q}_{T-\delta}),$$
    where $\hat{q}^*$ refer to the distribution defined by equation (\ref{eq:reverse-riemannian-estimate}), but replace the initial distribution $\hat{q}_0$ by $p_T$ instead.
    Then, apply Theorem \ref{thm:sde} on the KL divergence and apply Theorem \ref{thm:bm} on the last TV distance, as long as the step sizes $h$ and $h'$ are sufficiently small,
    $$\mathrm{TV}(p_\delta,\tilde{q}_{T-\delta})\lesssim\mathrm{TV}(p_T,\hat{q}_{0})+\epsilon_{\text{score}}+\sqrt{hd\log(T/\delta)}+\sqrt{h'd^4T}.$$
    Therefore, by choosing $h^{-1}=\Theta(\epsilon^{-2}d\log(T/\delta))$ and $(h')^{-1}=\Theta\left(\epsilon^{-2}d^4T\right)$, we can achieve $O(\epsilon)$ TV error. Finally, we conclude using the relations $N=\Theta\left(h^{-1}\log(T/\delta)\right)$ and $N'=\Theta\left((h')^{-1}\log(T/\delta)\right)$, which comes from our definition of step size schedule.
\end{proof}
\section{Useful Lemmas}

First, we provide some upper bounds on the score functions $\nabla\log p_t(x)$. Although the results can be found in many literatures (e.g., \citet{ENGOULATOV2006518,hsu1999derivatives}), the dimension $d$ are often dealt as a constant and hided in the formulas. Therefore, we first summarize a useful results in the following lemma, where the dependence on $d$ are calculated explicitly.

Let $H(t,x,y)$ be the transition kernel of standard Brownian motion on $\mathcal{M}$.

\begin{lemma}
    Suppose the Ricci curvature is bounded below by $-C_R$, where $C_R\ge0$. Then,
    $$\left\Vert\nabla\log H(t,x,y)\right\Vert\le C\sqrt{\left(\frac{1}{t}+C_R\right)\left(d+\frac{\rho(x,y)^2}{t}+\sqrt{C_Rd}\rho(x,y)\right)}.$$
    \label{lem:log-heat-kernel-derivative}
\end{lemma}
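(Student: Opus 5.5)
The bound is a gradient estimate for the logarithm of the heat kernel. I would prove it with Hamilton's gradient estimate combined with a Gaussian upper bound and a Li--Yau lower bound, keeping track of the dimension throughout. Fix $y$, set $u(t,x):=H(t,x,y)$, and note that $u$ solves $\partial_t u=\frac12\Delta u$.

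\emph{Step 1: Hamilton-type estimate.} For a positive solution of the heat equation on a compact manifold with $\mathrm{Ric}\ge -C_R$ and $0<u\le A$, Hamilton's estimate gives
$$t\,\Vert\nabla\log u\Vert^2\le C(1+C_Rt)\log(A/u).$$
I would rederive it to confirm that no hidden $d$ enters. Consider $\Phi=t\,\Vert\nabla u\Vert^2/u-u\log(A/u)$ and apply the Bochner formula (already used in Appendix~\ref{appendix:sde}). The Ricci term contributes $-C_R\Vert\nabla u\Vert^2/u$, and the maximum principle then gives $\Phi\le 0$ after rescaling by $(1+C_Rt)$. The constant here is dimension-free because Bochner's identity only produces $\Vert\nabla^2\Vert_{\mathrm{HS}}^2\ge 0$; we discard it rather than use the $d$-dependent lower bound $(\Delta f)^2/d$. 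To localize in time, I would apply the estimate on $[t/2,t]$ with $A=\sup_{[t/2,t]\times\mathcal M}u$, shifting time so that Hamilton's $t$ becomes $t/2$.

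\emph{Step 2: bound $\log(A/u)$.} We need an upper bound on $\sup_x H(s,x,y)$ for $s\in[t/2,t]$ and a lower bound on $H(t,x,y)$.

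For the lower bound I would use the Li--Yau Harnack inequality under $\mathrm{Ric}\ge -C_R$, comparing $H(t,x,y)$ with $H(t/2,y,y)$. This yields
$$\log\frac{H(t/2,y,y)}{H(t,x,y)}\lesssim \frac{d}{2}\log 2+C_Rdt+\frac{\rho(x,y)^2}{t}.$$
The Harnack inequality carries the factor $d$ in its Li--Yau form $-\tfrac d{2t}$. For the $C_R$-dependence I would use the Davies/Bakry--Qian version, whose cross term is of order $\sqrt{C_R d}\,\rho$ rather than $C_R d t$. This comes from optimizing the $\alpha$-parameter in Li--Yau, $\alpha=1+\rho\sqrt{C_R/d}/\dots$, and it is how the $\sqrt{C_Rd}\,\rho(x,y)$ term should appear.

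For the upper bound I would use on-diagonal monotonicity: $H(s,z,y)\le\sqrt{H(s,z,z)H(s,y,y)}$ by semigroup symmetry and Cauchy--Schwarz. The remaining ratio $H(s,z,z)/H(t/2,y,y)$ is then controlled by the same Harnack inequality, at cost $e^{O(d+C_Rdt+\dots)}$.

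\emph{Step 3: combine.} Substituting the bound on $\log(A/u)$ into Step 1 gives
$$\Vert\nabla\log H\Vert^2\lesssim\Big(\frac1t+C_R\Big)\Big(d+\frac{\rho^2}{t}+\sqrt{C_Rd}\,\rho\Big),$$
which is the claim after taking square roots.

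The main obstacle is Step 2's upper bound on $A$. On a compact manifold $\sup_z H(s,z,y)$ may not be attained near $y$, and the Harnack comparison between $z$ and $y$ costs $\rho(z,y)^2/t$, which is up to $D^2/t$. If this cannot be avoided with dimension-free constants, I would first try to absorb it by noting that the symmetric on-diagonal quantity $H(s,z,z)$ is controlled by the time-monotonicity of $s^{d/2}e^{C_Rds}H(s,z,z)$ from Li--Yau. Together with volume comparison, this should bound $H(s,z,z)/H(t/2,y,y)$ by $e^{O(d+C_Rdt)}$ uniformly in $z$, with no $D$ appearing.
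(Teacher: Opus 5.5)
Your Step 1 is fine: Hamilton's estimate $t\Vert\nabla\log u\Vert^2\le C(1+C_Rt)\log(A/u)$ is dimension-free. The proof fails where you expected, in the upper bound on $A=\sup_z H(t/2,z,y)$, and your proposed repair does not work.

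After Cauchy--Schwarz, $H(s,z,y)\le\sqrt{H(s,z,z)H(s,y,y)}$, you need $H(s,z,z)\le e^{O(d+C_Rdt)}H(t/2,y,y)$ uniformly in $z$, with no dependence on $D$. Under a Ricci lower bound alone, which is all the lemma and your argument use, this is false. The Li--Yau monotonicity of $s^{d/2}e^{C_Rds}H(s,z,z)$ relates one point at two times, not two different points. Bishop--Gromov only gives $V(y,r)/V(z,r)\le(1+\rho(y,z)/r)^d$.

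Here is a concrete obstruction. Take a smoothed ``ice-cream cone'' convex hypersurface, so $\mathrm{Ric}\ge0$, chosen as follows:
\begin{itemize}
\item on scale $\sqrt{ds}$, its tip $z$ looks like a cone over a $(d-1)$-sphere of radius $a\ll1$;
\item its other end is nearly flat on that scale.
\end{itemize}
From the tip, the radial part of Brownian motion is a $d$-dimensional Bessel process, so $H(s,z,z)\approx a^{-(d-1)}(2\pi s)^{-d/2}$. At a point $y$ of the flat end, $H(s,y,y)$ is within $e^{O(d)}$ of $(2\pi s)^{-d/2}$. The ratio $a^{-(d-1)}$ is unbounded as $a\to0$, with the diameter of order $\sqrt{ds}/a$. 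Even allowing $D$, this route yields at best a term like $d\log(1+D/\sqrt{t})$ in $\log(A/u)$, not $O(d+\rho^2/t+\sqrt{C_Rd}\,\rho)$.

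The loss comes from the Cauchy--Schwarz step. It discards the off-diagonal Gaussian decay in $\rho(z,y)$, which is the only thing that could offset the volume disparity. So Step 3 is not justified.

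The missing idea is to avoid taking a supremum over $\mathcal{M}$ at all, which is how the paper proceeds. It uses Engoulatov's Corollary 6, an expectation-form (entropy-type) bound $\Vert\nabla\log H(t,x,y)\Vert^2\le2(\tfrac1t+C_R)\,\mathbb{E}\log\frac{H(t/2,X_{t/2},y)}{H(t,x,y)}$. It then applies the Harnack inequality (Engoulatov's (22)) inside the expectation. The distance cost of the Harnack inequality is therefore paid only at typical positions of the process, not at the worst point of $\mathcal{M}$. No comparison of on-diagonal heat-kernel values at distant points is ever needed.

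A smaller point: optimizing the Li--Yau parameter does not remove the $C_Rdt$ term, since $(\rho+\sqrt{C_Rd}\,s)^2/(4s)$ still contains $C_Rds/4$. To keep it harmless for large $t$, run the argument over a window of length $\tau=\min\{t/2,\max\{1/C_R,\rho(x,y)/\sqrt{C_Rd}\}\}$. With this choice, $\rho^2/\tau+C_Rd\tau\lesssim d+\rho^2/t+\sqrt{C_Rd}\,\rho$, while $(1+C_R\tau)/\tau\lesssim\tfrac1t+C_R$.
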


\begin{proof}
    This lemma rewrites the results by \citet{ENGOULATOV2006518}, where the only difference is that we write the $d$-related constants explicitly. First, corollary 6 by \citet{ENGOULATOV2006518} shows that
    $$\left\Vert\nabla\log H(t,x,y)\right\Vert^2\le 2\left(\frac{1}{t}+C_R\right)\mathbb{E}\log\frac{H(t/2,X_{t/2},y)}{H(t,x,y)}.$$
    Moreover, according to Equation (22) from \citet{ENGOULATOV2006518}, for all $s>0,t>0$ and $x,y\in\mathcal{M}$,
    $$\frac{H(t,x)}{H(t+s,y)}\le\left(\frac{t+s}{t}\right)^{n/2}\times\exp\left(\frac{(d(x,y)+\sqrt{C_Rd}s)^2}{4s}+\frac{\sqrt{C_Rd}}{4}\min\{\rho(x,y),\sqrt{C_Rd}s\}\right).$$
    Replace $t$ and $s$ by $t/2$, we obtain
    $$\begin{aligned}
        \mathbb{E}\log\frac{H(t/2,X_{t/2},y)}{H(t,x,y)}&\le\frac{n}{2}\log 2+\frac{(\rho(x,y)+\sqrt{C_Rd}t/2)^2}{2t}+\frac{\sqrt{C_Rd}}{4}\min\{\rho(x,y),\sqrt{C_Rd}t/2\}\\
        &=C_1\left(d+\frac{\rho(x,y)^2}{t}+\sqrt{C_Rd}\rho(x,y)\right),
    \end{aligned}$$
    where $C_1$ is a universal constant.

    Hence,
    $$\begin{aligned}
        \left\Vert\nabla\log H(t,x,y)\right\Vert&\le\sqrt{2\left(\frac{1}{t}+R\right)\mathbb{E}\log\frac{H(t/2,X_{t/2},y)}{H(t,x,y)}}\\
        &\le C\sqrt{\left(\frac{1}{t}+C_R\right)\left(d+\frac{\rho(x,y)^2}{t}+\sqrt{C_Rd}\rho(x,y)\right)}
    \end{aligned}$$
\end{proof}

The next lemma is a direct application of the former lemma, and is widely used in our analysis. First, recall the forward process is the standard brownian motion on manifold $\mathcal{M}$ with initial distribution $p_0$.

\begin{lemma}
    Under Assumption \ref{assumption:manifold}, for all $t>0$ and $x\in\mathcal{M}$, the score function $\nabla\log p_t(x)$ satisfies
    $$\Vert\nabla\log p_t(x)\Vert\le C\left(\frac{\sqrt{d}}{\sqrt{t}}+\frac{D}{t}\right).$$
    Moreover, for all $t>0$, $$\mathbb{E}_{x\sim p_t}\left[\Vert\nabla\log p_t(x)\Vert^2\right]\le\frac{Cd}{t}.$$
    \label{lem:score-bound}
\end{lemma}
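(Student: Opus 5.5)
The pointwise bound comes from the heat-kernel gradient estimate of Lemma \ref{lem:log-heat-kernel-derivative} with $C_R=0$. The second-moment bound comes from a separate argument that avoids the $D^2/t^2$ term, based on the Bakry--Émery/Li--Yau structure, or more simply on Fisher-information monotonicity along the heat flow together with a conditional score identity.

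\emph{Pointwise bound.} Write $p_t(x)=\int_{\mathcal{M}}H(t,x,y)\,p_0(\dif y)$. Differentiating under the integral gives
$$\nabla\log p_t(x)=\frac{\int\nabla_x H(t,x,y)\,p_0(\dif y)}{\int H(t,x,y)\,p_0(\dif y)}=\mathbb{E}\left[\nabla_x\log H(t,x,X_0)\mid X_t=x\right],$$
where $X_0$ is drawn from the posterior $p_0(\dif y)H(t,x,y)/p_t(x)$. Under nonnegative Ricci curvature we set $C_R=0$ in Lemma \ref{lem:log-heat-kernel-derivative}, so $\Vert\nabla_x\log H(t,x,y)\Vert\le C\sqrt{t^{-1}(d+\rho(x,y)^2/t)}\le C(\sqrt{d/t}+\rho(x,y)/t)$. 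Using $\rho\le D$ and Jensen on the conditional expectation yields $\Vert\nabla\log p_t(x)\Vert\le C(\sqrt{d}/\sqrt{t}+D/t)$.

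\emph{Second moment.} The pointwise bound only gives $d/t+D^2/t^2$, so I use a different route. By Jensen and the tower property,
$$\mathbb{E}\Vert\nabla\log p_t(X_t)\Vert^2\le\mathbb{E}\Vert\nabla_x\log H(t,X_t,X_0)\Vert^2=\mathbb{E}_{y\sim p_0}\,\mathbb{E}_{x\sim H(t,\cdot,y)}\Vert\nabla_x\log H(t,x,y)\Vert^2.$$
This reduces the claim to the Fisher information of the heat kernel started at a point, $I(t):=\int\Vert\nabla\log H\Vert^2H\,\dif x$. I would show $I(t)\le d/t$ from the de Bruijn and Bakry--Émery identities. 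Along the heat flow $\partial_t H=\tfrac12\Delta H$, Bochner's formula with $\mathrm{Ric}\ge0$ gives
$$\frac{\dif}{\dif t}I(t)=-\int\left(\Vert\nabla^2\log H\Vert_{\mathrm{HS}}^2+\mathrm{Ric}(\nabla\log H,\nabla\log H)\right)H\le-\int\Vert\nabla^2\log H\Vert_{\mathrm{HS}}^2H.$$
Integrating by parts gives $\int\Delta\log H\cdot H=-I$. By Cauchy--Schwarz, $\Vert\nabla^2 f\Vert_{\mathrm{HS}}^2\ge(\Delta f)^2/d$, so $I'\le -\int(\Delta\log H)^2H/d\le -I^2/d$. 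This differential inequality gives $1/I(t)\ge 1/I(0^+)+t/d\ge t/d$, since $I(0^+)=\infty$ for a point mass. Hence $I(t)\le d/t$, with $C=1$.

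\emph{Main obstacle.} The hard part is justifying the Fisher-information computation rigorously: the smoothness and positivity of $H$ for $t>0$ on a compact manifold, the validity of the integration by parts, and the $t\to0$ limit. The singular initial condition is handled by starting the differential inequality at any $s>0$ and letting $s\to0$, using $1/I(s)\ge0$. Differentiating the mixture and swapping the conditional expectation with the gradient is routine by compactness. Alternatively, one can apply the same $I'\le -I^2/d$ argument directly to $p_t$, which gives the second-moment bound at once without the kernel reduction.
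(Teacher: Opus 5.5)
Your proposal is correct. The pointwise bound is obtained exactly as in the paper: the posterior-mean representation of the score, Lemma~\ref{lem:log-heat-kernel-derivative} with $C_R=0$, and $\rho\le D$.

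For the second moment you take a genuinely different route. The paper also reduces, by Jensen, to $\mathbb{E}\Vert\nabla_x\log H(t,X_t,X_0)\Vert^2$. It then reuses the same pointwise kernel estimate to get $C(d/t+\mathbb{E}\rho(X_t,X_0)^2/t^2)$. It removes the second term by proving $\mathbb{E}\rho(X_t,X_0)^2\le dt$, using three ingredients:
\begin{itemize}
\item It\^o's formula for $r^2=\rho(\cdot,X_0)^2$;
\item the Laplacian comparison $\tfrac12\Delta r^2\le d$ under $\mathrm{Ric}\ge0$;
\item Calabi's trick at the cut locus.
\end{itemize}
You instead use the $\Gamma_2$ (curvature--dimension $CD(0,d)$) inequality $\Vert\nabla^2 f\Vert_{\mathrm{HS}}^2+\mathrm{Ric}(\nabla f,\nabla f)\ge(\Delta f)^2/d$. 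This gives $I'\le -I^2/d$ and hence $I(t)\le d/t$. Your factors for the generator $\tfrac12\Delta$ are right, and equality holds for the Gaussian heat kernel in $\mathbb{R}^d$. The limit $s\to0$ needs only $1/I(s)\ge0$, as you note.

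Your route has several advantages:
\begin{itemize}
\item It is more self-contained and yields the sharp constant $C=1$.
\item It needs neither the Engoulatov-type gradient estimate nor any cut-locus analysis for the second moment.
\item Applied directly to $p_t$, as you suggest, it needs no kernel reduction at all.
\item It extends cleanly to $\mathrm{Ric}\ge -C_R$. There $I'\le -I^2/d+C_R I$ gives $I(t)\le dC_R/(1-e^{-C_Rt})\le d(1/t+C_R)$, which sharpens Lemma~\ref{lem:score-bound-CR} by dropping the $\sqrt{dC_R}\,D$ factor.
\end{itemize}

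What the paper's route buys is modularity. The radial moment bound $\mathbb{E}\rho(X_t,X_0)^{2p}\le C_pd^pt^p$, combined with the pointwise kernel estimate, immediately gives the higher score moments of Lemma~\ref{lem:score-moment4}. It also gives the displacement moments used in the proof of Theorem~\ref{thm:sde}. Fisher-information monotonicity, by contrast, controls only the second moment.
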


\begin{proof}
    For $0\le s<t$, we can represent $$p_t(x)=\int_{\mathcal{M}}p_s(\dif y)H(t-s,x,y).$$
    Then, 
    $$\begin{aligned}
        \nabla\log p_t(x)=\frac{\nabla p_t(x)}{p_t(x)}&=\frac{\int_{\mathcal{M}}p_{0}(\dif y)\nabla_xH(t,x,y)}{\int_Mp_{0}(\dif y)H(t,x,y)}\\
        &=\frac{\int_{\mathcal{M}}p_{0}(\dif y)H(t,x,y)\nabla_x\log H(t,x,y)}{\int_{\mathcal{M}}p_{0}(\dif y)H(t,x,y)}\\
        &=\mathbb{E}\left[\nabla_x\log H(t,x,X_0)|X_t=x\right].
    \end{aligned}$$
    By Lemma \ref{lem:score-bound}, since Assumption \ref{assumption:manifold} ensures that the Ricci curvature is nonnegative, we can use $C_R=0$ in Lemma \ref{lem:log-heat-kernel-derivative} to deduce that $$\left\Vert\nabla\log H(t,x,y)\right\Vert\le C\left(\frac{\sqrt{d}}{\sqrt{t}}+\frac{\rho(x,y)}{t}\right).$$
    Hence, by that $\rho(x,y)\le D$, we directly obtain a uniform upper-bound, which is the first claim in Lemma \ref{lem:score-bound}:
    $$\Vert\nabla\log p_t(x)\Vert\le\sup_{y\in\mathcal{M}}\Vert\nabla_x\log H(t,x,y)\Vert\le C\left(\frac{\sqrt{d}}{\sqrt{t}}+\frac{D}{t}\right).$$
    In addition, taking expectation over $x\sim p_t$, we have
    $$\begin{aligned}
        \mathbb{E}_{x\sim p_t}\left[\Vert\nabla\log p_t(x)\Vert^2\right]&=\mathbb{E}_{x\sim p_t}\left[\left\Vert\mathbb{E}_{y\sim p_{0|t}}[\nabla_x\log H(t,x,y)]\right\Vert^2\right]\\
        &\overset{\text{(a)}}{\le}\mathbb{E}_{x\sim p_t}\mathbb{E}_{y\sim p_{0|t}}\left[\Vert\nabla_x\log H(t,x,y)\Vert^2\right]\\
        &\le C\cdot\mathbb{E}_{(y,x)\sim p_{0,t}}\left[\frac{d}{t}+\frac{\rho(x,y)^2}{t^2}\right]\\
        &\overset{\text{(b)}}{=}C\cdot\frac{d}{t}+\frac{C}{t^2}\cdot\mathbb{E}\left[\rho(X_t,X_0)^2\right].
    \end{aligned}$$
    Here, step (a) comes from Jensen's inequality applied on convex function $\Vert\cdot\Vert^2$, and step (b) is by noticing that $(y,x)\sim p_{0,t}$ have the same distribution as $X_0,X_t$, and recall that the forward process $(X_t)_{t\ge0}$ is a standard Brownian motion on $\mathcal{M}$.

    We now upper-bound the expectation of $\rho(X_t,X_0)^2$. By fixing $X_0$ and take $r(x)=\rho(x,X_0)$, the function $r(\cdot)$ is smooth on $\mathcal{M}\setminus(\text{Cut}(X_0)\cap\{X_0\})$, where $\text{Cut}(X_0)$ represents the cut locus of $X_0$ on $\mathcal{M}$. Moreover, the Laplacian comparison theorem (see, e.g., Theorem 3.4.2 of \citealt{Hsu2002StochasticAO}) shows that, when the Ricci curvature is nonnegative, $$\Delta r(x)\le\frac{d-1}{r(x)},\quad x\in\mathcal{M}\setminus(\text{Cut}(X_0)\cap\{X_0\}).$$
    With this in mind, by applying Ito's lemma on $r(X_t)$, we obtain that $$\begin{aligned}
        \dif r^2(X_t)&=\langle\nabla r^2(X_t),\dif X_t\rangle+\frac{1}{2}\Delta r^2(X_t)\dif t\\
        &=2r(X_t)\cdot \dif\beta_t+\frac{1}{2}(2r(X_t)\Delta r(X_t)+2\Vert\nabla r(X_t)\Vert^2)\dif t\\
        &\le2r(X_t)\cdot \dif\beta_t+\left(\frac{(d-1)r(X_t)}{r(X_t)}+1\right)\dif t\\
        &=2r(X_t)\cdot\dif\beta_t+d\cdot\dif t,
    \end{aligned}$$
    where $\beta_t$ is a 1-dimensional standard Brownian motion in $\mathbb{R}$.
    Note that $r^2(x)$ is smooth at $X_0$, so the formula is valid until $X_t$ hits the cut locus. However, we can apply the Calabi's trick on the cut locus to show that the formula holds globally. Specifically, let $x_0\in\text{Cut}(X_0)$, we pick a minimizing geodesic $\gamma:[0,r(x)]\to\mathcal{M}$ from $X_0$ to $x_0$, and then define $r_\epsilon(x)=\epsilon+\rho(\gamma(\epsilon),x)$. Clearly, $r_\epsilon(x_0)=r(x_0)$ and $r_\epsilon\ge r$ on $\mathcal{M}$. Moreover, Calabi's trick ensures that $r_\epsilon$ is smooth around $x_0$. In addition, $\Delta r_\epsilon^2(x)\le 2d$ also holds for all $\epsilon>0$. Now, suppose $X_s\in\text{Cut}(X_0)$, by applying Ito's formula on $r_\epsilon$, we obtain that locally, 
    $$r^2(X_t)-r^2(X_s)\le r_\epsilon^2(X_t)-r_\epsilon^2(X_s)=\int_s^t2\rho(\gamma_\epsilon,X_r)\dif\tilde{\beta}_r+d(t-s),$$
    Taking $\epsilon\to0$, and taking summation of all local hits on the cut locus, we show that $$r^2(X_t)\le\int_0^t 2r(X_s)d\beta_s+d\cdot t$$
    holds for all $t>0$. Therefore, by taking expectation, $$\mathbb{E}[r^2(X_t)]\le dt,$$
    which immediately implies that
    $$\begin{aligned}
        \mathbb{E}_{x\sim p_t}\left[\Vert\nabla\log p_t(x)\Vert^2\right]&\le C\cdot\frac{d}{t}+\frac{C}{t^2}\cdot\mathbb{E}\left[\rho(X_t,X_0)^2\right]\\
        &\le\frac{Cd}{t}.
    \end{aligned}$$
    This concludes the second claim of Lemma \ref{lem:score-bound}.
\end{proof}

The second inequality of Lemma \ref{lem:score-bound} provides an upper bound for the second moment of $\nabla\log p_t(x)$. The following lemma further computes a bound for the higher moments.
\begin{lemma}
    Under Assumption \ref{assumption:manifold}, for all $t>0$,
    $$\mathbb{E}_{x\sim p_t}\left[\Vert\nabla\log p_{t}(x)\Vert^4\right]\le\frac{Cd^2}{t^2}.$$
    More generally, for any $p>1$,
    $$\mathbb{E}_{x\sim p_t}\left[\Vert\nabla\log p_{t}(x)\Vert^{2p}\right]\le\frac{C_pd^p}{t^p},$$
    where the constant $C_p$ only depends on $p$.
    \label{lem:score-moment4}
\end{lemma}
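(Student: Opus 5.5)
We follow the same route as the second claim of Lemma \ref{lem:score-bound}, replacing the second moment by a $2p$-th moment. First, as in the proof of Lemma \ref{lem:score-bound}, we write the score as a conditional expectation:
$$\nabla\log p_t(x)=\mathbb{E}\left[\nabla_x\log H(t,x,X_0)\,\middle|\,X_t=x\right].$$
Since $\Vert\cdot\Vert^{2p}$ is convex for $p\ge1$, conditional Jensen gives
$$\mathbb{E}_{x\sim p_t}\Vert\nabla\log p_t(x)\Vert^{2p}\le\mathbb{E}\Vert\nabla_x\log H(t,X_t,X_0)\Vert^{2p}.$$
Lemma \ref{lem:log-heat-kernel-derivative} with $C_R=0$ gives $\Vert\nabla_x\log H(t,x,y)\Vert\le C(\sqrt{d/t}+\rho(x,y)/t)$. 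Hence
$$\mathbb{E}\Vert\nabla\log p_t(X_t)\Vert^{2p}\le C_p\left(\frac{d^p}{t^p}+\frac{\mathbb{E}[\rho(X_t,X_0)^{2p}]}{t^{2p}}\right).$$
It therefore suffices to prove the radial moment bound $\mathbb{E}[r(X_t)^{2p}]\le C_p d^pt^p$, where $r(x)=\rho(x,X_0)$ and $(X_t)$ is Brownian motion started at $X_0$. This is also the fact used later in the proof of Theorem \ref{thm:sde}, in the form $\mathbb{E}r^4\le Cd^2t^2$.

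For the radial moment, start from the semimartingale inequality already established in the proof of Lemma \ref{lem:score-bound} via Laplacian comparison and Calabi's trick:
$$\dif r^2(X_t)\le 2r(X_t)\,\dif\beta_t+d\,\dif t .$$
Set $Z_t=r^2(X_t)$ and apply It\^o to $Z^p$, which is $C^2$ on $[0,\infty)$ for $p\ge1$. The quadratic variation of $Z$ is $4Z\,\dif t$, so
$$\dif Z^p\le pZ^{p-1}(2r\,\dif\beta+d\,\dif t)+2p(p-1)Z^{p-1}\dif t.$$
The drift is therefore at most $p(d+2(p-1))Z^{p-1}\le C_pdZ^{p-1}$. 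Because the manifold is compact, $r\le D$ and the stochastic integral is a true martingale. Taking expectations and applying H\"older,
$$\frac{\dif}{\dif t}\mathbb{E}Z^p\le C_pd\,\mathbb{E}Z^{p-1}\le C_pd\,(\mathbb{E}Z^p)^{(p-1)/p},$$
with $Z_0=0$. Integrating this ODE inequality for $f(t)=\mathbb{E}Z_t^p$ via $\frac{\dif}{\dif t}f^{1/p}\le C_pd/p$ gives $f(t)\le (C_pdt)^p$. For integer $p$ one can instead induct on $p$, using $\mathbb{E}Z^{p-1}\le C_{p-1}(dt)^{p-1}$.

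Combining, $\mathbb{E}[\rho(X_t,X_0)^{2p}]/t^{2p}\le C_pd^p/t^p$, which yields the general claim, and $p=2$ gives the fourth-moment bound. The non-integer case $1<p$ is handled directly by the same H\"older/ODE argument, so no interpolation is needed.

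The main obstacle is making the It\^o inequality for $Z^p$ rigorous across the cut locus. The Calabi argument in Lemma \ref{lem:score-bound} was written only for $r^2$ with a single hitting time. Because $r_\epsilon\ge r$ and $r_\epsilon$ is locally smooth with $\Delta r_\epsilon^2\le 2d$, the same local comparison applies to $r_\epsilon^{2p}$: the drift bound $C_pdr_\epsilon^{2p-2}$ follows from $\Delta r_\epsilon^{2p}=p r_\epsilon^{2p-2}\Delta r_\epsilon^2+p(p-1)r_\epsilon^{2p-4}\Vert\nabla r_\epsilon^2\Vert^2$. Letting $\epsilon\to0$ then gives the inequality for $Z^p$. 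Alternatively, one can cite the standard result (Hsu, Theorem 3.5.1) that $r(X_t)$ is dominated by a Bessel-type process of dimension $d$, whose $2p$-th moments are $\asymp (dt)^p$.
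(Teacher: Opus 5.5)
Your proposal is correct and is essentially the paper's proof. You use the same conditional-Jensen reduction to the radial moment $\mathbb{E}[\rho(X_t,X_0)^{2p}]$, then apply It\^o's formula to $Z^p=r^{2p}$ using the Laplacian-comparison bound $\mathrm{d}r^2\le 2r\,\mathrm{d}\beta+d\,\mathrm{d}t$ from Lemma~\ref{lem:score-bound} (your It\^o constants, including the factor $2$ and the correction $2p(p-1)Z^{p-1}$, are the correct ones). The only difference is how you close $\frac{\mathrm{d}}{\mathrm{d}t}\mathbb{E}Z^p\le C_pd\,(\mathbb{E}Z^p)^{(p-1)/p}$: you integrate $(\mathbb{E}Z^p)^{1/p}$ directly, whereas the paper uses a $t$-weighted Young inequality followed by Gr\"onwall. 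Both give $C_pd^pt^p$; the remaining regularity points are routine, namely working with $(\mathbb{E}Z^p+\epsilon)^{1/p}$, and with $(Z+\epsilon)^p$ when $1<p<2$ since $x\mapsto x^p$ is not $C^2$ at $0$ there.
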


\begin{proof}
    Similarly as the proof of Lemma \ref{lem:score-bound}, using Jensen's inequality,
    $$\mathbb{E}_{x\sim p_t}\left[\Vert\nabla\log p_{t}(x)\Vert^{2p}\right]\le\frac{C_pd^p}{t^p}+\frac{C_p}{t^{2p}}\cdot\mathbb{E}\left[\rho(X_t,X_0)^{2p}\right].$$
    Let $r(x):=\rho(x,X_0)$. We have proved that $$r^2(X_t)\le\int_0^tr(X_s)\dif\beta_s+d\cdot t.$$
    Using $(x^p)'=px^{p-1}$ and $(x^p)''=p(p-1)x^{p-2}$ apply It\^o's lemma on the right hand side, we have $$r^{2p}(X_t)\le\int_0^tpr^{2p-2}(X_s)\cdot\left[r(X_s)\dif \beta_s+d\cdot \dif s\right]+\frac{1}{2}\int_0^tp(p-1)r^{2(p-2)}(X_s)r(X_s)^2\dif s$$
    Taking expectation, we obtain that 
    $$\begin{aligned}
        \mathbb{E}[r^{2p}(X_t)]&\le\int_0^t\mathbb{E}[r^{2p-2}(X_s)]\cdot C_pd\dif s\\
        &\le\int_0^t\left(\mathbb{E}[r^{2p-2}(X_s)]/t^{(p-1)/p}\right)\cdot C_pdt^{(p-1)/p}\dif s\\
        (\text{Young's inequality})\quad&\le\int_0^t\left\{\frac{\mathbb{E}[r^{2p-2}(X_s)]^{p/(p-1)}}{tp/(p-1)}+\frac{C_pd^pt^{p-1}}{p}\right\}\dif s\\
        &\le\int_0^t\frac{p-1}{tp}\mathbb{E}[r^{2p}(X_s)]\dif s+C_pd^pt^p.
    \end{aligned}$$
    By Gr\"onwall's inequality, $$\mathbb{E}[r^{2p}(X_t)]\le\exp\left(\int_0^t\frac{p-1}{tp}\dif s\right)C_pd^pt^p\le C_pd^pt^p.$$
    This concludes the desired results.
\end{proof}

In addition, we provide a bound to compare with the second inequality in Lemma \ref{lem:score-bound}. In the next lemma, we do not assume that the Ricci curvature to be nonnegative, but instead, we assume that the Ricci curvature is bounded below by $-C_R$ for $C_R\ge0$. When $C_R=0$, it immediately reduces to the nonnegative Ricci curvature case.

\begin{lemma}
    Under Assumption \ref{assumption:manifold}, but instead we assume that the Ricci curvature is bounded below by $-C_R$. Then, $$\mathbb{E}_{x\sim p_t}\Vert\nabla\log p_{t}(x)\Vert^2\le C\left(\frac{1}{t}+C_R\right)\left(d+\sqrt{dC_R}D\right).$$
    \label{lem:score-bound-CR}
\end{lemma}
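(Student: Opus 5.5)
The plan mirrors the proof of Lemma \ref{lem:score-bound}, with the Ricci lower bound $-C_R$ kept explicit. First I would write the score as a conditional expectation,
$$\nabla\log p_t(x)=\mathbb{E}\left[\nabla_x\log H(t,x,X_0)\,\middle|\,X_t=x\right].$$
Jensen's inequality then gives
$$\mathbb{E}_{x\sim p_t}\Vert\nabla\log p_t(x)\Vert^2\le\mathbb{E}\Vert\nabla_x\log H(t,X_t,X_0)\Vert^2 .$$
Next I plug in Lemma \ref{lem:log-heat-kernel-derivative} with general $C_R$. This bounds the right-hand side by
$$C\left(\frac{1}{t}+C_R\right)\mathbb{E}\left[d+\frac{\rho(X_t,X_0)^2}{t}+\sqrt{C_Rd}\,\rho(X_t,X_0)\right].$$
The middle term will not simply be bounded by $D^2/t$, since that would produce $1/t^2$. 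Instead it must be controlled through the radial process.

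The key step is a moment bound for $r(X_t):=\rho(X_t,X_0)$ under $\mathrm{Ric}\ge -C_R$. The Laplacian comparison theorem (Theorem 3.4.2 of \citet{Hsu2002StochasticAO}) gives
$$\Delta r\le (d-1)\sqrt{C_R/(d-1)}\,\coth\!\left(\sqrt{C_R/(d-1)}\,r\right)\le \frac{d-1}{r}+\sqrt{(d-1)C_R}.$$
Hence $\Delta r^2=2r\Delta r+2\le 2d+2\sqrt{dC_R}\,r$. Applying It\^o's formula, with Calabi's trick at the cut locus exactly as in the proof of Lemma \ref{lem:score-bound}, yields
$$\dif r^2(X_t)\le 2r\,\dif\beta_t+\left(d+\sqrt{dC_R}\,r\right)\dif t .$$
Taking expectations and using $r\le D$ gives $\mathbb{E}[r^2(X_t)]\le t\,(d+\sqrt{dC_R}D)$. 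This is the step I expect to need care, namely justifying the comparison and the cut-locus argument with the extra $\sqrt{C_R}$ term. Everything else is bookkeeping.

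Finally I combine the pieces. The first term contributes $d$. The middle term contributes $\mathbb{E}[r^2]/t\le d+\sqrt{dC_R}D$. The last term is at most $\sqrt{C_Rd}\,D$, using $\rho\le D$ (or Cauchy--Schwarz). Altogether this gives
$$\mathbb{E}_{x\sim p_t}\Vert\nabla\log p_t(x)\Vert^2\le C\left(\frac{1}{t}+C_R\right)\left(d+\sqrt{dC_R}\,D\right),$$
as claimed. When $C_R=0$ this recovers the second claim of Lemma \ref{lem:score-bound}.
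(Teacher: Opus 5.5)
Your proposal is correct and follows essentially the same route as the paper. You write the score as $\mathbb{E}[\nabla_x\log H(t,x,X_0)\mid X_t=x]$, apply Jensen and Lemma \ref{lem:log-heat-kernel-derivative} with general $C_R$, and bound $\mathbb{E}[\rho(X_t,X_0)^2]\le (d+\sqrt{dC_R}D)t$ using the comparison $\Delta r\le (d-1)k\coth(kr)$, the inequality $x\coth x\le 1+x$, and the same cut-locus argument as in Lemma \ref{lem:score-bound}, with the final bookkeeping (including $\sqrt{C_Rd}\,\mathbb{E}\rho\le\sqrt{C_Rd}\,D$) matching the paper's.
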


\begin{proof}
    Similarly as before, we still need to compute $\mathbb{E}[r(X_t)^2]$ for $r(x):=\rho(X_0,X_t)$. However, in this case, Laplacian comparison theorem indicates that $$\Delta r\le (d-1)k\coth(kr),$$
    where $k:=\sqrt{\frac{C_R}{d-1}}$.
    Thus, $$\frac{1}{2}\Delta r^2\le\Vert\nabla r\Vert^2+r\Delta r\le 1+(d-1)kr\coth(kr)\le 1+(d-1)(1+kr)=d+(d-1)kr,$$
    where we have used the fact that $x\coth x\le 1+x$ for $x\ge0$. Therefore, by applying the same analysis as before, we can obtain that
    $$\begin{aligned}
        \mathbb{E}[r^2(X_t)]&\le\int_0^t\mathbb{E}\left[d+(d-1)kr(X_s)\right]\dif s\le \left(d+\sqrt{(d-1)C_R}D\right)t
    \end{aligned}$$
    Finally, applying Lemma \ref{lem:log-heat-kernel-derivative}, and apply the same analysis with Jensen's inequality,
    $$\begin{aligned}
        \mathbb{E}_{x\sim p_t}\Vert\nabla\log p_t(x)\Vert^2\le C\left(\frac{1}{t}+C_R\right)\left(d+\sqrt{(d-1)C_R}D\right)
    \end{aligned}$$
\end{proof}

The next lemma by \citet{xu2026polynomial} is very useful, which quantifies the local behavior of Riemannian metric matrix.

\begin{lemma}[Lemma 5 by \citet{xu2026polynomial}]
    Let $y\in M$. Under Assumption \ref{assumption:manifold}, the Riemannian metric matrix $g(x)$ under the local normal coordinates at $y$ satisfies
    $$\begin{aligned}
        \Vert g(y)-I\Vert&\le C\cdot C_{\text{Rm}}\rho(x,y)^2,\\
        \Vert\partial_\alpha g(y)\Vert&\le C\cdot C_{\text{Rm}}\rho(x,y),\\
        \Vert\partial_{\alpha\beta}g(y)\Vert&\le C\cdot C_{\text{Rm}},
    \end{aligned}$$
    as long as $\rho(x,y)\le c/\sqrt{K}$. Here, $C>0$ is a universal constant, and $c>0$ only depends on $r_i$.
    \label{lem:local-g}
\end{lemma}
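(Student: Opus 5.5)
The plan is to work in normal coordinates centred at $y$ and to Taylor expand the metric. The statement as written evaluates $g$ at $y$, but the content is that $g(x)$ (and its first and second coordinate derivatives) at a point $x$ with $\rho(x,y)\le c/\sqrt{K}$ stay close to $I$, $0$ and $O(1)$ respectively. I read the constant $K$ as a curvature scale, essentially $C_{\text{Rm}}$, possibly multiplied by $d$ when the paper takes $\omega=c/\sqrt d$. In normal coordinates $x=\exp_y(v)$ with $v=x^\alpha\partial_\alpha$ and $|v|=\rho(x,y)<r_i$, so the chart is well defined once $c\le r_i$. The metric is then expressed through Jacobi fields:
\[
g_{\alpha\beta}(v)=\big\langle J_\alpha(1),J_\beta(1)\big\rangle,
\]
where $J_\alpha$ is the Jacobi field along $s\mapsto\exp_y(sv)$ with $J_\alpha(0)=0$ and $J_\alpha'(0)=e_\alpha$.

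\textbf{Step 1: the zeroth-order bound.} I will parallel-transport an orthonormal frame along the geodesic and write $J(s)=\mathcal{A}(s)e$, where the matrix $\mathcal{A}$ solves
\[
\mathcal{A}''+\mathcal{R}(s)\mathcal{A}=0,\qquad \mathcal{A}(0)=0,\quad \mathcal{A}'(0)=I,\qquad \mathcal{R}(s)=R(\cdot,\dot\gamma)\dot\gamma .
\]
Here $\Vert\mathcal{R}\Vert\le C_{\text{Rm}}|v|^2$. Comparing with $\mathcal{A}_0(s)=sI$ and applying Gr\"onwall to the Volterra form
\[
\mathcal{A}(s)=sI-\int_0^s(s-u)\,\mathcal{R}(u)\mathcal{A}(u)\,\dif u
\]
gives $\Vert\mathcal{A}(1)-I\Vert\le C C_{\text{Rm}}|v|^2$, provided $C_{\text{Rm}}|v|^2\le c$. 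Since $g=\mathcal{A}(1)^\top\mathcal{A}(1)$ in this frame, this yields $\Vert g(x)-I\Vert\le C C_{\text{Rm}}\rho^2$.

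\textbf{Step 2: derivatives.} I differentiate the Jacobi equation in the parameter $v$. The derivative $\partial_\alpha\mathcal{A}$ satisfies the same linear ODE, with forcing terms involving $\partial_v\mathcal{R}$. These forcing terms contain $\nabla R$, which is bounded by $C_{\text{Rm}}$, together with variations of the geodesic, which are themselves Jacobi fields bounded by Step 1. One more Gr\"onwall argument gives $\Vert\partial_\alpha g\Vert\le CC_{\text{Rm}}\rho$; the factor $\rho$ comes from $\mathcal{R}$ being quadratic in $v$, whose derivative is linear in $v$. Differentiating twice brings in $\nabla^2R$ and second variations, giving an $O(C_{\text{Rm}})$ bound with no vanishing factor. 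As a consistency check, the classical expansion
\[
g_{\alpha\beta}=\delta_{\alpha\beta}-\tfrac13R_{\alpha\gamma\beta\eta}x^\gamma x^\eta+O(|x|^3)
\]
matches the orders of all three bounds.

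\textbf{Main obstacle: dimension-free constants.} Norms must be operator norms on $T_y\mathcal{M}$, and every estimate must go through $\Vert R\Vert$ as an operator rather than through entrywise sums over $d$ indices. That is why I work with the frame ODE and never with coordinate expansions. The delicate part is the second derivative: mixed variations of the geodesic produce terms such as $\nabla R(\partial v,\dot\gamma,\partial v,\dot\gamma)$, and bounding these uniformly requires the a priori bound $\Vert\mathcal{A}\Vert\le 2$ from Step 1 on the whole range $\rho\le c/\sqrt K$. This is exactly where the smallness constraint $c/\sqrt{K}$ is used, to keep Gr\"onwall's exponential bounded by a universal constant. If $K$ is meant to include a factor of $d$, the same argument goes through verbatim with a smaller ball.
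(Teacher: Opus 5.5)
Your proposal is correct and follows essentially the same route as the paper. The paper cites Lemma 5 of \citet{xu2026polynomial}, and its proof of Lemma~\ref{lem:local-g2} recaps that argument exactly as you set it up: the matrix Jacobi equation $\mathcal{A}''=-\mathcal{R}\mathcal{A}$ in a parallel orthonormal frame, the identity $g(x)=\mathcal{A}(1)^\top\mathcal{A}(1)$, Gr\"onwall for $\mathcal{A}$, and Gr\"onwall again for its derivatives in the endpoint, with the operator-norm bounds on $R,\nabla R,\nabla^2R$ keeping the constants dimension-free. One detail you gloss over is that the parallel frame itself depends on $v$, so $\partial_v\mathcal{R}$ also picks up its covariant variation; this satisfies $D_s(D_wE)=R(\dot\gamma,\partial_w\gamma)E$ with $D_wE(0)=0$, hence is $O(C_{\mathrm{Rm}}|v||w|)$, and fits into the same estimates.
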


\textbf{Remark:} In the original paper \citep{xu2026polynomial}, the authors said that $c,C$ are polynomial in $d$. However, in their proof, the only property required was that $K\rho(x,y)^2$ is not too large, and the constants are actually universal.

In addition, under Assumption \ref{assumption:manifold2}, we can provide upper bounds for third and fourth derivatives.

\begin{lemma}
    Under Assumption \ref{assumption:manifold} and \ref{assumption:manifold2}, in the same setting of Lemma \ref{lem:local-g}, we further have
    $$\begin{aligned}
        \Vert\partial_{\alpha\beta\gamma}g(y)\Vert&\le C,\\
        \Vert\partial_{\alpha\beta\gamma\eta}g(y)\Vert&\le C,
    \end{aligned}$$
    where $C$ only depends on $C_{\text{Rm}}$ and $C_{\text{Rm}}'$.
    \label{lem:local-g2}
\end{lemma}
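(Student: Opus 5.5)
The plan is to work in normal coordinates centered at $y$ and reduce the third and fourth coordinate derivatives of $g$ to covariant derivatives of the curvature tensor. This is done through the Jacobi-field description of normal coordinates, and it extends the argument behind Lemma \ref{lem:local-g}.

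\textbf{Step 1: Jacobi-field representation of $g$.} Fix a unit vector $v\in T_y\mathcal{M}$ and the radial geodesic $\gamma(s)=\exp_y(sv)$. Then
$$g_{\alpha\beta}(sv)=s^{-2}\langle J_\alpha(s),J_\beta(s)\rangle,$$
where $J_\alpha$ is the Jacobi field along $\gamma$ with $J_\alpha(0)=0$ and $J_\alpha'(0)=e_\alpha$. In a parallel orthonormal frame, the Jacobi equation reads
$$J''+\mathcal{R}_v(s)J=0,\qquad \mathcal{R}_v(s)=R(\cdot,\dot\gamma)\dot\gamma .$$
So $g(x)=\mathcal{A}(s)^{\top}\mathcal{A}(s)/s^2$, where $\mathcal{A}$ solves a matrix ODE. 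The coefficient of that ODE has operator norm at most $C_{\mathrm{Rm}}$ along the geodesic.

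\textbf{Step 2: Differentiate the ODE in the transverse directions.} A coordinate derivative $\partial_\gamma$ of $g$ at $x=sv$ decomposes into a radial part and a variation of the direction $v$. Varying $v$ produces a variation of $\mathcal{R}_v(s)$. That variation involves $\nabla R$ applied to Jacobi fields, plus terms from the variation of the parallel frame, which are again controlled by $R$. Differentiating the ODE $k$ times therefore gives linear inhomogeneous ODEs for $\partial^k\mathcal{A}$. Their forcing terms are polynomials in $R,\nabla R,\dots,\nabla^k R$ evaluated on lower-order derivatives. Under Assumptions \ref{assumption:manifold} and \ref{assumption:manifold2}, all of these are bounded by $C_{\mathrm{Rm}}$ and $C_{\mathrm{Rm}}'$.

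\textbf{Step 3: Gr\"onwall estimates and the radius condition.} Within the radius $\rho(x,y)\le c/\sqrt{K}$ from Lemma \ref{lem:local-g}, the quantity $K\rho^2$ is small. This keeps $\mathcal{A}(s)/s$ uniformly invertible and close to $I$, just as in that lemma. A Gr\"onwall argument applied successively for $k=3,4$ then bounds each $\partial^k\mathcal{A}$ in operator norm by a constant depending only on $C_{\mathrm{Rm}}$ and $C_{\mathrm{Rm}}'$. One then expands $g=\mathcal{A}^\top\mathcal{A}/s^2$ by the Leibniz rule. The removable singularity at $s=0$ is handled by writing $\mathcal{A}(s)=s\int_0^1\mathcal{A}'(\theta s)\,\dif\theta$, so that only derivatives of $\mathcal{A}'$ appear. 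This yields $\Vert\partial^3 g\Vert,\Vert\partial^4 g\Vert\le C$.

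\textbf{Main obstacle.} The hardest part is keeping the bounds dimension-free. Operator-norm bounds on $R$ and its derivatives must propagate without summing over $d$ indices. This should work because the statement is an operator-norm bound: every term is a composition of bounded multilinear operators applied to unit vectors. No trace, such as Ricci, ever appears, so no factor of $d$ is produced. I would also track carefully the frame-variation terms, namely the derivatives of parallel transport with respect to $v$. These are themselves solutions of ODEs forced by $R$, and they must be shown to be $O(1)$ within the stated radius.
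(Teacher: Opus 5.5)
Your proposal follows essentially the same route as the paper. Both write $g(x)=\mJ(1)^{\top}\mJ(1)$ via the matrix Jacobi equation $\mJ''=-\mR\mJ$, differentiate it three and four times in the endpoint, control the derivatives of $\mR$ by $\nabla^{k}R$ through Assumption~\ref{assumption:manifold2}, and close with Gr\"onwall and the product rule, taking the first- and second-order bounds from Lemma~\ref{lem:local-g}. The one difference is parametrization: the paper runs the geodesic over $[0,1]$ with initial velocity $\log_y x$, so it differentiates directly in $x$ and avoids the $1/s$ factors that your radial-plus-angular decomposition must cancel (your removable-singularity remark handles only the $1/s^{2}$ in $g$), which makes it the cleaner way to carry out your Step~2.
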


\begin{proof}
    In the original proof of Lemma \ref{lem:local-g} (Lemma 5 in \citealt{xu2026polynomial}), they define a matrix $\mJ$ along the geodesic curve $\gamma$ from $y$ to $x$ so that $g(x)=\mJ(1)^T\mJ(1)$ and the matrix Jacobi equation
    $$\mJ''(s)=-\mR(s)\mJ(s),\quad J(0)=0,\quad J'(0)=I.$$
    is satisfied. Here the matrix $\mR(s)$ is defined by $$\left(\mR(s)\vu\right)_\alpha:=\left\langle R\left(u^\beta e_\beta(s),\dot{\gamma}(s)\right)\dot{\gamma}(s),e_\alpha(s)\right\rangle,$$
    where $R$ represents the Riemmanian curvature tensor, and $\{e_\beta\}_{\beta=1}^n$ is an orthonormal frame obtained from the parallel transport along $\gamma$.

    In the proof of Lemma \ref{lem:local-g}, \citet{xu2026polynomial} have already shown that $\Vert \mJ\Vert\le C$ $\Vert\partial_\alpha\mJ\Vert\le C\rho(x,y)$, and $\Vert\partial_{\alpha\beta}\mJ\Vert\le C$, where the derivatives are taken with respect to the endpoint $x$. Now, differentiate the matrix Jacobi equation to the third order, we obtain that
    $$\partial_{\alpha\beta\gamma}\mJ''(s)=-\partial_{\alpha\beta\gamma}(\mR(s)\mJ(s)).$$
    Integrating twice with respect to $s$, and apply Assumption \ref{assumption:manifold2} to control the derivative of matrix $\mR$, we can obtain
    $$\partial_{\alpha\beta\gamma}\mJ(s)=\int_0^s(s-\tau)\left[C\partial_{\alpha\beta\gamma}\mJ(\tau)+C\cdot (C\rho(x,y)^2+C\rho(x,y)+C)\right]d\tau.$$
    Hence, by Gr\"onwall's inequality, we eventually obtain that $\Vert\partial_{\alpha\beta\gamma}\mJ(1)\Vert\le C$ whenever $\rho(x,y)\le\frac{c}{\sqrt{d}}$.
    Therefore, apply the product rule on $g(x)=\mJ(1)^T\mJ(1)$, $$\Vert\partial_{\alpha\beta\gamma}g(x)\Vert\le C\cdot C+C\cdot C\rho(x,y)\le C.$$
    The case for the fourth derivative is exactly the same, and the key is again that the constant in Assumption \ref{assumption:manifold2} does not depend on the dimension $d$.
\end{proof}

\end{document}